\theoremstyle{plain}
\newtheorem{theorem}{Theorem}[section]
\newtheorem{proposition}[theorem]{Proposition}
\newtheorem{lemma}[theorem]{Lemma}
\newtheorem{corollary}[theorem]{Corollary}
\theoremstyle{definition}
\theoremstyle{remark}
\newtheorem{remark}[theorem]{Remark}
\newcommand{\RN}[1]{%
  \textup{\uppercase\expandafter{\romannumeral#1}}%
}
\newcommand{\mc}{\mathcal}
\newcommand{\wt}{\widetilde}
\newcommand{\mr}{\mathrm}
\newcommand{\corrsymbol}{\textasteriskcentered}
\newcommand{\equalsymbol}{\textdagger}
\title{Semi-parametric Functional Classification via Path Signatures Logistic Regression with Adaptive Order Selection}
\author[1,\equalsymbol,\corrsymbol]{Pengcheng Zeng}
\author[1,\equalsymbol]{Siyuan Jiang}
\affil[1]{Institute of Mathematical Sciences, ShanghaiTech University, Shanghai, China}
\begin{document}
\maketitle

\footnotetext[1]{\equalsymbol\ These authors contributed equally.}
\footnotetext[2]{\corrsymbol\ Corresponding author: \texttt{zengpch@shanghaitech.edu.cn}}

\bigskip
\begin{abstract}
We propose \emph{Path Signatures Logistic Regression} (PSLR), a semi-parametric framework for classifying vector-valued functional data with scalar covariates. Classical functional logistic regression models rely on linear assumptions and fixed basis expansions, which limit flexibility and degrade performance under irregular sampling. PSLR leverages the well-established properties of path signatures—basis-free representation, cross-channel dependency capture, and robustness to sampling irregularity—as an enabling tool. The key novelty, however, lies in two distinctive contributions: (i)~a semi-parametric additive structure that preserves interpretable linear effects for scalar covariates, and (ii)~a fully data-driven procedure for adaptively selecting the signature truncation order via a penalized empirical risk criterion. This selection mechanism is supported by rigorous non-asymptotic guarantees, including the existence of an optimal truncation order, its consistent estimation from finite samples, convergence rates for the classifier risk, a finite computable search bound, and an error propagation framework that formally quantifies PSLR's robustness under irregular sampling. Experiments on synthetic and real-world datasets demonstrate that PSLR with adaptive order selection consistently outperforms traditional functional classifiers and fixed-order signature baselines in accuracy, robustness, and interpretability. Our results highlight the practical and theoretical value of integrating rough path theory with adaptive model complexity control.
\end{abstract}

\noindent%
{\it Keywords}: Functional Data Analysis, Functional Classification, Path Signatures, Semi-parametric Model, Model Selection

\section{Introduction}
Recent advances in sensing technologies have led to an explosion of multi-dimensional functional data, where each observation is a vector-valued function evolving over a continuous domain such as time, space, or frequency. Unlike univariate functional data, these high-dimensional trajectories capture rich interdependencies across dimensions, offering both opportunities and challenges for statistical learning \citep{Rams2005, Horvath2012}. Functional Data Analysis (FDA) provides a rigorous framework for modeling such data in infinite-dimensional spaces \citep{Ferr2006, Sir2016}. However, vector-valued functional data often exhibit complex structures—such as irregular sampling and inter-channel correlations—that demand specialized methodologies beyond conventional FDA tools \citep{Koner2023review}.

A paradigmatic example arises in the analysis of gait dynamics in Parkinson’s disease (PD), where vertical ground reaction forces (VGRFs) are recorded during walking. These signals, collected via force sensors under each foot, form multi-dimensional functional data that encapsulate critical motor control characteristics. However, such data present formidable modeling challenges due to stride-to-stride variability and inter-sensor correlations. Moreover, scalar covariates—such as age, walking speed, and clinical scores—are often available and play an essential role in predictive tasks, including disease classification and progression modeling. These complexities necessitate a modeling framework that can simultaneously incorporate functional and scalar predictors while being robust to noise and irregular sampling.

A standard statistical approach for handling such data is the functional logistic regression model \citep{Reiss2017sofr, Gerth2024review}. Given a binary outcome $y_i \in \{0,1\}$, a $d$-dimensional functional predictor $\bm{X}_i(t) = (X_i^1(t), \ldots, X_i^d(t))^\top$ defined on $[0, T]$, and a $q$-dimensional vector of scalar covariates $\bm{z}_i \in \mathbb{R}^q$, the model assumes
\begin{equation}\label{sig0}
\mathbb{P}(y_i = 1 \mid \bm{X}_i, \bm{z}_i) = \sigma\left( \alpha + \sum_{j=1}^{d} \int_0^T X_i^j(t) \beta_j(t) \, dt + \bm{z}_i^\top \bm{\gamma} \right),
\end{equation}
where $\sigma(u) = (1 + e^{-u})^{-1}$ denotes the logistic link function, 
the functional predictors $X_i^j(t)$ and coefficients $\beta_j(t)$ are 
expanded in a common basis $\{\phi_k(t)\}_{k=1}^K$ as:
\[
X_i^j(t) = \sum_{k=1}^K b^j_{ik} \phi_k(t), \quad 
\beta_j(t) = \sum_{k=1}^K c_{jk} \phi_k(t),
\]
and $\bm{\gamma}$ represents the vector of scalar coefficients.
Model~\eqref{sig0} extends the classical scalar-on-function regression framework~\citep{Ramsay1991tools}, but suffers from several important limitations. First, it imposes a \emph{linearity assumption} between each functional predictor $X_i^j(t)$ and the log-odds of the response $y_i$, which may lead to model misspecification when the true relationship is nonlinear. While generalized additive and index models~\citep{Muller2005GFLM, Mathew2014FGAM, Fan2015FAR} relax this assumption, they often inherit the next two issues. Second, the model exhibits \emph{basis selection sensitivity}: its reliance on basis expansions makes it vulnerable to mismatched basis choices (e.g., Fourier for non-periodic signals or poorly placed B-spline knots), and can obscure interpretability of coefficient functions $\beta_j(t)$~\citep{Rice1991fpca}. Third, it is \emph{fragile to irregular sampling}, a common feature in functional data. Basis projections assume complete, uniformly sampled trajectories, and their violation leads to biased coefficient estimates (\emph{representation bias}) and potential over-smoothing when interpolation is applied (\emph{imputation dependence}). Finally, Model~\eqref{sig0} \emph{neglects cross-component correlations} by modeling each functional input additively and independently. Although methods such as multivariate FPCA~\citep{Jeng2014MFPCA} aim to address this, they rely on joint decomposition and assume perfect temporal alignment, which may be impractical in real-world settings.

To overcome these challenges, we propose a semi-parametric model that replaces the linear term in model~\eqref{sig0} with a nonlinear transformation of the functional predictor. Specifically, we assume the existence of a smooth function $F$ such that
\begin{equation}\label{sig1}
\mathbb{P}(y_i = 1 \mid \bm{X}_i, \bm{z}_i) = \sigma\left(F(\bm{X}_i) + \bm{z}_i^\top \bm{\gamma} \right).
\end{equation}
To model $F(\bm{X}_i)$, we treat the time-augmented signal $\widetilde{\bm{X}}_i = (\bm{X}_i, t)$ as a continuous path and apply the theory of \emph{path signatures} \citep{chen1957integration, lyons1998differential, friz2010multidimensional}. The path signature is a sequence of algebraic features capturing the geometry of the path. Truncating at order $p$, we approximate $F(\widetilde{\bm{X}}_i)$ as
\begin{equation}\label{sig1_2}
F(\widetilde{\bm{X}}_i) \approx S_p(\widetilde{\bm{X}}_i)^\top \bm{\beta}_p,
\end{equation}
where $S_p(\cdot)$ denotes the truncated path signature and $\bm{\beta}_p$ is the associated coefficient vector. This approach—termed \emph{Path Signatures Logistic Regression (PSLR)}—offers several advantages: it eliminates the need for basis selection, inherently captures inter-channel dependencies, and demonstrates robustness to irregular sampling.

The truncation order $p$ in PSLR governs model complexity and plays a critical role in balancing approximation accuracy with computational tractability. While signature transforms theoretically require an infinite expansion to fully characterize functional trajectories, practical implementations necessitate finite truncation. The choice of $p$ thus introduces a fundamental trade-off: small values may underfit complex temporal dynamics, whereas large values risk overfitting noise and inflating computational cost due to the exponential growth in feature dimension ($\mc{O}(d^p)$).

In this work, we address the following foundational questions, which remain largely unexplored in the existing literature on path signatures:

\begin{enumerate}[leftmargin=*, label=\textbf{(Q.\arabic*)}, ref=(Q.\arabic*)]
    \item Does there exist an optimal truncation order $p^*$ and corresponding coefficient vector $\bm{\beta}^*_{p^*}$ that both accurately approximate the functional component $F$ and minimize the population risk?
    \item If such a $p^*$ exists, can it be consistently estimated in a data-driven manner from finite samples?
    \item Can we prove non-asymptotic convergence rates for both the estimator and its corresponding model risk?
\end{enumerate}
To our knowledge, these questions have received limited attention, despite the increasing use of path signatures in machine learning and statistical modeling \citep{chevyrev2016primer,Ferma2021embe}. A related study by \citet{ferm2022linear} investigated truncation order selection in a linear signature regression setting, but without accounting for scalar covariates or semi-parametric structure.

A key innovation of our framework lies in its fully data-driven procedure for selecting the truncation order $p$. Unlike prior applications that heuristically fix $p \in \{2,3,4,5,8\}$ without theoretical support~\citep{Yang2015Chinese, Yang2016DeepWriterID, Lai2017Online, Liu2017PSLSTM, Arribas2018Signature}, we propose a penalized empirical risk criterion that adaptively selects $p^*$ based on sample complexity and model expressiveness. We acknowledge that the \emph{Deep Signature Transforms}~\citep{Bonnier2019deep} approach addresses truncation implicitly by learning a data-dependent augmentation, though the truncation level there remains a fixed hyperparameter rather than a directly selected model complexity parameter. Our approach ensures that the model complexity grows only as needed to capture the intrinsic structure of the data. Our theoretical analysis establishes the existence of an optimal $p^*$ and provides finite-sample guarantees for its consistent estimation—addressing a critical gap in the literature on signature-based functional modeling.

The main \textbf{contributions} of this paper are as follows: 
\begin{enumerate}[leftmargin=*, label=(\arabic*)]
\item \textbf{Adaptive Truncation Selection.} We propose a novel, data-driven criterion for selecting the signature truncation order \(p\) via penalized empirical risk minimization. This contrasts sharply with existing signature-based methods that fix \(p\) heuristically, and is the central methodological innovation of our work.
\item \textbf{Semi-parametric Modeling Framework.} We embed this adaptive selection within a unified semi-parametric logistic regression that jointly models multi-dimensional functional data and scalar covariates, eliminating the need for basis expansions or smoothing—while the signature representation itself is known, its integration with adaptive complexity control and scalar covariates is novel.
\item \textbf{Theoretical Guarantees.} We provide non-asymptotic guarantees tailored to our adaptive selection mechanism, establishing (i)~the existence of an optimal truncation order, (ii)~its consistent estimation from finite samples, and (iii)~convergence rates for the resulting classifier risk. Additionally, we derive (iv)~a finite, computable upper bound on the search range for $p$, ensuring the minimizer lies within a theoretically justified interval, and (v)~a general error propagation framework for irregular sampling, which demonstrates that PSLR's interpolation error vanishes with grid refinement while basis-expansion methods suffer from irreducible structural bias. These rigorous results are unavailable in prior signature-based learning methods, which typically rely on fixed-order heuristics without such theoretical control.
\item \textbf{Empirical Validation.} Extensive experiments on synthetic and real-world datasets confirm that the adaptive PSLR outperforms fixed-order signature baselines and classical functional classifiers, demonstrating the practical utility of the proposed selection mechanism.
\end{enumerate}

The remainder of this paper is structured as follows. 
Section~\ref{sec:path} reviews the mathematical foundations of path signatures. 
Section~\ref{sec:meth} introduces the PSLR framework: model formulation, theoretical guarantees, finite search bound, robustness analysis, and implementation.
Section~\ref{sec:expe} reports empirical results, including both simulation studies and real-world applications. 
Section~\ref{sec:disc} concludes with a discussion on signature order selection, the method’s advantages, and directions for future research.

\section{A Brief Overview of Path Signatures}
\label{sec:path}

Path signatures provide a powerful and mathematically rigorous representation for modeling vector-valued functional data. Rooted in Chen’s seminal work on iterated integrals \citep{chen1957integration}, and further developed through rough path theory \citep{lyons1998differential, friz2010multidimensional}, the signature of a path captures essential geometric and temporal features of time-indexed trajectories. 

Let $\bm{X}\colon [0,T] \to \mathbb{R}^d$ denote a path of bounded variation, defined by the total variation norm:
\[
\|\bm{X}\|_{\mathrm{TV}} = \sup_{\mathcal{P}} \sum_{i=0}^{n-1} \|\bm{X}_{t_{i+1}} - \bm{X}_{t_i}\| < \infty,
\]
where the supremum is taken over all partitions $\mathcal{P} = \{0 = t_0 < t_1 < \cdots < t_n = T\}$ of $[0,T]$, and $\|\cdot\|$ denotes the Euclidean norm. We denote by $BV(\mathbb{R}^d)$ the space of $\mathbb{R}^d$-valued paths of bounded variation. This regularity condition guarantees the existence of iterated integrals, which constitute the core of the signature transform.

\paragraph{Definition.} For a multi-index $I = (i_1, \dots, i_k) \in \{1, \dots, d\}^k$, the $k$-th order signature term is given by:
\[
S^I(\bm{X}) = \int_{0 < t_1 < \cdots < t_k < T} dX_{t_1}^{i_1} \cdots dX_{t_k}^{i_k}.
\]
The full (infinite) signature of $\bm{X}$ is the sequence:
\[
S(\bm{X}) = \left(1, S^{(i)}(\bm{X}), S^{(i,j)}(\bm{X}), S^{(i,j,k)}(\bm{X}), \dots \right)_{i,j,k,\dots \in \{1,\dots,d\}}.
\]
We define the \emph{truncated signature} of order $p$ as the vector:
\[
S_p(\bm{X}) = \left(S^I(\bm{X}) \colon |I| \leq p \right),
\]
which contains all terms of order up to $p$. The dimension of the truncated signature is:
\[
s_d(p) = \sum_{k=0}^{p} d^k = \frac{d^{p+1} - 1}{d - 1} \quad \text{for } d \geq 2, \qquad s_1(p) = p+1.
\]
Hence, $S_p(\bm{X}) \in \mathbb{R}^{s_d(p)}$ grows exponentially with $p$ and polynomially with $d$. For instance, when $d = 3$ and $p = 4$, we obtain $s_3(4) = 121$ features.

\paragraph{Key Properties.} The signature transform possesses several desirable properties for learning from multi-dimensional functional data:

\begin{itemize}[leftmargin=*, itemsep=4pt]

\item \textbf{Geometric Interpretability.} Signature terms generalize classical moment-based features: (i) The first-order term $S^{(i)}(\bm{X}) = X_T^i - X_0^i$ corresponds to the net displacement along coordinate $i$. (ii) The second-order term $S^{(i,j)}(\bm{X}) = \int_{0<u<v<T} dX_u^i dX_v^j$ captures pairwise curvature, and the antisymmetric part
\[
A^{(i,j)} = S^{(i,j)}(\bm{X}) - S^{(j,i)}(\bm{X})
\]
approximates the signed area enclosed between the $i$-th and $j$-th coordinates.
(iii) Higher-order terms capture intricate interactions and directional geometry of the path \citep{chevyrev2016primer}.

\item \textbf{Uniqueness.} If $\bm{X}$ has one strictly monotonic coordinate, then the full signature $S(\bm{X})$ determines the path uniquely up to translation and reparametrization in time \citep{hambly2010uniqueness}. This property enables faithful path representation in statistical modeling.

\item \textbf{Linearity under Concatenation.} Let $\bm{X}_1$ and $\bm{X}_2$ be two paths concatenated in time. Then, the signature satisfies Chen’s identity:
\begin{equation}\label{eq:concat}
S(\bm{X}_1 \ast \bm{X}_2) = S(\bm{X}_1) \otimes S(\bm{X}_2),
\end{equation}
where $\otimes$ denotes the tensor (shuffle) product. This recursive structure facilitates efficient signature computation.

\item \textbf{Universality.} Let $F\colon \mathcal{X} \to \mathbb{R}$ be a continuous function defined on a compact subset $\mathcal{X} \subset BV(\mathbb{R}^{d-1})$. If each path $\bm{X}$ is time-augmented as $\widetilde{\bm{X}}(t) = (\bm{X}(t), t)$ and has fixed initial value, then for any $\varepsilon > 0$, there exists $p^* \in \mathbb{N}$ and a coefficient vector $\bm{\beta}^*_{p^*} \in \mathbb{R}^{s_d(p^*)}$ such that:
\[
\left|F(\bm{X}) - \left\langle \bm{\beta}^*_{p^*}, S_{p^*}(\widetilde{\bm{X}}) \right\rangle \right| < \varepsilon \quad \text{for all } \bm{X} \in \mathcal{X},
\]
establishing a Stone--Weierstrass-type approximation theorem for path signatures \citep{Levin:2016, ferm2022linear}.
\end{itemize}

These theoretical properties underlie our proposed methodology and make the signature transform particularly well-suited to learning tasks involving multi-dimensional functional data. For rigorous mathematical treatment (including proofs) of path signatures, we refer the reader to \cite{lyons2007rough} and \cite{friz2010multidimensional}.

\section{The Methodology}
\label{sec:meth}

\subsection{The Model}

Let $\{(\bm{X}_i, \bm{z}_i, y_i)\}_{i=1}^{n}$ denote a collection of $n$ i.i.d. samples from a joint distribution over $(\bm{X}, \bm{z}, y)$, where $\bm{X} \colon [0,T] \to \mathbb{R}^{d-1}$ is a $(d{-}1)$-dimensional functional covariate, $\bm{z} \in \mathbb{R}^q$ is a vector of scalar covariates, and $y \in \{0,1\}$ is a binary response variable. We assume that the conditional log-odds function admits a semi-parametric additive structure of the form
\begin{equation}
\label{eq:logit1}
\mathrm{Logit} \big( \mathbb{P}(y=1 \mid \bm{X}, \bm{z}) \big) = F(\bm{X}) + \bm{z}^\top \bm{\gamma},
\end{equation}
where $F \colon C([0,T]; \mathbb{R}^{d-1}) \to \mathbb{R}$ is a continuous functional mapping and $\bm{\gamma} \in \mathbb{R}^q$ is a finite-dimensional parameter vector. This formulation is particularly well-suited to settings in which the scalar predictors exhibit linear effects while the functional component exerts a nonparametric, yet continuous, influence. Such assumptions are commonly justified in biomedical applications (e.g., gait analysis, EEG/ECG trajectories, longitudinal biomarkers), where small perturbations in $\bm{X}$ are expected to yield correspondingly small variations in outcome probabilities.

To construct a tractable model for $F(\bm{X})$, we assume $\bm{X} \in BV(\mathbb{R}^{d-1})$ with fixed initial value, and augment time to define a $d$-dimensional path $\widetilde{\bm{X}} = (\bm{X}, t)$. Leveraging the universality property of path signatures (see Section~\ref{sec:path}), we approximate $F(\bm{X})$ via a linear form:
\[
F(\bm{X}) \approx S_p(\widetilde{\bm{X}})^\top \bm{\beta}_p,
\]
where $S_p(\widetilde{\bm{X}}) \in \mathbb{R}^{s_d(p)}$ denotes the truncated signature of order $p$, and $\bm{\beta}_p \in \mathbb{R}^{s_d(p)}$ is a corresponding coefficient vector. Note that (i) The fixed initial value assumption is made to ensure identifiability and to facilitate the application of approximation theory \citep{ferm2022linear}; it is not a modeling restriction and entails no loss of generality; (ii) The signature transformation of $\wt{\bm{X}}$ not only captures temporal variation but also uniquely determines the path \citep{hambly2010uniqueness}, which further justifies the time-augmentation of $\bm{X}$. Defining the augmented design vector $\widetilde{\bm{S}}_p = (S_p(\widetilde{\bm{X}})^\top, \bm{z}^\top)^\top \in \mathbb{R}^{s_d(p)+q}$ and parameter vector $\bm{\theta}_p = (\bm{\beta}_p^\top, \bm{\gamma}^\top)^\top$, the model~\eqref{eq:logit1} reduces to a classical generalized linear model:
\begin{equation}
\label{eq:logit2}
\mathrm{Logit} \big( \mathbb{P}(y=1 \mid \bm{X}, \bm{z}) \big) = \widetilde{\bm{S}}_p^\top \bm{\theta}_p.
\end{equation}
We refer to this construction as the \emph{Path Signatures Logistic Regression} (PSLR). Notably, the model includes an intercept term by construction, since the zeroth-order signature component is always 1. When $p = 0$, PSLR reduces to a standard logistic regression on scalar covariates only.

The PSLR framework introduces two principal modeling components: the truncation order $p$, which controls both the model complexity and the approximation fidelity of the functional component; and the parameter vector $\bm{\theta}_p \in \mathbb{R}^{s_d(p)+q}$, which defines the linear decision boundary. Unlike conventional functional logistic regression approaches that rely on functional basis expansions with infinite-dimensional coefficients, PSLR builds upon the well-established properties of path signatures—basis-free representation, intrinsic cross-channel dependency capture, and geometric stability—as an enabling foundation, requiring minimal assumptions on $\bm{X}$ beyond bounded variation and continuity. Standard basis expansion methods are not only sensitive to basis and knot placement but also exhibit limited capacity to capture multivariate interactions, and are particularly fragile under irregular sampling (e.g., uneven grids or sparse observations). In contrast, the time-augmented signature $S_p(\widetilde{\bm{X}})$ provides a stable, global representation that depends on the overall geometry of the continuous path rather than the specific sampling locations. While the qualitative stability of signatures is well known, our framework provides a rigorous quantitative justification via the error propagation framework developed in Section~\ref{sec:irregular}, with empirical validation provided in Section~\ref{sec:expe}.

A critical challenge in signature-based modeling is the selection of truncation order $p$. This choice directly influences the model’s flexibility, dimensionality, and computational tractability. Yet, many existing applications of path signatures adopt heuristic or fixed $p$ values without theoretical or empirical justification~\citep{Yang2015Chinese, Yang2016DeepWriterID, Lai2017Online, Liu2017PSLSTM, Arribas2018Signature}. To remedy this gap, we first rigorously characterize the existence of a theoretically optimal truncation order $p^* \in \mathbb{N}$ and a corresponding parameter vector $\bm{\theta}_{p^*}^* \in \mathbb{R}^{s_d(p^*) + q}$ that jointly minimize the population risk of the approximated model~\eqref{eq:logit2}, while approaching the risk of the original semi-parametric model~\eqref{eq:logit1}. Based on this theoretical foundation, we later propose a well-founded, data-driven estimator for $p^*$ that balances model complexity and generalization error.

\subsection{Existence of Optimal Truncation Order}
\label{sec:existence}
We introduce the population risk
\[
\mathcal{R}(F, \bm{\gamma}) := \mathbb{E}_{(\bm{X},\bm{z},y)}\left[ \ell\left(y, F(\mathbf{X}) + \mathbf{z}^\top \bm{\gamma} \right) \right],
\]
where $\ell(y,\eta) = -y\eta + \log(1 + e^{\eta})$ denotes the logistic loss. The oracle risk is then defined as $\mathcal{R}^* := \min_{F,\bm{\gamma}} \mathcal{R}(F, \bm{\gamma})$, which is the minimal theoretical risk achievable by the original model in~\eqref{eq:logit1}. As we will show in the next theorem, $\mc{R}^*$ exists under relatively weak conditions. For any fixed truncation order $p$, the theoretical risk of model~\eqref{eq:logit2} is given by:
\begin{equation}\label{eq:risk}
\mathcal{R}_{p}(\bm{\theta}_p) = \mathbb{E}_{(\bm{X},\bm{z},y)}\left[-y\widetilde{\bm{S}}_p^\top \bm{\theta}_p + \log (1 + e^{\widetilde{\bm{S}}_p^\top \bm{\theta}_p})\right].
\end{equation}
We now establish the existence of both an optimal truncation order $p^* \in \mathbb{N}$ and corresponding coefficients $\bm{\theta}^*_{p^*} \in \mathbb{R}^{s_{d}(p^*)+q}$ such that the resulting risk $\mathcal{R}_{p^*}(\bm{\theta}_{p^*}^*)$ approximates $\mathcal{R}^*$ with arbitrary precision $\varepsilon > 0$.

\begin{theorem}[$\varepsilon$-Approximation Guarantee]\label{thm1}
Suppose the following conditions hold:
\begin{enumerate}[label=(A.\arabic*),leftmargin=*,nosep]
\item There exist constants $C_{F}, C_{\bm{\gamma}} > 0$ such that $\|F\|_{\infty} < C_{F}$ and $\|\bm{\gamma}\|_{1} \leq C_{\bm{\gamma}}$. 
\item There exist constants $C_{\bm{X}}, C_{\bm{z}} > 0$ such that $\|\bm{X}\|_{\mathrm{TV}} < C_{\bm{X}}$ and $\|\bm{z}\| < C_{\bm{z}}$ almost surely.
\end{enumerate}
Then, the original model in~\eqref{eq:logit1} admits a minimal theoretical risk $\mathcal{R}^*$. Moreover, for any $\varepsilon > 0$, there exists $(p^*, \bm{\theta}_{p^*}^*)$ such that:
\begin{equation}\label{eq:thm1}
\left|\mathcal{R}_{p^*}(\bm{\theta}_{p^*}^*) - \mathcal{R}^*\right| < \varepsilon.
\end{equation}
\end{theorem}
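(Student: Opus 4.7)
The plan is to reduce the statement to a two-step approximation argument built on tools already available in the excerpt: the universality property of path signatures (Section~\ref{sec:path}) and the fact that the logistic loss $\ell(u,y) := -yu + \log(1+e^u)$ is $1$-Lipschitz in $u$ (since $|\partial_u \ell(u,y)| = |\sigma(u) - y| \leq 1$). This Lipschitz bound is what will translate a pointwise signature approximation of $F$ into an $L^1$-type risk approximation.

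First, I would verify that $\mathcal{R}^*$ is well-defined as a finite infimum. Assumptions (A.1) and (A.2) together give $|F(\bm{X}) + \bm{z}^\top\bm{\gamma}| \leq C_F + C_{\bm{\gamma}} C_{\bm{z}}$ almost surely on the admissible class $\{(F,\bm{\gamma}) : \|F\|_\infty < C_F,\ \|\bm{\gamma}\|_1 \leq C_{\bm{\gamma}}\}$, so the integrand of $\mathcal{R}(F,\bm{\gamma})$ is non-negative and uniformly bounded; hence $\mathcal{R}^* := \inf_{(F,\bm{\gamma})} \mathcal{R}(F,\bm{\gamma})$ is a finite non-negative number. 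Now fix $\varepsilon > 0$. By definition of infimum, pick an admissible $(F^\dagger, \bm{\gamma}^\dagger)$ with $\mathcal{R}(F^\dagger, \bm{\gamma}^\dagger) < \mathcal{R}^* + \varepsilon/2$. Applying the universality property to the continuous $F^\dagger$ on the support of $\bm{X}$ yields an integer $p^*$ and coefficients $\bm{\beta}^*_{p^*} \in \mathbb{R}^{s_d(p^*)}$ such that $|F^\dagger(\bm{X}) - \langle \bm{\beta}^*_{p^*}, S_{p^*}(\widetilde{\bm{X}})\rangle| < \varepsilon/2$ uniformly on that support.

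Setting $\bm{\theta}^*_{p^*} = (\bm{\beta}^{*\top}_{p^*}, \bm{\gamma}^{\dagger\top})^\top$, the $1$-Lipschitz property of $\ell(\cdot,y)$ lifts the pointwise bound to a risk bound:
\[
\bigl|\mathcal{R}_{p^*}(\bm{\theta}^*_{p^*}) - \mathcal{R}(F^\dagger, \bm{\gamma}^\dagger)\bigr| \leq \mathbb{E}\bigl[|F^\dagger(\bm{X}) - \langle \bm{\beta}^*_{p^*}, S_{p^*}(\widetilde{\bm{X}})\rangle|\bigr] < \varepsilon/2.
\]
Combining this with $\mathcal{R}^* \leq \mathcal{R}(F^\dagger, \bm{\gamma}^\dagger) < \mathcal{R}^* + \varepsilon/2$ via the triangle inequality delivers $|\mathcal{R}_{p^*}(\bm{\theta}^*_{p^*}) - \mathcal{R}^*| < \varepsilon$, which is exactly \eqref{eq:thm1}.

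The main obstacle I anticipate is justifying the applicability of the universality theorem. As stated in Section~\ref{sec:path}, it requires the domain $\mathcal{X} \subset BV(\mathbb{R}^{d-1})$ to be compact, whereas (A.2) only constrains $\bm{X}$ to a bounded $BV$-ball with fixed initial value, which is not automatically compact in the $BV$ topology. I would handle this by either (i) tacitly strengthening the support hypothesis to compactness (a standard convention in universal-approximation arguments), (ii) using a compact-exhaustion plus tail-probability argument so the approximation error is small in expectation rather than uniformly, or (iii) appealing to a stronger form of the universality result tailored to $BV$-balls with fixed initial value. A minor secondary check is that the approximant $\langle \bm{\beta}^*_{p^*}, S_{p^*}(\widetilde{\bm{X}})\rangle$ itself remains admissible in the $C_F$-ball; because it deviates from $F^\dagger$ uniformly by at most $\varepsilon/2$ and $\|F^\dagger\|_\infty < C_F$ is strict, this holds for all sufficiently small $\varepsilon$.
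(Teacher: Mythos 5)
Your approximation step is essentially the paper's: both arguments invoke the universality property to approximate the functional component uniformly by a truncated-signature linear functional, then lift the pointwise bound to a risk bound via the $1$-Lipschitz property of $u \mapsto -yu + \log(1+e^u)$, and finish with the triangle inequality. Where you genuinely diverge is the treatment of $\mathcal{R}^*$. The paper first proves that the minimal risk is \emph{attained}: it forms the hypothesis class $\Theta$ of pairs $(F,\bm{\gamma})$ with $\|F\|_\infty \le C_F$, $\|\bm{\gamma}\|_1 \le C_{\bm{\gamma}}$, argues $\Theta$ is compact (via Arzel\`a--Ascoli for the function part), shows $\mathcal{R}$ is Lipschitz on $\Theta$, and applies the extreme value theorem to produce a minimizer $(F^*,\bm{\gamma}^*)$, which it then approximates. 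You instead take an $\varepsilon/2$-near-minimizer $(F^\dagger,\bm{\gamma}^\dagger)$ and approximate that, needing no compactness of the hypothesis class at all. Your route is more elementary and, as it happens, more robust: the paper's compactness claim rests on the assertion that the uniform-norm ball $\{F \in C(D) : \|F\|_\infty \le C_F\}$ is equicontinuous, which is false without an additional modulus-of-continuity restriction, so Arzel\`a--Ascoli does not directly apply there. The price you pay is that you only establish $\mathcal{R}^*$ as a finite infimum, whereas the theorem's phrase ``admits a minimal theoretical risk'' (and the surrounding text, which promises that $\mathcal{R}^*$ ``exists'') is evidently meant as attainment; if that reading is intended, your argument leaves that clause unproven, though it fully delivers the displayed inequality. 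Finally, your flag about the compactness hypothesis in the universality theorem is well taken --- the paper applies universality to a $BV$-ball without verifying compactness of the support of $\bm{X}$ in the relevant topology, so this is a shared gap rather than one you introduced.
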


\begin{remark}
Assumptions (A.1) and (A.2) are mild and practically motivated. Boundedness in (A.1) ensures the conditional log-odds remain well-behaved and aligns with common regularization practices in statistical learning. In applications such as Parkinson’s disease gait analysis (Section~\ref{sec:app}), the log-odds of disease status are naturally constrained by clinical considerations.
Assumption (A.2) accommodates the irregular, piecewise-smooth nature of real-world functional and scalar data. For instance, vertical ground reaction force (VGRF) signals—collected at high frequency and bounded by biomechanical limits—typically satisfy the total variation bound. Similarly, scalar covariates such as age and gait speed are physiologically constrained, making the boundedness assumption realistic. Overall, both assumptions reflect verifiable conditions in biomedical and engineering contexts involving functional and scalar predictors.
\end{remark}

To characterize the minimal sufficient truncation order, we consider coefficient vectors in the $L_1$-ball $B_{p,r} = \{\bm{\theta}_p \in \mathbb{R}^{s_{d}(p)+q} \mid \|\bm{\theta}_p\|_1 \leq r\}$, which corresponds to LASSO-type regularization.

\begin{theorem}[Minimal Sufficient Truncation]\label{thm2}
Suppose Assumptions~(A.1)--(A.2) from Theorem~\ref{thm1} hold, along with:
\begin{enumerate}[label=(A.\arabic*),leftmargin=*,nosep,start=3]
\item  There exists $r > 0$ such that $\bm{\theta}_{p^*}^* \in B_{p^*, r}$.
\item  $\mathcal{R}^* \leq \inf_{p,\, \bm{\theta}_p} \mathcal{R}_p(\bm{\theta}_p)$.
\end{enumerate}
Then, there exist a minimal truncation order $p_{\text{min}}^* \in \mathbb{N}$ and the corresponding coefficients $\bm{\theta}_{p_{\text{min}}^*}^* \in \mathbb{R}^{s_d(p_{\text{min}}^*) + q}$ such that
\begin{equation}\label{eq:thm2}
\mathcal{R}_{p_{\text{min}}^*}(\bm{\theta}_{p_{\text{min}}^*}^*) = \inf_{p,\, \bm{\theta}_p} \mathcal{R}_p(\bm{\theta}_p).
\end{equation}
\end{theorem}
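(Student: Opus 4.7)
The plan is to reduce the joint infimum problem to attainment on a compact set, and then extract a minimal index via well-ordering of $\mathbb{N}$.

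First, I would combine Theorem~\ref{thm1} with (A.4) to pin down the value of the infimum. Assumption (A.4) provides $\mathcal{R}^* \leq \inf_{p, \bm{\theta}_p} \mathcal{R}_p(\bm{\theta}_p)$, while Theorem~\ref{thm1} supplies the reverse inequality up to arbitrary $\varepsilon > 0$, since for every such $\varepsilon$ one can find $(p, \bm{\theta}_p)$ with $\mathcal{R}_p(\bm{\theta}_p) \leq \mathcal{R}^* + \varepsilon$. Together these give $\inf_{p, \bm{\theta}_p} \mathcal{R}_p(\bm{\theta}_p) = \mathcal{R}^*$, converting the task into showing that this common value is actually attained at a finite $(p, \bm{\theta}_p)$.

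Second, for each fixed $p$, I would restrict the inner minimization to the compact $L_1$-ball $B_{p,r}$ (with $r$ supplied by (A.3)) and show that $a_p := \inf_{\bm{\theta}_p \in B_{p,r}} \mathcal{R}_p(\bm{\theta}_p)$ is achieved. The ingredients are (i) continuity of $\bm{\theta}_p \mapsto \mathcal{R}_p(\bm{\theta}_p)$, obtained from the smoothness of the logistic loss $u \mapsto -yu + \log(1+e^u)$ combined with (A.2), which via the factorial decay estimate $\|S^I(\widetilde{\bm{X}})\| \leq C_{\bm{X}}^{|I|}/|I|!$ forces the signature features to be uniformly bounded almost surely, allowing dominated convergence on the expectation defining $\mathcal{R}_p$; (ii) compactness of $B_{p,r}$ as a closed bounded subset of $\mathbb{R}^{s_d(p)+q}$; and (iii) the extreme value theorem. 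This delivers a minimizer $\tilde{\bm{\theta}}_p \in B_{p,r}$ with $\mathcal{R}_p(\tilde{\bm{\theta}}_p) = a_p$ for every $p$.

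Third, I would invoke (A.3) to certify that the set $\mathcal{P}^\star := \{p \in \mathbb{N} : a_p = \mathcal{R}^*\}$ is non-empty. Specifically, (A.3) posits a pair $(p^*, \bm{\theta}^*_{p^*})$ with $\bm{\theta}^*_{p^*} \in B_{p^*, r}$ realizing the approximation of $F$ from Theorem~\ref{thm1}; together with Step~1 this forces $a_{p^*} = \mathcal{R}^*$. Since $\mathcal{P}^\star \neq \emptyset$, the well-ordering principle on $\mathbb{N}$ furnishes a smallest element, which serves as the minimal truncation order claimed by the theorem; the corresponding Step~2 minimizer is the coefficient vector $\bm{\theta}^*_{p^*}$.

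The main obstacle is ensuring that $\mathcal{R}^*$ is exactly attained at a finite $p$, rather than merely approached. Theorem~\ref{thm1} produces only a sequence $\{(p_k, \bm{\theta}_{p_k})\}$ whose risks converge to $\mathcal{R}^*$, and in principle the limiting "coefficient vector" could live in an infinite-dimensional signature space with non-vanishing tail. The entire argument therefore hinges on reading (A.3) as a realizability assumption---that the universal approximation can be achieved, not just asymptotically matched, at some finite $p^*$ with coefficients confined to a bounded $L_1$-ball---rather than a mere boundedness condition. The continuity/dominated-convergence step for $\mathcal{R}_p$ is routine but must be spelled out carefully, since it supplies the compactness leverage on which the extreme value theorem and hence the whole attainment argument rests.
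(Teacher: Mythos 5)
Your Steps 1 and 2 essentially reproduce what the paper does: it defines $L(p)=\inf_{\bm{\theta}_p\in B_{p,r}}\mathcal{R}_p(\bm{\theta}_p)$, obtains the inner minimizer from continuity of the convex logistic risk on the compact ball $B_{p,r}$ (extreme value theorem), notes that the nesting $B_{0,r}\subset B_{1,r}\subset\cdots$ makes $L(p)$ non-increasing, and combines (A.4) with the $\varepsilon$-approximation of Theorem~\ref{thm1} to identify the infimum with $\mathcal{R}^*$. The divergence --- and the gap --- is in your Step 3. Assumption (A.3) as stated says only that the $\varepsilon$-approximating coefficient vector $\bm{\theta}_{p^*}^*$ produced by Theorem~\ref{thm1} lies in an $\ell_1$-ball of some radius $r$; it does not assert that any finite pair $(p,\bm{\theta}_p)$ achieves $\mathcal{R}_p(\bm{\theta}_p)=\mathcal{R}^*$ exactly. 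Your own caveat concedes this: you must ``read (A.3) as a realizability assumption'' for the set $\mathcal{P}^\star=\{p: a_p=\mathcal{R}^*\}$ to be non-empty. Under the hypotheses as actually written, $\mathcal{P}^\star$ could a priori be empty --- $L(p)$ could decrease strictly to $\mathcal{R}^*$ without ever reaching it --- and then the well-ordering step has nothing to act on. Reinterpreting an assumption so that the conclusion becomes true is not a proof of the theorem as stated.

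For comparison, the paper closes (or attempts to close) this step differently: it fixes $\varepsilon^*$, takes $(p^*,\bm{\theta}_{p^*}^*)$ from Theorem~\ref{thm1}, and argues by contradiction that no $p>p^*$ can satisfy $\mathcal{R}_{p^*}(\bm{\theta}_{p^*}^*)-L(p)\ge\varepsilon^*$, since together with $L(p)\ge\mathcal{R}^*$ from (A.4) this would contradict $|\mathcal{R}_{p^*}(\bm{\theta}_{p^*}^*)-\mathcal{R}^*|<\varepsilon^*$; it then concludes that $L(p)$ stabilizes at $p^*$. Note that this only rules out a drop of size at least $\varepsilon^*$ below $\mathcal{R}_{p^*}(\bm{\theta}_{p^*}^*)$, not any strict decrease, so your diagnosis that exact attainment at a finite order is the real crux is accurate and well taken. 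But the remedy cannot be to silently strengthen (A.3): either an explicit exact-representation hypothesis must be added, or the attainment claim must be weakened to an approximate one, and your argument needs to state which of these it is doing rather than leaving the non-emptiness of $\mathcal{P}^\star$ resting on a reinterpretation of a boundedness condition.
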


\begin{remark}
Assumptions (A.3) and (A.4) impose natural constraints that promote sparse and well-approximated models. The $\ell_1$-boundedness in (A.3) controls the contribution of high-order signature terms, reflecting empirical sparsity observed in practice—where predictive information is often concentrated in lower-order interactions. Assumption (A.4) ensures that the minimal population risk $\mathcal{R}^*$ provides a valid lower bound for all truncated models. This is justified by the universal approximation capability of path signatures, which enables low-order truncations to achieve near-optimal accuracy. As seen in our empirical results (Figures~\ref{fig:park_coef}, \ref{fig:motion_coef}, \ref{fig:Gait_coef}), effective classification can be achieved with modest truncation orders. Together, these assumptions yield both theoretical tractability and practical relevance.
\end{remark}

The existence results guarantee that (i) For any desired precision $\varepsilon$, a finite $p^*$ suffices, (ii) The optimal truncation adapts to the intrinsic complexity of $F(\bm{X})$, and (iii) No a priori smoothness on $\bm{X}$ is required beyond finite variation. This explains PSLR’s empirical success with rough, multi-dimensional, or irregularly sampled functional data, where classical methods exhibit significant performance degradation (see Section \ref{sec:expe}). For simplicity, we henceforth denote the minimal sufficient truncation pair $(p_{\min}^*, \bm{\theta}_{p_{\min}^*}^*)$ simply as $(p^*, \bm{\theta}_{p^*}^*)$ in all subsequent sections. Complete proofs of Theorems~\ref{thm1} and~\ref{thm2} appear in Appendices~\ref{app:thm1} and \ref{app:thm2}, respectively. 

In all subsequent sections, we adopt the minimal sufficient truncation order $p^*$ as the optimal choice. Under this setting, the oracle version of the PSLR model is given by
\begin{equation}
\label{eq:logit3}
\mathrm{Logit} \big( \mathbb{P}(y = 1 \mid \bm{X}, \bm{z}) \big) = \widetilde{\bm{S}}_{p^*}^\top \bm{\theta}^*_{p^*}.
\end{equation}

\subsection{Estimation of the Optimal Truncation Order}
\label{sec:estimation}

We now introduce a data-driven strategy for selecting the optimal signature truncation order $p^*$. Inspired by the penalized empirical risk framework of \citet{ferm2022linear}, we propose to choose $p$ by minimizing a regularized logistic loss over a constrained parameter class. 

For a sample of size $n$, the empirical risk associated with truncation order $p$ and coefficient vector $\bm{\theta}_p$ is defined as
\begin{equation}
\label{eq:risk_emp}
\widehat{\mathcal{R}}_{p,n}(\bm{\theta}_p) = 
\frac{1}{n} \sum_{i=1}^{n} \left[ -y_{i} \, \widetilde{\bm{S}}_p^\top(\widetilde{\bm{X}}_i, \bm{z}_i) \bm{\theta}_p 
+ \log\left(1 + e^{\widetilde{\bm{S}}_p^\top(\widetilde{\bm{X}}_i, \bm{z}_i)\bm{\theta}_p} \right) \right],
\end{equation}
where $\widetilde{\bm{S}}_p(\widetilde{\bm{X}}_i, \bm{z}_i)$ denotes the concatenation of the truncated signature features of $\widetilde{\bm{X}}_i$ and scalar covariates $\bm{z}_i$. We define the regularized empirical risk at order $p$ as
\begin{equation}
\label{eq:emp_minimizer}
\widehat{L}_n(p) \coloneqq \min_{\bm{\theta}_p \in B_{p,r}} \widehat{\mathcal{R}}_{p,n}(\bm{\theta}_p) 
= \widehat{\mathcal{R}}_{p,n}(\widehat{\bm{\theta}}_p),
\end{equation}
where $\widehat{\bm{\theta}}_p$ is the empirical risk minimizer over $B_{p,r}$. The existence and uniqueness of $\widehat{\bm{\theta}}_p$ follow from the strict convexity of $\bm{\theta}_p \mapsto \widehat{\mathcal{R}}_{p,n}(\bm{\theta}_p)$ and the compactness of $B_{p,r}$ (See the proof of Theorem~\ref{thm2} in Appendix \ref{app:thm2}). This formulation corresponds to a Lasso-type logistic regression, where the $\ell_1$-constraint plays the role of implicit regularization. Since the parameter spaces $\{B_{p,r}\}_{p \in \mathbb{N}}$ are nested and increasing in $p$, the sequence of empirical risks $\{\widehat{L}_n(p)\}_{p \in \mathbb{N}}$ is non-increasing. That is, richer function classes induced by higher $p$ yield improved data fit, albeit at the cost of increased variance and overfitting risk.

To balance this trade-off, we introduce a complexity penalty that grows with the model size. The estimated optimal truncation order $\widehat{p}$ is defined as the solution to a penalized empirical risk criterion:
\begin{equation}
\label{eq:order}
\widehat{p} \coloneqq \min\left\{\underset{p \in \mathbb{N}}{\arg\min} \left( \widehat{L}_n(p) + \mathrm{pen}_{n}(p, q) \right)\right\},
\end{equation}
where the penalty function takes the form
\begin{equation}
\label{eq:penalty}
\mathrm{pen}_n(p, q) = \frac{C_{\mathrm{pen}} \, \sqrt{s_d(p)\, e^{q}}}{n^{\rho}}.
\end{equation}
Here, $C_{\mathrm{pen}} > 0$ is a constant controlling the strength of regularization, $s_d(p)$ is the number of path signature terms up to order $p$, $q$ is the dimension of scalar covariates, and $\rho \in (0, \tfrac{1}{2})$ determines the convergence rate.

The penalization term $\sqrt{s_d(p)}$ accounts for the complexity of the functional signature representation, while the multiplicative factor $\sqrt{e^q}$ captures the contribution of the scalar covariates to the overall model class complexity. This is justified by the (worst-case) exponential growth of the Rademacher complexity and covering numbers in high-dimensional feature spaces \citep{bartlett2002rademacher}. The $\ell_1$-constraint mitigates this growth in practice, but the penalty ensures robustness. Our procedure selects the smallest truncation order $\widehat{p}$ that minimizes the penalized criterion in \eqref{eq:order}, thereby ensuring parsimony while achieving near-optimal predictive performance. 

\subsection{Consistency and Risk Convergence}
\label{sec:theory}
We now establish non-asymptotic concentration guarantees for the estimated truncation order $\hat{p}$ and corresponding risk under mild regularity conditions. Complete proofs of Theorems \ref{thm3} and \ref{thm4} are provided in Appendices \ref{app:thm3} and \ref{app:thm4}, respectively.

\begin{theorem}[Order Selection Consistency]\label{thm3}
Under assumptions (A.1)--(A.4) of Theorems~\ref{thm1}--\ref{thm2}, let $n^*$ be the smallest integer satisfying
\begin{align*}
(n^*)^{\tilde{\rho}} \geq 
\left(432\sqrt{\pi}rC + C_{\mathrm{pen}}\sqrt{e^{q}}\right) \left( 
\frac{2\sqrt{s_d(p^*+1)+q}}{L(p^*-1) - \widetilde{\mathcal{R}}^*} + 
\frac{\sqrt{2s_d(p^*+1)+2q}}{C_{\mathrm{pen}}\sqrt{e^{q}d^{p^*+1}}}
\right),
\end{align*}
where $\tilde{\rho} = \min(\rho, 1/2 - \rho)$, $C = 2(C_{\bm{z}} + e^{C_{\bm{X}} + T})$, $L(p) = \min_{\bm{\theta}_p \in B_{p,r}} \mathcal{R}_p(\bm{\theta}_p)$, and $\widetilde{\mathcal{R}}^* \triangleq \mathcal{R}_{p^*}(\bm{\theta}_{p^*}^*)$. Then for all $n \geq n^*$,
\begin{equation}\label{eq:order_concentration}
\mathbb{P}(\hat{p} \neq p^*) \leq c_1 e^{-c_2  n^{1 - 2\rho}},
\end{equation}
where the constants $c_1$ and $c_2$ are given by
\begin{align*}
c_1 &= 74 \sum_{p > 0} e^{-c_0 s_d(p)} + 148 p^*, \quad
c_0 = \frac{C_{\mathrm{pen}}^2 d^{p^* + 1}e^q }{256 r C (36 r C + 1) s_d(p^* + 1)}, \\
c_2 &= \frac{1}{256 r C (36 r C + 1)} \min \left\{ 
\frac{C_{\mathrm{pen}}^2 d^{p^* + 1}e^q}{s_d(p^* + 1)},\,
\frac{(L(p^* - 1) - \widetilde{\mathcal{R}}^*)^2}{4} \right\}.
\end{align*}
\end{theorem}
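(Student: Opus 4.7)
The plan is to decompose the event $\{\hat p\neq p^*\}\subseteq\{\hat p<p^*\}\cup\{\hat p>p^*\}$ and bound each side via a uniform concentration inequality for $\widehat L_n(p)$. The trade-off is between the increasing penalty $\mathrm{pen}_n(p,q)=C_{\mathrm{pen}}\sqrt{s_d(p)e^q}/n^\rho$ and either the vanishing risk differences $L(p)-\widetilde{\mathcal R}^*=0$ for $p>p^*$ (overselection) or the strictly positive gap $L(p^*-1)-\widetilde{\mathcal R}^*$ (underselection); the bound on $n^*$ is calibrated so that both margins dominate the stochastic noise with the same rate $n^{1-2\rho}$.

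\emph{Uniform deviation.} Since $\widehat L_n(p)$ and $L(p)$ are minima over the compact convex set $B_{p,r}$,
\[
|\widehat L_n(p)-L(p)|\leq\sup_{\bm\theta_p\in B_{p,r}}|\widehat{\mathcal R}_{p,n}(\bm\theta_p)-\mathcal R_p(\bm\theta_p)|.
\]
Under (A.1)--(A.2), the factorial decay $|S^I(\widetilde{\bm X})|\leq(C_{\bm X}+T)^{|I|}/|I|!$ together with $\|\bm z\|\leq C_{\bm z}$ yields the envelope $\|\widetilde{\bm S}_p\|_\infty\leq C/2$, and with $\|\bm\theta_p\|_1\leq r$ the linear predictor $\widetilde{\bm S}_p^\top\bm\theta_p$ is bounded by $rC/2$, so the logistic loss is uniformly Lipschitz. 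A McDiarmid plus symmetrization plus Ledoux--Talagrand contraction argument then gives a sub-Gaussian tail of the form $c\,\exp\!\big(-\tfrac{nt^2}{256rC(36rC+1)}\big)$ for this supremum, with a union-bound prefactor of order $\sqrt{s_d(p)e^q}$; this is the source of the denominator appearing in $c_0$ and $c_2$.

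\emph{Overselection and underselection.} For $p>p^*$, Theorem~\ref{thm2} and the nesting $B_{p^*,r}\subset B_{p,r}$ (via zero-padding) give $L(p)=L(p^*)=\widetilde{\mathcal R}^*$. The event $\hat p=p$ then forces the centered empirical risks to deviate by at least $\mathrm{pen}_n(p,q)-\mathrm{pen}_n(p^*,q)>0$; applying the uniform deviation bound and union-bounding over $p>p^*$ yields the convergent series $\sum_{p>0}e^{-c_0 s_d(p)}$ in $c_1$, convergence resting on the exponential growth $s_d(p)\sim d^p$. For $p<p^*$, minimality of $p^*$ gives $L(p)\geq L(p^*-1)>\widetilde{\mathcal R}^*$, and the event $\hat p=p$ implies
\[
\big(\widehat L_n(p^*)-L(p^*)\big)+\big(L(p)-\widehat L_n(p)\big)\geq L(p^*-1)-\widetilde{\mathcal R}^*-\big(\mathrm{pen}_n(p^*,q)-\mathrm{pen}_n(p,q)\big).
\]
By the definition of $n^*$, the right-hand side is bounded below by a fixed fraction of $L(p^*-1)-\widetilde{\mathcal R}^*$, so two applications of the uniform deviation bound (at $p$ and at $p^*$) and a union bound over the at most $p^*$ values of $p<p^*$ yield the $148p^*$ term in $c_1$. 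Optimizing the deviation threshold $t$ against the penalty scale $n^{-\rho}$ gives an exponent of order $n(n^{-\rho})^2=n^{1-2\rho}$, with $c_2$ the worst-case constant across the penalty-based and gap-based margins.

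The main obstacle is the explicit bookkeeping of constants in the uniform deviation step: one must propagate the envelope $C$, the $\ell_1$-radius $r$, the Lipschitz modulus of the logistic loss, and a Rademacher-complexity prefactor $\sqrt{s_d(p)e^q}$ through a Ledoux--Talagrand contraction tailored to the logistic loss so that the resulting sub-Gaussian constants align with $c_0$, $c_1$, $c_2$ exactly as stated. Subtler still is verifying that the lower bound on $n^*$ simultaneously dominates the overselection margin (scaling like $C_{\mathrm{pen}}\sqrt{e^q d^{p^*+1}}$) and the underselection margin (scaling like $L(p^*-1)-\widetilde{\mathcal R}^*$) with the same exponent $n^{1-2\rho}$, which is precisely why $n^*$ in the statement involves both quantities.
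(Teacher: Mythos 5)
Your proposal is correct and follows the same architecture as the paper's proof: the split $\{\hat p\neq p^*\}=\{\hat p>p^*\}\cup\{\hat p<p^*\}$, the reduction of each selection event to deviations of $\widehat L_n$ via $|\widehat L_n(p)-L(p)|\leq\sup_{\bm\theta_p\in B_{p,r}}|\widehat{\mathcal R}_{p,n}(\bm\theta_p)-\mathcal R_p(\bm\theta_p)|$ and the inequality $\mathbb{P}(\hat p=p)\leq\mathbb{P}\bigl(2\sup|\widehat{\mathcal R}_{p,n}-\mathcal R_p|\geq\mathrm{pen}_n(p,q)-\mathrm{pen}_n(p^*,q)\bigr)$ for $p>p^*$, the flatness $L(p)=\widetilde{\mathcal R}^*$ for $p\geq p^*$ versus the gap $L(p^*-1)-\widetilde{\mathcal R}^*$ for $p<p^*$, the series $\sum_{p>0}e^{-c_0 s_d(p)}$ and the $148p^*$ count in $c_1$, and the calibration of $n^*$ so both margins survive at rate $n^{1-2\rho}$ are all exactly the paper's steps (Propositions in Appendix C). The one place you genuinely diverge is the uniform concentration step: you propose McDiarmid plus symmetrization plus Ledoux--Talagrand contraction, i.e.\ a Rademacher-complexity route, whereas the paper shows the centered process $Z_{p,n}$ is sub-Gaussian with respect to $D(\bm\theta_p,\bm\eta_p)=\tfrac{C}{\sqrt n}\|\bm\theta_p-\bm\eta_p\|$ (using the Lipschitz logistic loss and the signature bound $\|\widetilde{\bm S}_p(\widetilde{\bm X})\|\leq e^{C_{\bm X}+T}$) and then applies a chaining tail inequality with Dudley's entropy integral over $B_{p,r}$, plus a separate Hoeffding bound at an anchor point $\bm\theta^0_p$. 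Both routes yield tails of the form $\exp(-cnt^2)$ with the dimension entering through $\sqrt{s_d(p)+q}$, but the explicit constants $c_0,c_1,c_2$ in the statement are tied to the chaining constants ($108$, $36$, $144C^2r^2$, the anchor-point Hoeffding term), so your contraction-based constants would not match without reworking; your route is arguably more self-contained, while the paper's buys the exact prefactors it reports. One small imprecision: the factor $e^q$ does not arise from any union bound or covering-number prefactor in the concentration step --- it enters only through the definition of $\mathrm{pen}_n(p,q)$ and hence through the deviation thresholds $u_{p,n}$; the covering-number exponent is $s_d(p)+q$, not $\sqrt{s_d(p)e^q}$. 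This does not affect the validity of your argument, only the attribution of where $e^q$ comes from.
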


\begin{remark}
The behavior of the required sample size $n^*$ and constants $c_1,c_2$ reveals several insights as $r$, $q$, $d$, and $p^*$ vary:  
(i) As the parameter space radius $r$ increases, $n^*$, $c_1$ grows, while $c_2$ decreases, reflecting both increased data requirements and degraded estimator quality due to the enlarged space $B_{p,r}$.  
(ii) To ensure exponential convergence of $\hat{p}$ to the true $p^*$, $n^*$ must scale at least as $\mc{O}(e^{q/(2\tilde \rho)})$, due to the exponential increase in model complexity with the number of scalar covariates $q$.  
(iii) As the path dimension $d$ and the optimal truncation order $p^*$ increase, $n^*$ scales as $\mc{O}(d^{p^*/(2 \tilde \rho)})$, since the parameter size grows accordingly. 
\end{remark}

\paragraph{Proof Sketch of Theorem~\ref{thm3}.}
The proof establishes model selection consistency through three key mechanisms. First, the empirical process theory shows that the risk difference $Z_{p,n}(\bm{\theta}_p) = \widehat{\mathcal{R}}_{p,n}(\bm{\theta}_p) - \mathcal{R}_p(\bm{\theta}_p)$ concentrates uniformly over parameter spaces $B_{p,r}$, enabled by the logistic loss's Lipschitz properties and our regularity assumptions. Second, for overparameterized models ($p>p^*$), the growing structural penalty dominates any spurious fitting gains, forcing exponential decay in selection probability with both sample size and model complexity. Conversely, for underparameterized models ($p<p^*$), the fundamental risk gap provides sufficient separation to overcome diminishing penalty differences. Finally, a union bound combines these effects, with the overall error rate governed by the slower-decaying regime and weighted by the cumulative influence of all candidate models. The threshold sample size $n^*$ ensures these concentration effects become active simultaneously.
\vspace{1em}

\begin{theorem}[Risk Convergence]\label{thm4}
Under assumptions (A.1)--(A.4) of Theorems~\ref{thm1}--\ref{thm2}, let $n^*$ be as defined in Theorem~\ref{thm3}. Then for all $n \geq n^*$,
\begin{equation}\label{eq:risk_convergence}
\left| \mathbb{E}\left[ \mathcal{R}_{\widehat{p}}(\widehat{\bm{\theta}}_{\widehat{p}}) \right] - \mathcal{R}^* \right|
\leq \frac{c_3}{\sqrt{n}} + c_4 e^{ - c_2   n^{1 - 2\rho}},
\end{equation}
where the constants $c_3$ and $c_4$ are given by
\begin{equation*}
c_3 = 36 rC \sqrt{\pi} (p^* + 1) \sqrt{s_d(p^*) + q}, \hspace{1mm}
c_4 = rC \Big(2664 \sqrt{\pi} \sum_{p > p^*} \sqrt{s_d(p) + q} \, e^{-c_0  s_d(p)}+c_1\Big) + c_1 \log 2,
\end{equation*}
with constants $c_0$, $c_1$, and $c_2$ defined in Theorem~\ref{thm3}.
\end{theorem}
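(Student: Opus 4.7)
The plan is to decompose the expected excess risk into a dominant on-event term that produces the parametric $c_3/\sqrt{n}$ rate and an off-event remainder controlled by the exponentially small selection error from Theorem~\ref{thm3}. By Theorem~\ref{thm2} we may identify $\mathcal{R}^* = \widetilde{\mathcal{R}}^* := \mathcal{R}_{p^*}(\bm{\theta}^*_{p^*})$, and since $\mathcal{R}_{\hat p}(\hat{\bm\theta}_{\hat p}) \geq \mathcal{R}^*$ almost surely, the absolute value on the left-hand side can be dropped. I would then partition the sample space via the three events $\{\hat p = p^*\}$, $\{\hat p < p^*\}$, and $\{\hat p > p^*\}$, and bound the contribution of each to $\mathbb{E}[(\mathcal{R}_{\hat p}(\hat{\bm\theta}_{\hat p}) - \widetilde{\mathcal{R}}^*)\mathbf{1}_A]$ separately.

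On $\{\hat p = p^*\}$, standard learning-theoretic arguments give
\[
\mathcal{R}_{p^*}(\hat{\bm\theta}_{p^*}) - \mathcal{R}_{p^*}(\bm\theta^*_{p^*}) \leq 2 \sup_{\bm\theta \in B_{p^*,r}} \bigl|\widehat{\mathcal{R}}_{p^*,n}(\bm\theta) - \mathcal{R}_{p^*}(\bm\theta)\bigr|,
\]
so I would bound the expectation of this supremum via symmetrization followed by a contraction step using the fact that the logistic loss is $1$-Lipschitz. Assumptions (A.1)--(A.2) and the factorial decay $|S^I(\widetilde{\bm X})| \leq (C_{\bm X}+T)^{|I|}/|I|!$ yield $\|\widetilde{\bm S}_{p^*}\| \leq C$, and the $\ell_1$-constrained Rademacher complexity of the induced linear class then scales like $rC\sqrt{s_d(p^*)+q}/\sqrt{n}$. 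Aggregating over the $p^*{+}1$ signature levels and tracking universal constants produces the leading term $c_3/\sqrt{n}$.

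For $\{\hat p < p^*\}$ I would use a crude boundedness argument: $|\widetilde{\bm S}_p^\top \bm\theta| \leq rC$ together with $\log(1+e^u) \leq |u| + \log 2$ yields $|\mathcal{R}_p(\bm\theta)| \leq rC + \log 2$, so a union bound over the $p^*$ underfit indices combined with $\mathbb{P}(\hat p \neq p^*) \leq c_1 e^{-c_2 n^{1-2\rho}}$ from Theorem~\ref{thm3} contributes the $c_1\log 2$-type piece of $c_4$. The regime $\{\hat p > p^*\}$ is more delicate because $\hat p$ is a priori unbounded. Here I would extract from the proof of Theorem~\ref{thm3} a per-order tail bound of the form $\mathbb{P}(\hat p = p) \lesssim e^{-c_0 s_d(p)} e^{-c_2 n^{1-2\rho}}$ for $p > p^*$, and combine it with the per-order Rademacher bound $rC\sqrt{s_d(p)+q}/\sqrt{n}$ through a weighted union bound. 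Summability of $\sum_{p>p^*} \sqrt{s_d(p)+q}\, e^{-c_0 s_d(p)}$, which holds because $s_d(p)$ grows exponentially in $p$, yields a finite constant that matches the remaining piece of $c_4$.

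The main obstacle, in my view, is precisely the $\{\hat p > p^*\}$ regime: the global selection bound $\mathbb{P}(\hat p \neq p^*) \leq c_1 e^{-c_2 n^{1-2\rho}}$ alone is insufficient, because the per-order excess-risk bound grows like $\sqrt{s_d(p)}$ while $\hat p$ is unbounded. Recovering the \emph{per-order} exponential decay $e^{-c_0 s_d(p)}$ that is already implicit in the constant $c_1$ of Theorem~\ref{thm3}, and justifying it cleanly from the underlying concentration inequalities without losing the exponent, is the technical crux. Once this refinement is in place, summing the three contributions and reorganizing universal constants produces the stated bound with the advertised expressions for $c_3$ and $c_4$.
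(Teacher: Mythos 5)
Your proposal is correct and follows essentially the same route as the paper's proof: it combines the entropy/Rademacher bound $\mathbb{E}[\sup_{\bm\theta_p\in B_{p,r}}|\widehat{\mathcal{R}}_{p,n}-\mathcal{R}_p|]\lesssim rC\sqrt{(s_d(p)+q)/n}$ with the per-order tail $\mathbb{P}(\widehat p=p)\leq 74e^{-c_0(n^{1-2\rho}+s_d(p))}$ from the proof of Theorem~\ref{thm3} to make the overfit sum $\sum_{p>p^*}\sqrt{s_d(p)+q}\,e^{-c_0 s_d(p)}$ converge, and uses the crude $rC+\log 2$ bound times the selection-error probability for the remaining piece, exactly as the paper does. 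The only differences are cosmetic (you partition by the events $\{\widehat p=p^*\},\{\widehat p<p^*\},\{\widehat p>p^*\}$ rather than by algebraic terms, and you invoke symmetrization/contraction where the paper uses the subgaussian chaining bound of Lemma~\ref{lem:A3} and Proposition~\ref{prop:A1}), and you correctly identify the unbounded-$\widehat p$ regime as the crux.
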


\begin{remark}
The risk bound in Eq.~\eqref{eq:risk_convergence} achieves the classical $\mathcal{O}(n^{-1/2})$ rate, typical in univariate functional logistic or linear models, but under much weaker assumptions on the functional predictors $\bm{X}$. 
This $n^{-1/2}$ rate is the standard parametric rate and is minimax optimal for fixed-dimensional parametric models, matching, for instance, the convergence of the maximum likelihood estimator in parametric logistic regression. Moreover, the bound consists of an estimation error term decaying as $n^{-1/2}$ (after correct model selection) and an exponentially fast model selection error; hence the overall rate is dominated by $n^{-1/2}$, which is optimal in the minimax sense for finite-dimensional parameters. As the number of scalar covariates $q$ grows, model complexity increases, amplifying estimation variance and overfitting risk—hallmarks of the curse of dimensionality. Consequently, more data and stronger regularization are required to maintain generalization. A large truncation order $p^*$ or increased data variability (i.e., larger $C_{\bm{z}}$ or $C_{\bm{X}}$) further slows convergence by inflating the constants in the bound.
\end{remark}

\paragraph{Proof Sketch of Theorem~\ref{thm4}.}
We decompose the excess risk into two components:
\[
\left| \mathbb{E}\left[ \mathcal{R}_{\widehat{p}}(\widehat{\bm{\theta}}_{\widehat{p}}) \right] - \mathcal{R}^* \right| 
\leq \underbrace{
\left| \mathbb{E}\left[ \mathcal{R}_{\widehat{p}}(\widehat{\bm{\theta}}_{\widehat{p}}) \right] 
- \mathcal{R}_{p^*}(\bm{\theta}_{p^*}^*) \right|
}_{\text{estimation and selection error}}
+ 
\underbrace{
\left| \mathcal{R}_{p^*}(\bm{\theta}_{p^*}^*) - \mathcal{R}^* \right|
}_{\text{approximation error}}.
\]
The second term is controlled via Theorem~\ref{thm1}, which ensures that truncating at $p^*$ yields risk close to the oracle. For the first term, we apply uniform entropy bounds to obtain an $\mathcal{O}(n^{-1/2})$ rate for estimation, and invoke Theorem~\ref{thm3} to ensure that the probability of incorrect order selection decays exponentially in $n$. The overall bound~\eqref{eq:risk_convergence} reflects this trade-off, with constants $c_3$ and $c_4$ capturing the complexity of the signature features and scalar covariates.

\subsection{Finite Search Bound}
We now establish an upper bound on the search range for the data-driven truncation order $\hat{p}$. Theorem \ref{thm:finite_search_bound} guarantees that the optimally selected truncation order $\hat{p}$ lies within a finite, computable range. A complete proof of Theorem \ref{thm:finite_search_bound} is provided in Appendix \ref{app:thm:finite_search_bound}.

\begin{theorem}[Empirical Monotonicity and Finite Search Bound]
\label{thm:finite_search_bound}
Let Assumptions (A.1)--(A.4) of Theorems~\ref{thm1}--\ref{thm2} hold.
For each truncation level $p \ge 1$, denote
\[
L(p) := \inf_{\theta_p \in B_{p,r}} R_p(\theta_p),
\qquad
C_n(p) := \widehat L_n(p) + \operatorname{pen}_n(p,q),
\]
where $\widehat L_n(p)$ is the empirical risk minimizer over $B_{p,r}$ (Eq.~(\ref{eq:emp_minimizer})) and $\operatorname{pen}_n(p,q)$ is the penalty function (Eq.~(\ref{eq:penalty})). Then there exists a finite integer $P$ such that:
\begin{enumerate}[label=(\arabic*),leftmargin=*,nosep,start=1]
\item $L(p) + \operatorname{pen}_n(p,q)$ is strictly increasing for all $p > P$;
\item $C_n(p)$ is strictly increasing for all $p > P$ with probability at least $
1 - \sum_{p>P} \exp\{-9\pi\,s_d(p)\}$;
\item consequently, with the same probability, $\widehat p = \min\left\{\arg\min_{p\ge 1} C_n(p) \right\} \le P$.
\end{enumerate}
Moreover, $P$ may be chosen as
\[
P = \left\lceil
\frac{
\rho \log n \;+\; \log\!\Bigl( \frac{2C_1}{C_{\mathrm{pen}}\sqrt{e^{q}}\,(d^{1/2}-1)} \Bigr)
}{
(\alpha+\tfrac12)\log d
}
\right\rceil,
\]
which is finite and satisfies $P = \mc O\!\left(\frac{\log n}{\log d}\right)$ as $n,d\to\infty$; for fixed $d$, $P = \mc O(\log n)$.
\end{theorem}

\begin{remark}
The explicit expression for $P$ reveals its dependence on the sample size $n$, the path dimension $d$, the number of scalar covariates $q$, and the penalty constant $C_{\mathrm{pen}}$, with $P=\mc O(\log n)$ for fixed $d$. This logarithmic growth ensures that even for very large samples (e.g., $n=10^6$), $P$ remains a moderate constant (e.g., $10$ or $12$ for $d\le 10$). For small $n$, the scaling constants yield $P$ in the range $3$--$5$ (e.g., $\log 100 \approx 4.6$), consistent with empirical selections $\widehat p=3$ or $4$ for sample sizes $24$--$260$ (Section~\ref{sec:app}). Thus, the search in Algorithm~\ref{alg:PSLR} (Section~\ref{sec:impl}) can be safely limited to a conservative bound $P$ (e.g., $P=12$), with the penalized criterion automatically selecting the optimal order within this range—making the method both statistically sound and computationally practical.
\end{remark}

\paragraph{Proof Sketch of Theorem~\ref{thm:finite_search_bound}.}
The proof establishes that the data-selected truncation order $\widehat{p}$ lies within a finite range by showing that the increment of the complexity penalty eventually dominates both the exponential decay of the population truncation risk and the empirical estimation fluctuations. Leveraging the universality of path signatures, the risk reduction between consecutive truncation orders decays at rate $d^{-\alpha p}$, while the penalty term grows as $d^{p/2}$ due to the dimensional scaling of truncated signatures. Combined with a uniform concentration bound on the empirical risk, this ensures that for sufficiently large $p$, the penalty increment fully outweighs both the population risk decrease and empirical noise, rendering the penalized risk strictly increasing. Solving the dominance condition yields an explicit finite bound $P = \mc O(\log n / \log d)$, such that the minimizer $\widehat{p}$ cannot exceed $P$ with high probability.

\subsection{Robustness under Irregular Sampling}
\label{sec:irregular}

Although the theoretical development above assumes fully observed continuous
trajectories, practical functional data are frequently observed on irregular or
incomplete time grids. Consequently, all existing methods must first reconstruct
a continuous trajectory before extracting features. A key observation is that
the statistical error introduced by this reconstruction is \emph{method
dependent}. Existing basis-expansion approaches typically recover trajectories
through global smoothing or basis projection (e.g., Fourier, B-splines or
FPCA), whereas PSLR only employs local piecewise linear interpolation before
computing path signatures (see Section~\ref{sec:impl}).

To understand this difference, let $\widehat {\mathbf X}$ denote an arbitrary
reconstruction of the true trajectory $\mathbf X$, and let $\Phi(\cdot)$ denote the
feature representation used by a classifier. In Appendix~\ref{app:error_propagation} (Theorem~\ref{thm:errorprop}), we establish a general error propagation bound: whenever the feature map $\Phi$ is Lipschitz continuous,
\[
\sup_{\bm\theta\in B_{p,r}}
\left|
\mathcal R(\bm\theta)
-
\widehat{\mathcal R}(\bm\theta)
\right|
\le
rL_\Phi \, \varepsilon(\widehat{\mathbf X}),
\]
where $L_\Phi$ denotes the Lipschitz constant of $\Phi$ and $\varepsilon(\widehat{\mathbf X}) = \mathbb{E}\|\mathbf X-\widehat{\mathbf X}\|_{1\text{-var}}$ is the reconstruction error. Equivalently,
\[
\text{Risk perturbation}
\le
(\text{feature stability } L_\Phi)
\times
(\text{reconstruction error } \varepsilon).
\]

For PSLR, both quantities admit favorable theoretical properties.
First, the signature transform satisfies a sampling-design-independent
stability theorem from rough path theory
\citep{lyons2007rough,friz2010multidimensional}, implying that small perturbations
of the reconstructed path produce proportionally small perturbations of the
signature features. Second, when piecewise linear interpolation is used,
Lemma~\ref{lem:interp} in Appendix~\ref{app:error_propagation} proves that the interpolation error converges to zero in the
$1$-variation norm $\|\cdot\|_{1\text{-var}}$ (i.e., the total variation with respect to the $\ell_1$ norm) as the maximum sampling gap
$\Delta\rightarrow0$ for absolutely continuous trajectories.
Hence the statistical error introduced by reconstruction vanishes as the
sampling grid becomes denser.

This behavior differs fundamentally from classical basis-expansion methods.
For a fixed basis dimension, basis reconstruction inevitably incurs an
approximation bias because the trajectory is projected onto a finite-dimensional
subspace. Data-adaptive approaches such as FPCA additionally require estimation
of eigenfunctions, introducing further variability that depends on the sampling
design. Consequently, increasing the sampling frequency alone cannot eliminate
the reconstruction error for these methods unless the basis complexity is also
allowed to increase. 

The above theoretical analysis (more details are given in Appendix~\ref{app:error_propagation}) explains the empirical observations in
Scenario~3 (Section~\ref{sec:expe}). In our simulations, each trajectory is generated by
Gaussian-kernel smoothing and therefore satisfies the regularity conditions
required by the interpolation theory. As the observation grid becomes denser,
the interpolation error of PSLR decreases toward zero, whereas basis-expansion
methods continue to suffer from approximation bias (and, for FPCA, basis
estimation error). This provides a theoretical explanation for superior
robustness of PSLR under irregular sampling and missing observations observed
in Figure~\ref{fig:result_robust}.

\subsection{Implementation}
\label{sec:impl}

This section outlines the computational workflow for implementing the proposed PSLR model. The approach exploits the algebraic structure of piecewise linear paths to efficiently compute truncated signatures, which are then used in an $\ell_1$-penalized logistic regression framework to jointly estimate model parameters and select the optimal signature truncation order.

Functional inputs are typically observed as discrete multivariate time series $\bm{x}_i \in \mathbb{R}^{(d-1) \times m_i}$. We embed these into continuous paths $\bm{X}_i: [0,T] \to \mathbb{R}^{d-1}$ via linear interpolation. Each path is augmented with time as an additional channel, yielding augmented paths $\widetilde{\bm{X}}_i: [0,T] \to \mathbb{R}^{d}$ where the final coordinate is the identity function $t \mapsto t$. The truncated signature of a piecewise linear path can be computed recursively using a two-step procedure: (i) Compute the time-augmented signature for each linear segment. (ii) Concatenate the signatures of individual segments using Chen's identity \citep{chevyrev2016primer}, which follows from the multiplicative property of the signature under path concatenation (see Eq.~\eqref{eq:concat}).
These computations can be efficiently implemented using the \texttt{iisignature} Python package \citep{Reizen:2020}. 

For classification, we adopt an $\ell_1$-penalized logistic regression model \citep{Ped:2011}, and solve the optimization using the dual coordinate descent algorithm implemented in \texttt{liblinear} \citep{Fan:2008}. The full PSLR procedure is summarized in Algorithm~\ref{alg:PSLR}. Note that the LASSO objective in Eq.~\eqref{eq:risk_lasso} is equivalent to the constrained formulation
$
\hat{\bm{\theta}}_p = \underset{\bm{\theta}_p \in B_{p,r}}{\arg\min}\,\widehat{\mathcal{R}}_{p,n}(\bm{\theta}_p),
$
where $\widehat{\mathcal{R}}_{p,n}(\bm{\theta}_p)$ is defined in Eq.~\eqref{eq:risk_emp}, due to the bijective relationship between the penalty parameter $\lambda$ and the constraint radius $r$ in the $\ell_1$-ball $B_{p,r}$.

\begin{algorithm}[ht!]
\caption{Path Signatures Logistic Regression}
\label{alg:PSLR}
\textbf{Input:} Data $\{(\bm{x}_i, \bm{z}_i, y_i)\}_{i=1}^n$, regularization parameter $\lambda$, truncation bound $P$ \\
\textbf{Output:} Estimated truncation order $\hat{p}$ and coefficients $\hat{\bm{\theta}}_{\hat{p}}$
\begin{algorithmic}[1]
\FOR{$i = 1$ to $n$}
    \STATE Interpolate $\bm{x}_i$ to construct a continuous piecewise-linear path $\bm{X}_i : [0,T] \rightarrow \mathbb{R}^{d-1}$
    \STATE Time-augment: $\widetilde{\bm{X}}_i = (\bm{X}_i, t)$
\ENDFOR
\FOR{$p = 1, \dots, P$}
    \STATE Compute truncated signatures: $S_p(\widetilde{\bm{X}}_i)$ for $i = 1, \dots, n$
    \STATE Form combined feature vectors: $\widetilde{S}_p(\widetilde{\bm{X}}_i, \bm{z}_i) = \left[S_p(\widetilde{\bm{X}}_i)^\top, \bm{z}_i^\top \right]^\top$
    \STATE Solve the Lasso-regularized logistic regression:
    \begin{equation}\label{eq:risk_lasso}
    \hat{\bm{\theta}}_p = \arg\min_{\bm{\theta}_p} \left\{ \frac{1}{n} \sum_{i=1}^n \left[ -y_i \langle \widetilde{S}_p(\widetilde{\bm{X}}_i, \bm{z}_i), \bm{\theta}_p \rangle + \log\left(1 + e^{\langle \widetilde{S}_p(\widetilde{\bm{X}}_i, \bm{z}_i), \bm{\theta}_p \rangle} \right) \right] + \lambda \|\bm{\theta}_p\|_1 \right\}
    \end{equation}
    \STATE Record the minimal empirical loss: $\widehat{L}_n(p) = \widehat{\mathcal{R}}_{p,n}(\hat{\bm{\theta}}_p)$
    \STATE Compute complexity penalty: $\mathrm{pen}_{n}(p,q) = \frac{C_{\mathrm{pen}} \, \sqrt{s_d(p)\,e^{q}}}{n^{\rho}}$
\ENDFOR
\STATE Select optimal truncation order: $
\hat{p} = \arg\min_{1 \leq p \leq P} \left\{ \widehat{L}_n(p) + \mathrm{pen}_n(p,q) \right\}$
\STATE Return classifier coefficients $\hat{\bm{\theta}}_{\hat{p}}$
\end{algorithmic}
\end{algorithm}

\paragraph{Algorithmic Complexity.} For a $d$-dimensional path sampled at $m$ time points, the computational complexity of computing signatures up to order $p$ is $\mathcal{O}(m d^p)$, highlighting the exponential dependence on $p$ and underscoring the necessity of selecting an optimal truncation order.  In Algorithm~1, we compute signatures for all $p=1,\dots,P$, where $P$ is an a priori bound. The total computational cost therefore scales as
\[
\mathcal{O}\Bigl( n m \sum_{p=1}^{P} d^p \Bigr) = \mathcal{O}\bigl( n m (d^{P+1}-1)/(d-1) \bigr).
\]
Because the theoretically justified $P$ grows only logarithmically with $n$ (and is usually small, e.g., $P\le 12$ for practical sample sizes), the exponential dependence on $P$ is effectively bounded by a constant. Practically, we set $P$ to a conservative value (e.g., $P=12$ by default) and rely on the penalized criterion to select $\widehat p \le P$. The Lasso logistic regression step for each $p$ has complexity $\mathcal{O}( (s_d(p)+q)^3 )$ when using coordinate descent, but this is dominated by the signature computation for moderate $p$.

In practice, we first tune the regularization parameter $\lambda$ via cross-validation using the concatenated feature vectors $\widetilde{S}_1(\wt{\bm{X}}_i, \bm{z}_i)$.  The parameter $\rho$ is fixed at $0.4$ across all experiments. To determine the penalty constant $C_{\mathrm{pen}}$ in the model selection criterion, we employ the \emph{slope heuristics} method \citep{birg2007penal, BaudryJean-Patrick2012Shoa}. Specifically, we plot the estimated order $\hat{p}$ against $C_{\mathrm{pen}}$ and identify the first sharp drop in $\hat{p}$; we then set $C_{\mathrm{pen}}$ to twice the corresponding value. For instance, in the top-left panel of Figure~\ref{fig:order_vard}, the first drop occurs at $C_{\mathrm{pen}} = 0.008$, prompting us to choose $C_{\mathrm{pen}} = 0.016$, yielding $\hat{p} = 7$. The grid of $C_{\mathrm{pen}}$ values is chosen to ensure that $\hat{p}$ can drop to zero.
Scalar covariates $\bm{z}_i$ are standardized prior to model fitting and seamlessly integrated with standardized signature features, thus preserving their interpretability within the regression framework. The PSLR algorithm enables efficient and scalable classification for high-dimensional functional data enriched with scalar information. Moreover, it achieves a favorable trade-off between flexibility and parsimony through its rigorous model selection mechanism. Source code is available at: \url{https://github.com/Drivergo-93589/PSLR}. 

\section{Experiments}
\label{sec:expe}

In this section, we evaluate the performance of the proposed \textsc{PSLR} model through extensive experiments on both synthetic and real-world datasets. We benchmark against two reduced versions of \textsc{PSLR} and three classical functional classification methods. Performance is assessed using classification accuracy and F1 score.

The two ablated versions of \textsc{PSLR} are: (i) \textsc{Signature}: \textsc{PSLR} with only path signature input, (ii) \textsc{Scalar}: \textsc{PSLR} with only scalar covariates input. These serve as ablation studies to isolate the contribution of each component. For classical functional classification baselines, we transform each component of the functional predictor into coefficients using either B-spline, Fourier basis expansions, or functional principal component analysis (FPCA). The resulting features are concatenated across dimensions and used in a logistic regression classifier. The number of basis functions or components is selected via cross-validation. Scalar covariates are included as additional features for all baseline models. For simplicity, we refer to these methods as \textsc{B-spline}, \textsc{Fourier}, and \textsc{FPCA}, respectively. 

\subsection{Simulation}
We design three simulation scenarios to assess the effectiveness of \textsc{PSLR} under different data characteristics: (i) varying the number of functional components ($d$),
(ii) varying the number of scalar covariates ($q$), and (iii) irregular sampling with missing or unevenly spaced time points. 

Synthetic datasets $\mathcal{D}(d,q) = \{(\bm{X}_i(t), \bm{z}_i, y_i)\}_{i=1}^n$ are generated as follows. Functional observations $\bm{X}_i(t) = (X_i^1(t), \ldots, X_i^d(t))$ are constructed via
\[
X_i^j(t) = f_j(t) + N_{i,j}^{(\varepsilon)}(t), \quad 1 \leq j \leq d,\ t \in [0,1],
\]
where $f_j(t)$ is a base signal (distinct across two classes), and $N_{i,j}^{(\varepsilon)}(t)$ is a sample from a $\varepsilon$-smoothed zero-mean Gaussian process (GP) with an exponential kernel (length-scale 1). The smoothed Gaussian process $N_{i,j}^{(\varepsilon)}(t)$ is defined as the convolution of a raw GP sample path $N_{i,j}^{\text{raw}}(t)$ with a Gaussian smoothing kernel:
\[
N_{i,j}^{(\varepsilon)}(t) = \int_0^1 N_{i,j}^{\text{raw}}(s) \, \phi_\varepsilon(t-s) \, ds,
\]
where $\phi_\varepsilon$ denotes a Gaussian kernel with bandwidth $\varepsilon > 0$. Here the smoothed Gaussian process satisfies the bounded variation condition required by our theoretical framework. For all subsequent experiments, we use a smoothing bandwidth of $\varepsilon = 0.02$. Definitions of $f_j(t)$ for $j=1,\ldots,8$ are provided in Table~\ref{tab:sim_fun}. We also define a ramp function
\[
g(t;a) = 
\begin{cases}
0, & 0 \leq t \leq a, \\
\frac{t - a}{1 - a}, & a < t \leq 1,
\end{cases}
\]
and employ standard densities $f_{N(\mu,\sigma^2)}(\cdot)$ and $f_{\mathrm{Beta}(\alpha,\beta)}(\cdot)$ as components. Each functional trajectory is sampled at $T = 100$ uniformly spaced time points. Figure~\ref{fig:data_sim} (Appendix~\ref{app:expe}) displays sample curves for two classes.
\begin{table}[htpb]
  \centering
  \caption{Basis functions for the 8-dimensional functional predictor (two-class simulation)}
  \resizebox{\linewidth}{!}{
  \begin{tabular}{ccccc}
    \toprule
      Label & $f_1(t)$ & $f_2(t)$ & $f_3(t)$ & $f_4(t)$ \\
    \midrule
    $y = 0$ & $\exp(\cos 2\pi t)/3$ & $1.6 t^{1/3}$ & $\log(0.5+\cos(\frac{\pi t^4}{2})) + 1$ & $\exp(\sin 2\pi t)/3$ \\
    $y = 1$ & $\exp(\cos 2\pi t^{1.05})/3$ & $\sqrt{3} t^{1/2}$ & $0.9\log(0.5+\cos(\frac{\pi t^3}{2})) + 1$ & $\exp(\sin 2\pi t^{1.05})/3$ \\
    \midrule
    Label & $f_5(t)$ & $f_6(t)$ & $f_7(t)$ & $f_8(t)$ \\
    \midrule
    $y = 0$ & $0.6f_{N(0,1)}(5t) + 0.4 f_{\text{Beta}(2,3)}(5t)$ +0.5& $t^4-g(t;0.55) +1$ & $-0.15t-0.2 t^{2} + 0.95$ & $\text{sigmoid}(20t-10)/3+0.5$\\
    $y = 1$ & $0.3f_{N(0.5,0.5)}(5t) + 0.3 f_{\text{Beta}(3,4)}(5t)+0.5$ & $t^5-g(t;0.45) +1$ & $-0.5t+ 0.2 t^{2} + 0.95$ & $\tanh(12t-6.3)/3+0.5$ \\
    \bottomrule
  \end{tabular}
  }
  \label{tab:sim_fun}
\end{table}
Scalar covariates $\bm{z}_i = (z_i^1, \ldots, z_i^q)$ are independently sampled from distributions $\mr{D}_j$ (distinct across two classes) detailed in Table~\ref{tab:sim_scalar}. Importantly, the true functional $F$ in Eq.~(5) is not explicitly specified in the simulation. 
Binary labels are assigned directly according to the joint latent class membership of both the functional trajectory 
$\mathbf{X}_i$ and the scalar covariates $\mathbf{z}_i$, without invoking a logistic link or an additive functional form. This generation scheme is identical to that of \cite{Tang2022model} and \cite{Jiang2026deepfrc}. This setup provides a stringent test of PSLR's ability to approximate unknown decision boundaries using truncated path 
signatures, while accounting for the joint predictive information from both data types.

\begin{table}[htpb]
  \centering
  \caption{Probability distributions used for generating simulated two-class scalar data}
  \resizebox{\linewidth}{!}{
  \begin{tabular}{ccccccccc}
    \toprule
      Label & $\mr{D}_1$ & $\mr{D}_2$ & $\mr{D}_3$ & $\mr{D}_4$ & $\mr{D}_5$ & $\mr{D}_6$ & $\mr{D}_7$ & $\mr{D}_8$ \\
    \midrule
    $y = 0$ & $U(1,2)$ & $N(0,1)$ & $\text{Exp}(2)$ & $\chi^2(0.1)$ & $ \text{logN}(0,1)$ & $\Gamma(2,2)$ & $\text{Beta}(2,3)$ & $\text{Bernoulli}(0.55)$\\
    $y = 1$ & $U(0.75,1.75)$ & $N(0.5,1)$ & $\text{Exp}(1)$ & $\chi^2(0.2)$ & $\text{logN}(0.25,1)$ & $\Gamma(3,2)$ & $\text{Beta}(3,2)$ & $\text{Bernoulli}(0.45)$ \\
    \bottomrule
  \end{tabular}
  }
  \label{tab:sim_scalar}
\end{table}
For all experiments, we simulate balanced datasets with $n = 1000$ and split into 80\% training and 20\% testing sets. Each configuration is repeated 50 times for statistical robustness.

\paragraph{Scenario 1: Varying Functional Dimensions.}
We consider $d \in \{1,2,4,8\}$ with $q = 3$ fixed, generating datasets $\mathcal{D}(d,3)$. Figure~\ref{fig:order_vard} shows the selected truncation orders for both \textsc{PSLR} and \textsc{Signature}. Two key observations emerge: (1) truncation orders decrease monotonically with dimension $d$ as the penalty term $\mathrm{pen}_n(p,3)$ increases; (2) owing to its reduced penalty term $\mathrm{pen}_n(p,0)$, the \textsc{Signature} method consistently achieves higher truncation orders than \textsc{PSLR} in all cases except at $d=2$ where they coincide.
\begin{figure}[htpb]
    \centering
    \includegraphics[width=1\linewidth]{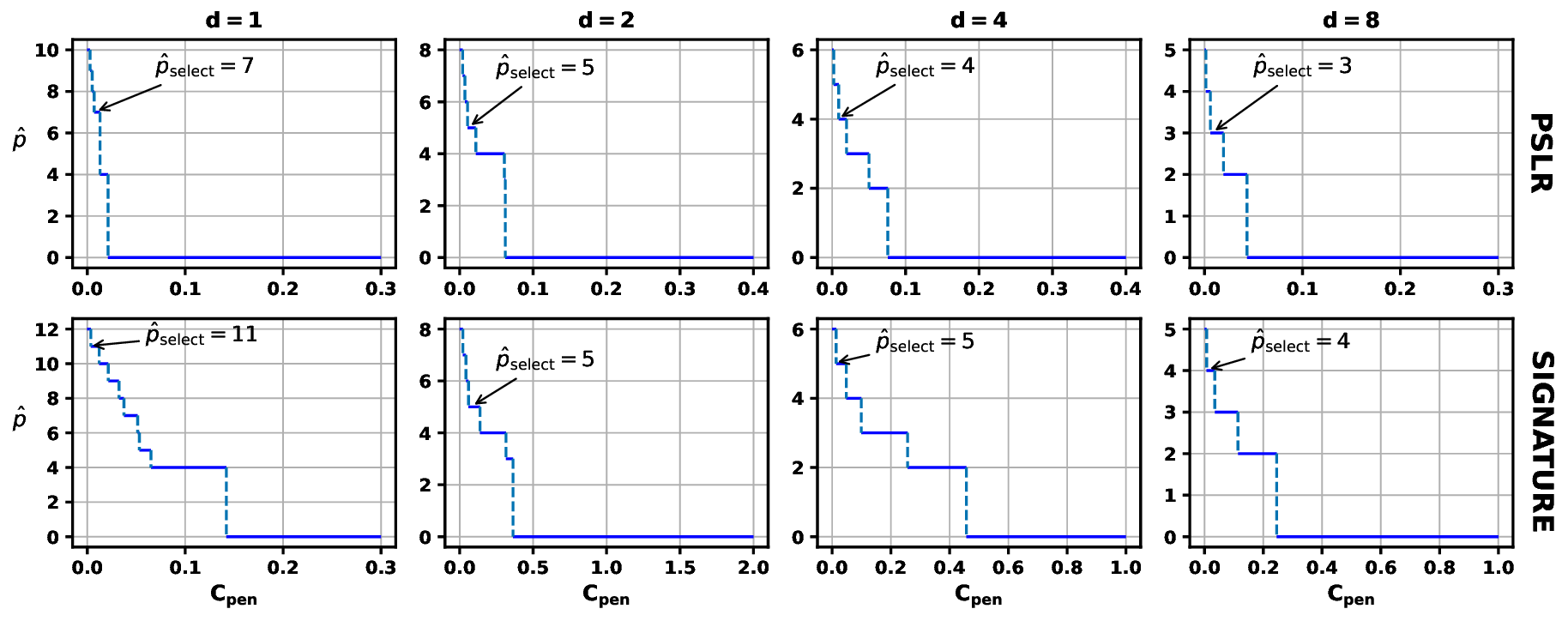}
    \caption{Truncation order selection for the PSLR (with fixed $q = 3$) and \textsc{Signature} across dimensions $d \in \{1,2,4,8\}$ in one representative dataset (Scenario 1).}
    \label{fig:order_vard}
    \vspace{-0.1in}
\end{figure}
\begin{figure}[htpb]
    \centering
    \includegraphics[width=0.85\linewidth]{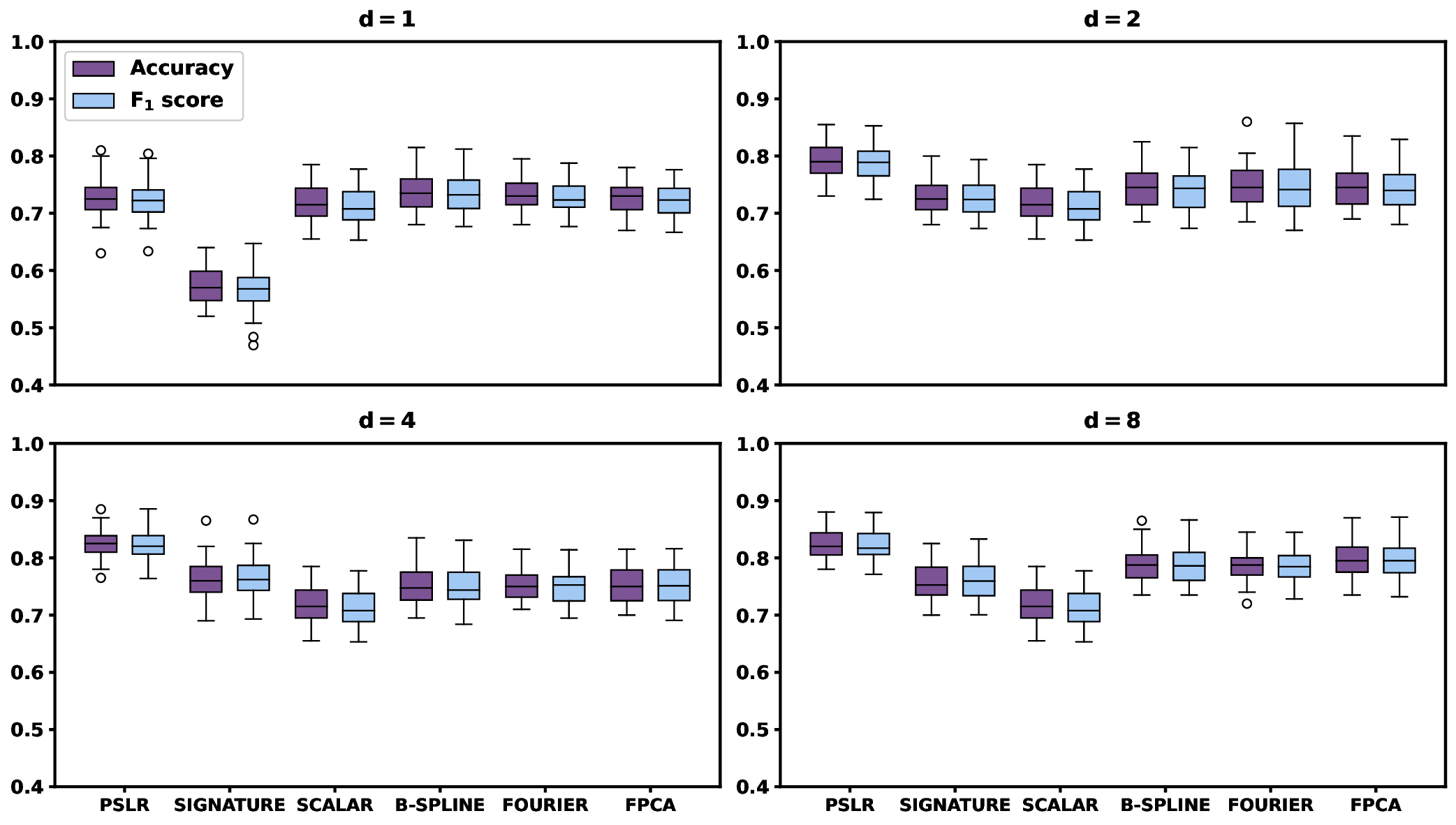}
    \caption{Classification performance for different models across dimensions $d \in \{1,2,4,8\}$ with fixed $q=3$ (Scenario 1). Boxplots summarize results from 50 simulated datasets.}
    \label{fig:result_vard}
\end{figure}

Figure~\ref{fig:result_vard} summarizes classification accuracy and F1 score across 50 replicates. The full \textsc{PSLR} model outperforms both \textsc{Signature} and \textsc{Scalar} in all settings, confirming the additive value of combining functional and scalar inputs. In high-dimensional cases ($d \geq 2$), \textsc{PSLR} significantly outperforms classical models due to its ability to capture inter-dimensional correlations without requiring subjective basis choices. In the univariate case ($d = 1$), \textsc{PSLR} remains competitive, highlighting its robustness.

\paragraph{Scenario 2: Varying Number of Scalar Covariates.}
We fix $d = 3$ and vary $q \in \{1,2,4,8\}$, generating datasets $\mathcal{D}(3,q)$. Figure~\ref{fig:order_varq} (Appendix~\ref{app:expe}) shows that the truncation order for \textsc{PSLR} decreases with increasing $q$, as the penalty $\mathrm{pen}_{n}(p,q)$ grows with $q$. In contrast, \textsc{Signature}'s truncation order remains constant since it ignores scalar features.

Classification results in Figure~\ref{fig:result_varq} indicate that \textsc{PSLR} consistently achieves the best performance, significantly outperforming both ablated variants and classical baselines. The inclusion of more scalar features improves performance across all models except \textsc{Signature}. This scenario illustrates that scalar covariates not only influence model complexity but also enhance predictive performance by regularizing the truncation order in \textsc{PSLR}.
\begin{figure}[htpb]
    \centering
    \includegraphics[width=0.85\linewidth]{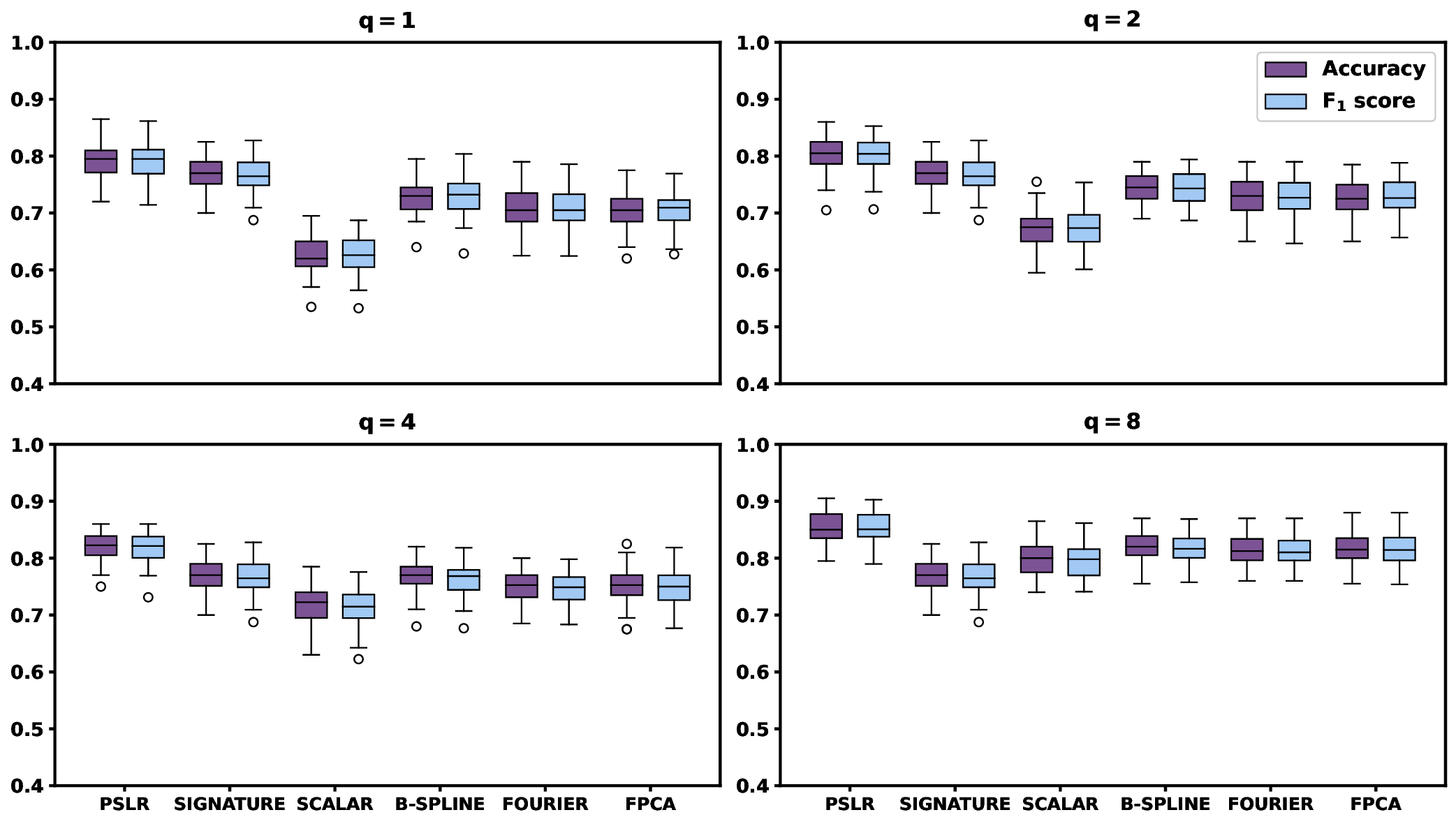}
    \caption{Classification performance for different models across numbers of scalar covariates $q \in \{1,2,4,8\}$ with fixed dimension $d = 3$ (Scenario 2). Boxplots summarize results from 50 simulated datasets.}
    \label{fig:result_varq}
\end{figure}

\paragraph{Scenario 3: Irregular Sampling.}
This scenario evaluates the robustness of \textsc{PSLR} and competing methods under irregular sampling of multi-dimensional functional covariates. We begin with the original datasets $\mathcal{D}(2,1)$, where the functional covariate dimension is $d = 2$ and the scalar covariate dimension is $q = 1$. Two types of irregular sampling are introduced: (a) randomly omitting observations with missing probabilities of $10\%$, $20\%$, and $30\%$ independently at each time point and for each functional dimension; and (b) perturbing the temporal grid to create unevenly spaced time points, defined by $t_k = \sum_{i=1}^k I_i / \sum_{i=1}^T I_i$, where $I_1 = 0$ and $I_k = 0.01 + |N_k|$ for $k = 2, \ldots, T$, with $N_k \sim \mathcal{N}(0.99, \sigma_T^2)$. We consider three levels of temporal scrambling by setting $\sigma_T \in \{0.1, 0.3, 0.5\}$. Figure~\ref{fig:result_timep} (Appendix~\ref{app:expe}) shows the estimated truncation order $\hat{p}$ across both irregular sampling schemes and the original data.

Figure~\ref{fig:result_robust} summarizes the classification performance of all methods (excluding ablated variants) across these datasets. As expected, \textsc{PSLR} consistently outperforms all baseline methods across data settings and maintains stable accuracy and F1 scores under both missing and uneven sampling conditions, due to the path signature’s ability to capture the global geometry of irregularly sampled trajectories. In contrast, classical approaches (\textsc{B-spline}, \textsc{Fourier}, and \textsc{FPCA}) show notable performance degradation, with decreasing mean accuracy as the missing rate grows (Figure~\ref{fig:result_robust}(a)) or as temporal distortion intensifies (Figure~\ref{fig:result_robust}(b)). This empirical finding is theoretically grounded (see Section~\ref{sec:irregular}): \textsc{PSLR}'s error is driven solely by the sampling grid and vanishes with refinement, whereas basis methods suffer from an irreducible structural bias that does not diminish with increased sampling frequency. These results highlight the sensitivity of basis-based models to irregular sampling and underscore the superior robustness and accuracy of \textsc{PSLR} in non-ideal sampling scenarios. For a detailed robustness analysis specific to this scenario, we refer the reader to Appendix~\ref{app:cc}.

\begin{figure}[htpb]
    \centering
    \begin{subfigure}[b]{0.95\textwidth}
        \centering
        \includegraphics[width=\textwidth]{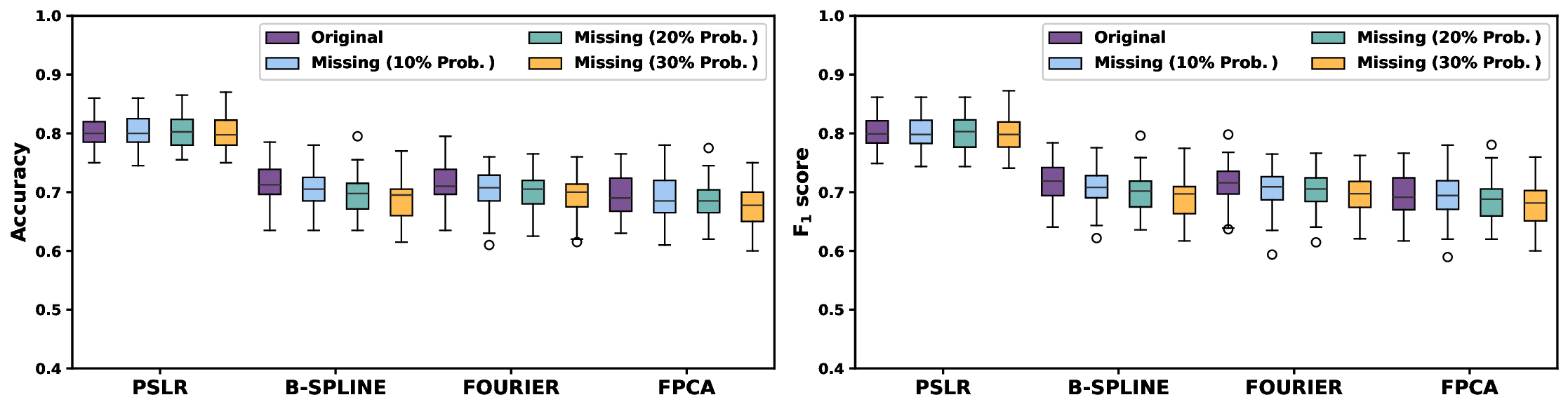}
        \vspace{-0.2in}
        \caption{Original vs. Missing Data (10\%, 20\%, 30\% Prob.)}
    \end{subfigure}
    \hfill
    \begin{subfigure}[b]{0.95\textwidth}
        \centering
        \includegraphics[width=\textwidth]{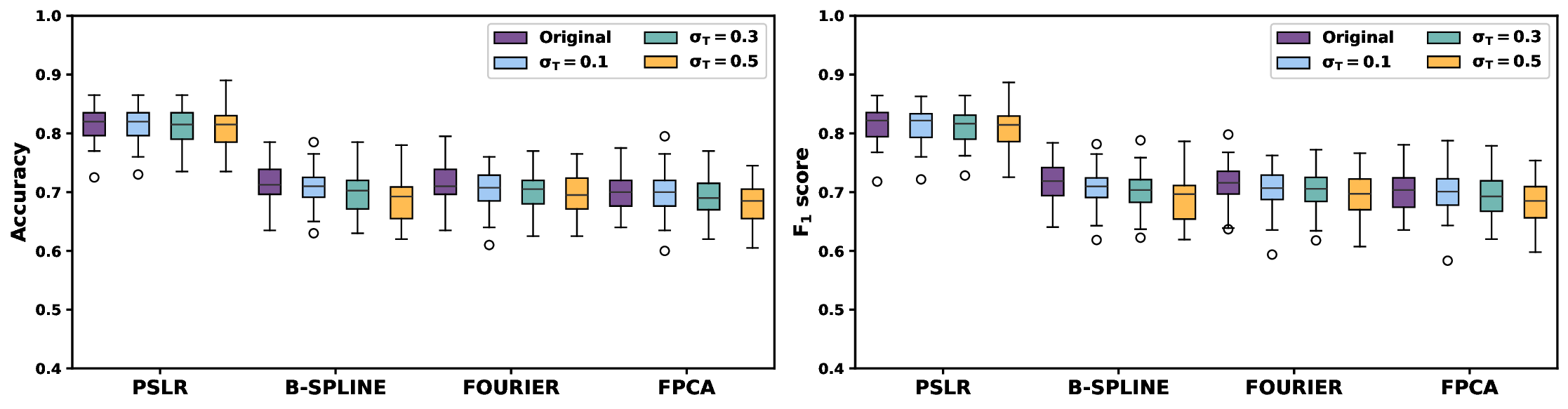}
        \vspace{-0.2in}
        \caption{Original vs. Unevenly Spaced Sampling ($\sigma_T = 0.1,0.3,0.5$)}
    \end{subfigure}
    \caption{Classification performance of different models on both the original data and its irregularly sampled versions (Scenario 3). Boxplots summarize results from 50 simulated datasets.}
    \label{fig:result_robust}
\end{figure}

These results across the three scenarios collectively demonstrate that the proposed \textsc{PSLR} model achieves superior classification performance under a variety of data settings. Its advantages arise from: (i) the expressive, basis-free nature of path signatures, which capture nonlinear and cross-channel dependencies; (ii) the seamless integration of scalar and functional covariates within a unified framework; and (iii) robustness to moderate irregularities in functional data through the extraction of stable, geometry-aware features.

\subsection{Application}
\label{sec:app}

In this section, we evaluate the proposed PSLR model and its baseline counterparts on three publicly available real-world datasets: the \href{https://physionet.org/content/gaitpdb/1.0.0/}{Gait in Parkinson's Disease Database}~\citep{Gold:2000}, the \href{https://www.kaggle.com/datasets/malekzadeh/motionsense-dataset/data}{MotionSense Dataset: Sensor Based Human Activity and Attribute Recognition}~\citep{Male:2019}, and the \href{https://doi.org/10.6084/m9.figshare.28806086}{Dataset of Clinical Gait Signals with Wearable Sensors from Healthy, Neurological, and Orthopedic Cohorts}~\citep{voisard2025dataset}. Due to limited sample sizes in these datasets, we adopt 20 random train-test splits (with 80\% training and 20\% testing) to ensure statistical robustness.

\paragraph{Gait Analysis in Parkinson’s Disease Using VGRF.}
The dataset comprises vertical ground reaction force (VGRF) measurements from 93 Parkinson’s disease (PD) patients (mean age: 66.3 years, 63\% male) and 73 age-matched healthy controls (mean age: 66.3 years, 55\% male), recorded at 100 Hz across four batches. We focus on one batch (35 PD, 29 controls) and analyze four representative VGRF channels (L1, R1, R6, TL) from 16 foot-embedded sensors (plus aggregate TL/TR channels). Time-normalized gait cycles ($t \in [0,1]$) exhibit kinematic-dependent sampling irregularity due to heterogeneous gait speeds (see Figure~\ref{fig:data_park} in Appendix~\ref{app:expe}). The binary classification task ($y=1$ for PD vs. $y=0$ for health control) incorporates scalar covariates: Age, Height, Time Up and Go (TUAG), and Gait Speed.

Figure~\ref{fig:order_parkmotion}(a) (Appendix~\ref{app:expe}) shows the selection of truncated order $\hat{p}$ for the PSLR and \textsc{Signature} models. The PSLR model selects $\hat{p}=3$. Figure~\ref{fig:result_parkinson} presents classification performance comparisons across all models. We draw the following conclusions:  
(i)~the \textsc{PSLR} model substantially outperforms classical baselines (\textsc{B-spline}, \textsc{Fourier}, \textsc{FPCA}), highlighting the effectiveness of path signatures in capturing high-dimensional functional information and their capability to address irregular sampling; (ii)~\textsc{PSLR} also significantly surpasses the \textsc{signature} and \textsc{scalar} ablations, demonstrating the synergistic benefits of jointly modeling functional and scalar covariates, which suggests that both predictor types play crucial roles in functional classification tasks.

\begin{figure}[htpb]
    \centering
    \includegraphics[width=0.65\textwidth]{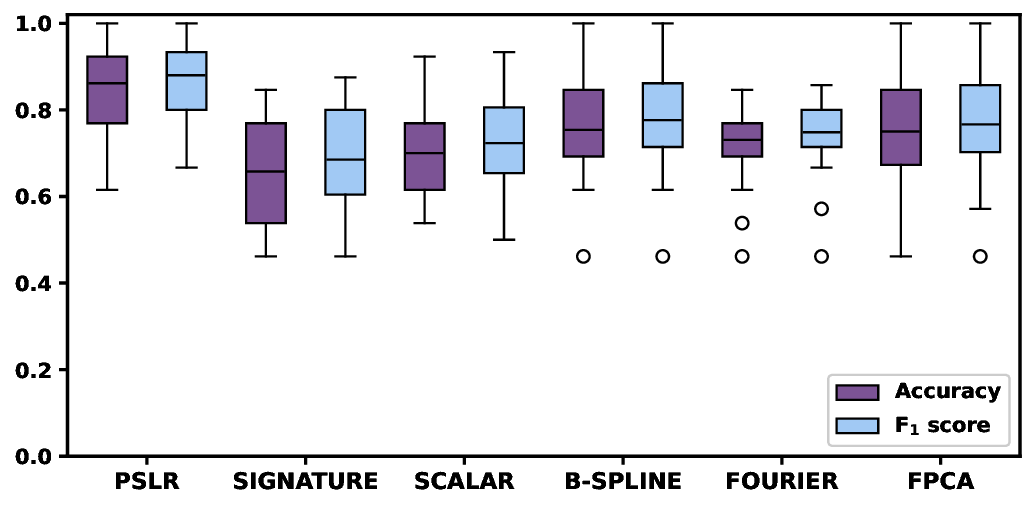}
    \caption{Boxplots shown classification peformance accross all the models from 20 random train-test splits for the Parkinson dataset.}
    \label{fig:result_parkinson}
    \vspace{-0.1in}
\end{figure}

Figure~\ref{fig:park_coef} presents the estimated coefficients $\hat{\bm{\theta}}_{\hat{p}}$ of the PSLR model and their interpretation. For scalar covariates, TUAG has the largest positive coefficient while Speed has the largest negative coefficient, aligning with clinical observations that longer TUAG and slower speed are symptomatic of PD. At order~1, the negative coefficients for $S^{(2)}(\wt{\bm{X}})$ and $S^{(4)}(\wt{\bm{X}})$ - which capture variation in the second and fourth channels (computed as last value minus initial value; see Section~\ref{sec:path} for geometric interpretation) - suggest that greater variability in right-foot (R1) and left-foot total (TL) vertical ground reaction forces is associated with reduced likelihood of Parkinson's disease (PD). This suggests reduced VGRF variability in specific foot regions (R1 and TL) likely reflects rigid and cautious gait characteristic of PD. At order~2, the dominant negative coefficient corresponds to $S^{(2,1)}(\wt{\bm{X}})$ (representing the interaction between R1 and L1 sensors), indicating that coordinated increases across both feet reduce PD probability. This suggests disrupted bilateral coordination (e.g., reduced R1–L1 synchrony) reflects the asymmetric motor control and instability in PD gait. Higher-order terms encode progressively more intricate interactions between foot dynamics. The PSLR framework leverages these subtle dynamics without requiring temporal alignment or handcrafted features, demonstrating both biomechanical plausibility and clinical relevance.

\begin{figure}[htpb]
    \centering
    \includegraphics[width=1\linewidth]{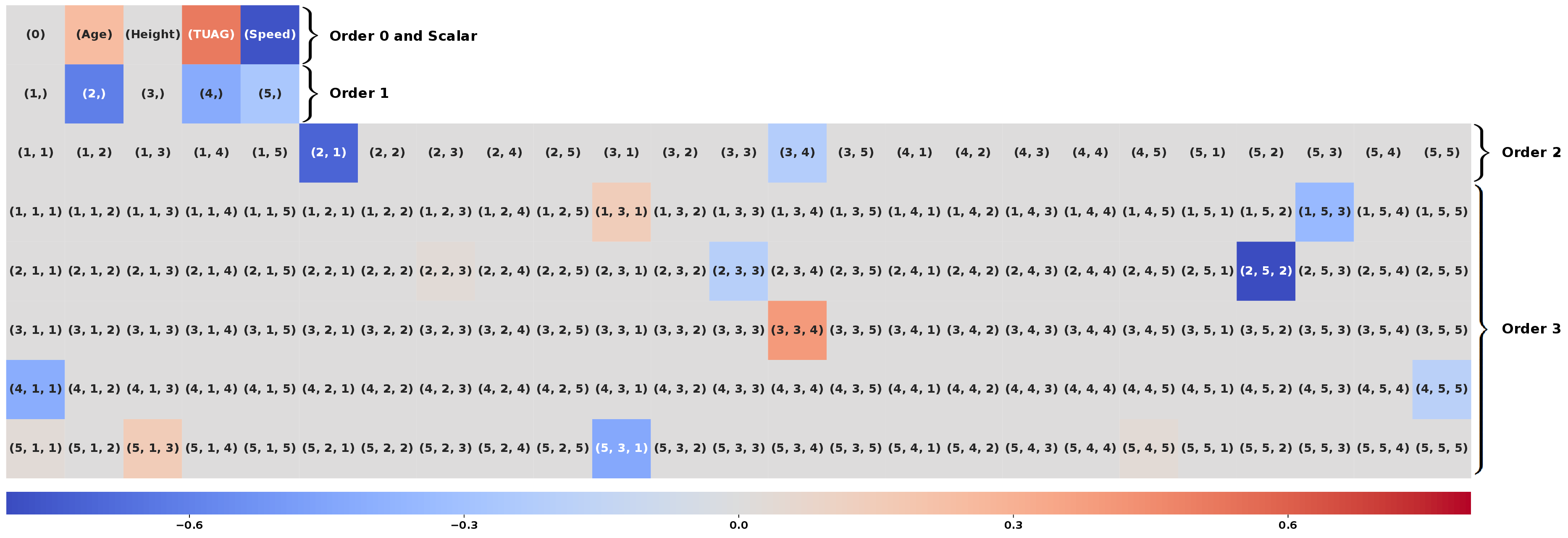}
\caption{Coefficient magnitudes from order-3 PSLR applied to Parkinson’s disease sensor data (L1/R1/R6/TL force sensors + time [channels 1–5]). Coefficients are organized hierarchically by signature order (vertical axis: Order 0 [intercept] = 1, Orders 1–3 = 5/25/125) with 4 scalar covariates aligned top-left.}
    \label{fig:park_coef}
\end{figure}

\paragraph{Human Activity Recognition Using Smartphone Motion Sensors.}
We further analyze the MotionSense dataset, comprising multivariate time-series signals recorded from smartphone sensors (iPhone 6s in front pocket) during six daily activities (walking, jogging, sitting, standing, upstairs, downstairs) performed by 24 participants. Data were collected via the iOS Core Motion API, capturing four motion modalities: attitude, gravity, user acceleration, and rotation rate. For our binary classification task ($y=1$ for walking vs. $y=0$ for jogging), we select subjects performing only one activity to ensure independence, resulting in a balanced dataset (16 training and 8 testing samples). We use gravity signals (Gx, Gy, Gz) as functional predictors, preprocessing the data by extracting one periodic cycle per subject (see Figure~\ref{fig:data_sensor} in Appendix~\ref{app:expe}). Time-normalized cycles ($t \in [0,1]$) exhibit sampling irregularity due to gait-speed variability. Scalar covariates include Age, Height, Weight, and Gender.

Figure~\ref{fig:order_parkmotion}(b) in Appendix~\ref{app:expe} shows that the selected truncation order for PSLR is $\hat{p} = 4$. Classification comparisons across all models are reported in Figure~\ref{fig:result_motionsense}. Our results demonstrate that: (i)~{\textsc{PSLR}} achieves superior classification performance (both in accuracy and F1 score) compared to classical baselines (\textsc{B-spline}, \textsc{Fourier}, and \textsc{FPCA}), confirming the expressive power of path signatures for irregularly sampled functional data; and (ii)~while outperforming the \textsc{Scalar} baseline (highlighting the value of functional information), {\textsc{PSLR}} also exhibits significantly higher performance mean and lower performance variance across splits than the \textsc{Signature} model, demonstrating enhanced both accuracy and stability from scalar covariates - collectively underscoring the complementary importance of both predictor types in functional classification tasks.

\begin{figure}[htpb]
    \centering
    \includegraphics[width=0.65\textwidth]{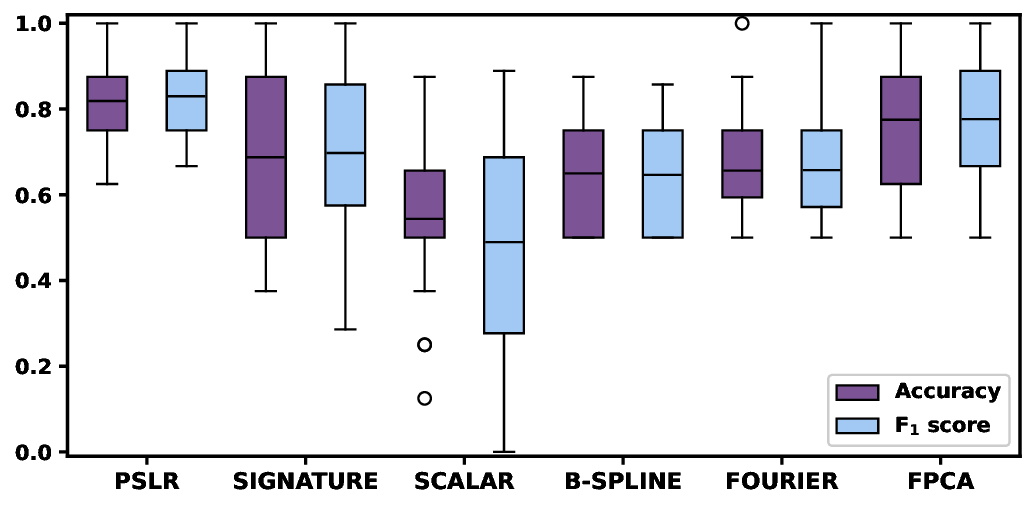}
    \caption{Boxplots shown classification performance accross all the models from 20 random train-test splits for the MotionSense dataset.}
    \label{fig:result_motionsense}
    \vspace{-0.1in}
\end{figure}

Figure~\ref{fig:motion_coef} illustrates the estimated coefficients $\hat{\bm{\theta}}_{\hat{p}}$ of PSLR. For scalar covariates, Weight has the largest positive coefficient, suggesting that heavier individuals are more likely to be predicted as walking rather than jogging — possibly due to differences in exertion. At order~2, the negative coefficient for $S^{(3,4)}(\wt{\bm{X}})$ (capturing the cumulative vertical gravity component) implies that increased Gz reduces the likelihood of walking, consistent with less vertical motion in walking than in jogging. Higher-order coefficients, such as $S^{(3,1,4,1)}(\wt{\bm{X}})$ (negative) and $S^{(4,3,2,3)}(\wt{\bm{X}})$ (positive), represent more complex multivariate dependencies and interactions among gravity axes.

\begin{figure}[htpb]
    \centering
    \includegraphics[width=1\linewidth]{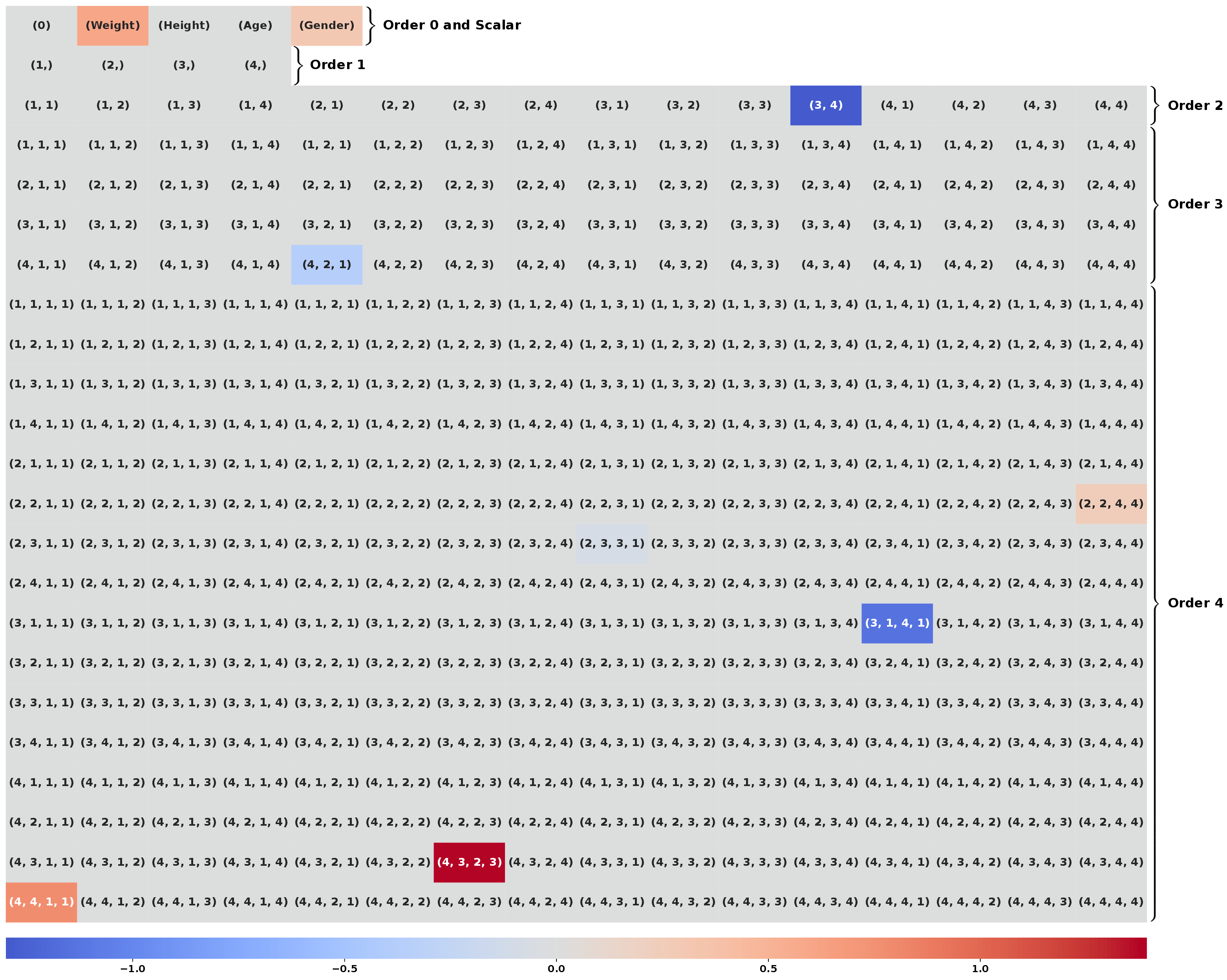}
\caption{Coefficient magnitudes from order-4 PSLR applied to MotionSense dataset (Gx/Gy/Gz sensor signals + time [channels 1–4]). Coefficients are organized hierarchically by signature order (vertical axis: Order 0 [intercept] = 1, Orders 1–4 = 4/16/24/256) with 4 scalar covariates aligned top-left.}
    \label{fig:motion_coef}
    \vspace{-0.1in}
\end{figure}

\paragraph{Multi-Cohort Clinical Gait Dataset from Wearable Sensors.}
The last dataset contains over 11 hours of gait time series data, collected from 1,356 gait trials involving 260 participants. Recordings were performed at 100 Hz using four inertial measurement units placed on the head (HE), lower back (LB), and the dorsal side of each foot (LF and RF). Each sensor location captures 9 signals: triaxial acceleration, triaxial free acceleration, and triaxial angular velocity (X, Y, Z axes). The participants are divided into three groups: 73 healthy subjects, 44 orthopedic patients (diagnosed with hip osteoarthritis, knee osteoarthritis, or anterior cruciate ligament injury), and 143 neurological patients (diagnosed with cerebrovascular accident, Parkinson's disease, chemotherapy‑induced peripheral neuropathy, or radiation‑induced leukoencephalopathy). For the present work, we consider a binary classification task involving the neurological (y = 1) and healthy groups (y = 0). The input features consist of the X‑axis acceleration signals from the four sensor placements (HE\_Acc\_X, LB\_Acc\_X, LF\_Acc\_X, RF\_Acc\_X) (see Figure~\ref{fig:data_gait} in Appendix~\ref{app:expe}) and three scalar covariates: Height, Weight, and Gender.

Figure~\ref{fig:order_gait}(Appendix~\ref{app:expe}) shows the selection of truncated order $\hat{p}$ for the PSLR and \textsc{Signature} models. The PSLR model selects $\hat{p}=4$. Figure~\ref{fig:result_gait} presents classification performance comparisons across all models. Figure~\ref{fig:Gait_coef} illustrates the estimated coefficients $\hat{\bm{\theta}}_{\hat{p}}$ of PSLR. We draw consistent conclusions regarding the effectiveness of path signatures in capturing high-dimensional functional information, the synergistic benefit of jointly modeling functional and scalar covariates, and the crucial roles that both predictor types play in functional classification tasks. Similar interpretability of the signature coefficients is also observed.

\begin{figure}[htpb]
    \centering
    \includegraphics[width=0.65\textwidth]{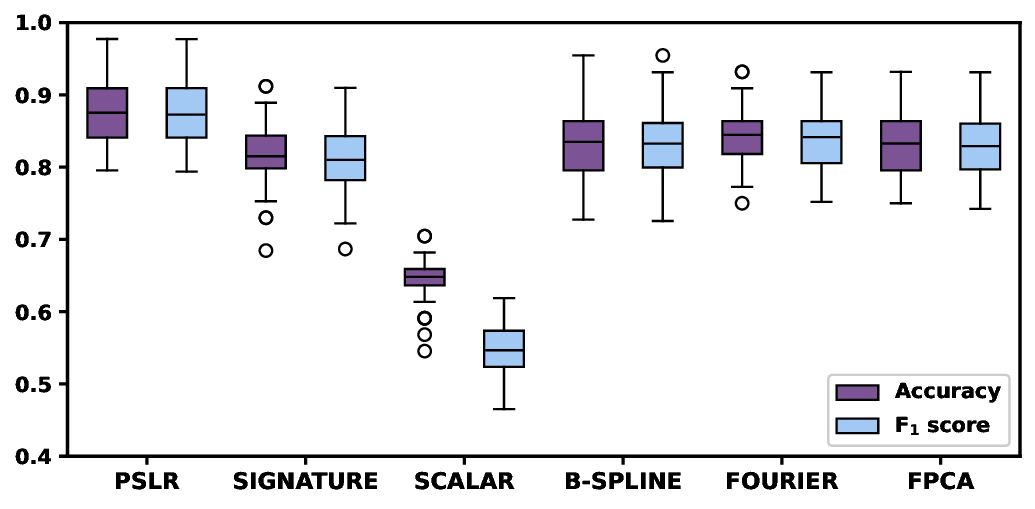}
    \caption{Boxplots shown classification performance across all the models from 50 random train-test splits for the Multi-Cohort Clinical Gait Dataset.}
    \label{fig:result_gait}
    \vspace{-0.1in}
\end{figure}

\paragraph{Interpretability of Signature Coefficients.} 
Unlike conventional functional regression approaches that rely on pointwise time effects, signature-based coefficients in PSLR capture global, geometric summaries of input trajectories. This feature enables robust modeling of inter-variable dependencies and irregular sampling, which are particularly useful in human activity analysis and biomechanics. For deeper interpretability of iterated integrals in dynamic systems, we refer the reader to \citet{giusti2020iterated} and the recent interpretability framework by \citet{ferm2022linear}.

\section{Discussion}
\label{sec:disc}

\paragraph{Signature Order Selection.}
Selecting the signature truncation order $p$ is pivotal to the performance of PSLR, as it governs the trade-off between approximation accuracy, model complexity, and computational cost. While fixed-order heuristics (e.g., $p \in \{2,\ldots,8\}$) are commonly used in practice, they lack theoretical justification and often lead to underfitting or overfitting. Standard alternatives such as (i) \emph{Information Criteria} (e.g., AIC/BIC) offer a model-based penalization scheme, but are ill-suited to the PSLR setting due to the exponential growth of the feature space with $p$, instability in estimating degrees of freedom under $\ell_1$-regularization, and the absence of finite-sample guarantees. (ii) \emph{Cross-Validation}, though empirically flexible, is computationally burdensome and statistically unstable for nested, high-dimensional signature spaces. In contrast, our proposed approach selects $p$ via a data-driven minimization of a \emph{penalized empirical risk} criterion, where the penalty $\mathrm{pen}_n(p,q)$ is carefully constructed to scale with the model's functional complexity ($\sqrt{s_d(p)}$) and scalar covariate contribution ($\sqrt{e^q}$). This regularization-based method enjoys several key advantages: it admits \emph{non-asymptotic theoretical guarantees} for consistency and risk convergence, scales efficiently in high dimensions, and requires no manual tuning of $p$. Empirically, Figure~\ref{fig:order_significance} shows that the PSLR-selected truncation order consistently achieves better classification performance than its immediate neighbors ($\hat{p}-1$ and $\hat{p}+1$) across all three real-world datasets. This observation not only confirms that PSLR reliably identifies the optimal order in practice, but also demonstrates that the fixed-order heuristics commonly employed by traditional methods are suboptimal choices.

\begin{figure}[htbp]
    \centering
    \begin{subfigure}[b]{0.32\textwidth}
        \centering
        \includegraphics[width=\textwidth]{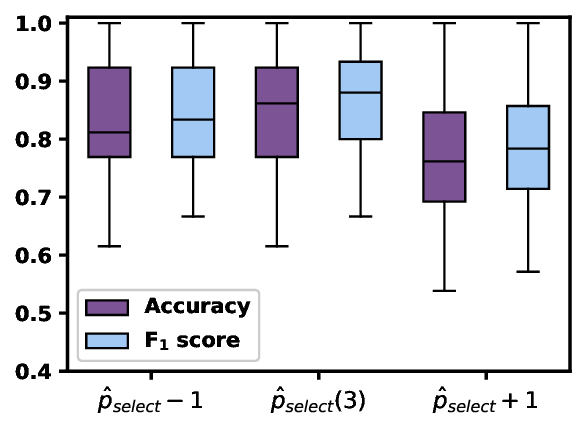}
        \vspace{-0.25in}
        \caption{Parkinson}
    \end{subfigure}
    \hfill
    \begin{subfigure}[b]{0.32\textwidth}
        \centering
        \includegraphics[width=\textwidth]{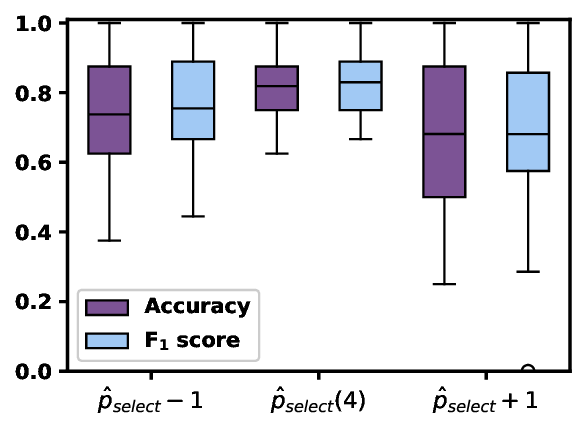}
        \vspace{-0.25in}
        \caption{MotionSense}
    \end{subfigure}
        \hfill
        \begin{subfigure}[b]{0.32\textwidth}
        \centering
        \includegraphics[width=\textwidth]{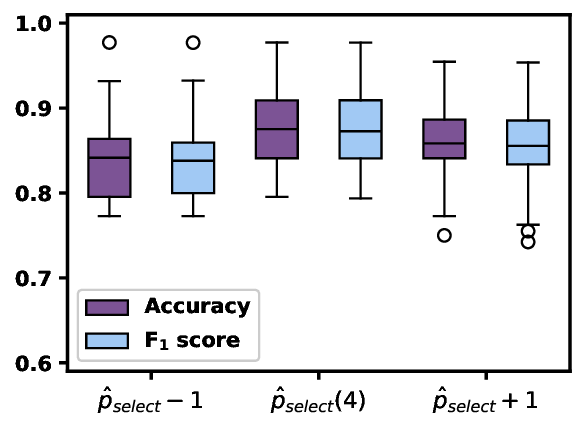}
        \vspace{-0.25in}
        \caption{Multi-Cohort}
    \end{subfigure}
    \caption{Boxplots illustrate the classification performance across three truncation order settings for PSLR, the optimally selected order, and its immediate neighbors (order-1 and order+1), from 20 random train-test splits for 3 real datasets.}
    \label{fig:order_significance}
\end{figure}

\paragraph{Advantages of PSLR.}
The proposed \emph{Path Signatures Logistic Regression (PSLR)} framework builds on the well-established benefits of path signatures—basis-free representation, cross-channel interaction capture, and stability under moderate sampling perturbations. We emphasize the distinctive contributions of our framework that extend beyond standard signature-based feature maps. First, PSLR introduces a semi-parametric additive structure $\mathrm{Logit} \big( \mathbb{P}(y=1 \mid \bm{X}, \bm{z}) \big) = F(\bm{X}) + \bm{z}^\top \bm{\gamma}$. This formulation preserves a direct linear interpretation for $\bm{\gamma}$. This is particularly valuable in biomedical applications where clinicians require interpretable effect sizes for covariates such as age or gait speed. Second, PSLR incorporates a fully data-driven mechanism for adaptive complexity control, where the truncation order $p$ is selected via a penalized empirical risk criterion rather than by ad-hoc heuristics. This ensures that the model complexity grows only as needed to capture the intrinsic structure of the data, without manual tuning. Third, beyond the qualitative stability of signatures, PSLR provides explicit non-asymptotic risk bounds and a general error propagation framework showing that reconstruction error vanishes with grid refinement, formally quantifying robustness under irregular sampling. This stands in contrast to classical basis-expansion methods, where representation bias and imputation dependence do not diminish with increased sampling frequency. Collectively, these features make PSLR a principled and theoretically grounded framework for semi-parametric functional classification, particularly suited to complex, high-dimensional, and temporally heterogeneous datasets where interpretability and adaptive complexity control are essential.

\paragraph{Limitations and Future Work.}
While PSLR provides a principled framework building on rough path theory, with a specific emphasis on adaptive order selection and semi-parametric structure, and offers strong theoretical guarantees and competitive empirical performance, several limitations suggest directions for future research. First, the cost of computing truncated signatures grows rapidly with path dimension $d$ and order $p$. More efficient strategies—such as sparse approximations, randomized projections, or kernelized representations—deserve further exploration, particularly in large-scale or streaming contexts. Second, the interpretability of higher-order terms remains limited, which is especially critical in biomedical applications where model transparency is essential. Advancing visualization techniques, domain-informed feature grouping, or attribution methods may help bridge this gap. Beyond binary classification, PSLR naturally extends to multi-class, ordinal, and survival outcomes, broadening its utility for longitudinal modeling and risk stratification. Incorporating prior knowledge, such as temporal alignment or anatomical structure, could further improve model fidelity. Integration with deep architectures—e.g., neural controlled differential equations (CDEs) or attention-based signature networks—may enhance flexibility and scalability in high-dimensional or noisy settings while preserving theoretical structure. Finally, adapting PSLR to non-Euclidean functional data (e.g., trajectories on manifolds or graphs) would further extend its applicability to complex, structured domains.
\clearpage
\bibliography{PSLR}
\bibliographystyle{plainnat}

\newpage
\appendix
\onecolumn

\renewcommand{\thefigure}{A\arabic{figure}}
\setcounter{figure}{0}
\renewcommand{\theequation}{A.\arabic{equation}}
\setcounter{equation}{0}

\section{Proof of Theorem \ref{thm1}}
\label{app:thm1}
\begin{proof} 
We begin by proving the existence of a continuous function $F^* : \mathcal{X} \to \mathbb{R}$ and a bounded coefficient vector $\gamma^* \in \mathbb{R}^q$ that jointly minimize the population risk associated with the semi-parametric logistic model~(\ref{eq:logit1}).

Let $D \subset \mathbb{R}^{d-1}$ be a compact domain and denote by $C(D)$ the space of continuous functions on $D$ endowed with the uniform norm $\|F\|_\infty = \sup_{x \in D} |F(x)|$. For fixed constants $C_F, C_{\bm{\gamma}} > 0$, we define the hypothesis space:
\[
  \Theta := \left\{(F, \bm{\gamma}) : F \in C(D),\, \|F\|_\infty \leq C_F,\ \bm{\gamma} \in \mathbb{R}^q,\ \|\bm{\gamma}\|_1 \leq C_{\bm{\gamma}} \right\}.
\]
We equip $\Theta$ with the product metric:
\[
  d\big((F,\bm{\gamma}), (F',\bm{\gamma}')\big) = \|F - F'\|_\infty + \|\bm{\gamma} - \bm{\gamma}'\|_1.
\]

\paragraph{Step 1: Compactness of $\Theta$.}
\begin{itemize}
  \item The $\ell_1$-ball $\{\bm{\gamma} \in \mathbb{R}^q : \|\bm{\gamma}\|_1 \leq C_{\bm{\gamma}\}}$ is compact as it is closed and bounded in finite-dimensional Euclidean space.
  \item The set $\{F \in C(D) : \|F\|_\infty \leq C_F\}$ is closed, uniformly bounded, and equicontinuous on the compact domain $D$. By the Arzelà–Ascoli theorem, this set is compact in the uniform topology.
  \item Hence, $\Theta$ is compact as a product of two compact metric spaces.
\end{itemize}

\paragraph{Step 2: Continuity of the Risk Function.}
We define the population risk as
\[
  \mathcal{R}(F, \bm{\gamma}) := \mathbb{E}_{(\bm{X},\bm{z},y)}\left[ \ell\left(y, F(\mathbf{X}) + \mathbf{z}^\top \bm{\gamma} \right) \right],
\]
where $\ell(y,\eta) = -y\eta + \log(1 + e^{\eta})$ is the logistic loss.

Let $(F,\bm{\gamma}), (F',\bm{\gamma}') \in \Theta$. For any realization $(\mathbf{X}, \mathbf{z}, y)$, and assuming $\|\mathbf{z}\|_1 \leq C_{\mathbf{z}}$ almost surely, we have
\[
\left|F(\mathbf{X}) + \mathbf{z}^\top \bm{\gamma} - F'(\mathbf{X}) - \mathbf{z}^\top \bm{\gamma}'\right| 
\leq \|F - F'\|_\infty + C_{\mathbf{z}} \|\bm{\gamma} - \bm{\gamma}'\|_1 =: \Delta + C_{\mathbf{z}} \delta.
\]
The logistic loss $\ell(y, \cdot)$ is 1-Lipschitz in $\eta$, so
\[
\left| \ell(y, F(\mathbf{X}) + \mathbf{z}^\top \bm{\gamma}) - \ell(y, F'(\mathbf{X}) + \mathbf{z}^\top \bm{\gamma}') \right| 
\leq \Delta + C_{\mathbf{z}} \delta.
\]
Taking expectations, we obtain
\[
\left|\mathcal{R}(F,\bm{\gamma}) - \mathcal{R}(F',\bm{\gamma}')\right| \leq (1 \vee C_{\mathbf{z}}) \cdot d((F,\bm{\gamma}), (F',\bm{\gamma}')),
\]
i.e., $\mathcal{R}$ is Lipschitz continuous on $\Theta$.

\paragraph{Step 3: Existence of a Minimizer.}
Since $\mathcal{R}$ is continuous on the compact set $\Theta$, the extreme value theorem implies the existence of a minimizer:
\[
(F^*, \bm{\gamma}^*) := \arg\min_{(F,\bm{\gamma}) \in \Theta} \mathcal{R}(F,\bm{\gamma}),
\quad \text{with } \mathcal{R}^* := \mathcal{R}(F^*, \bm{\gamma}^*).
\]

\paragraph{Step 4: Approximation by Truncated Signatures.}
By the universality property of truncated path signatures (see the last key property in Section \ref{sec:path}), for any $\varepsilon > 0$, there exists $p^* \in \mathbb{N}$ and $\bm{\beta}_{p^*}^* \in \mathbb{R}^{s_d(p^*)}$ such that
\[
\left|F^*(\mathbf{X}) - \langle \bm{\beta}_{p^*}^*, S_{p^*}(\wt{\mathbf{X}})\rangle \right| < \varepsilon \quad \text{a.s.}
\]
Let $\bm{\theta}_{p^*}^* := (\bm{\beta}_{p^*}^{*\top}, \bm{\gamma}^{*\top})^\top\in \mathbb{R}^{s_d(p^*)+q}$. Define the population risk of oracle path-signature model:
\[
  \mathcal{R}_{p^*}(\bm{\theta}_{p^*}^*) := \mathbb{E}_{(\bm{X},\bm{z},y)} \left[ \ell\left(y, \widetilde{\bm{S}}_{p^*}^\top \bm{\theta}_{p^*}^* \right) \right],
\]
where $\widetilde{\bm{S}}_{p^*} := (S_{p^*}(\wt{\mathbf{X}})^\top, \mathbf{z}^\top)^\top$. Then:
\begin{align}
\left| \mathcal{R}_{p^*}(\bm{\theta}_{p^*}^*) - \mathcal{R}^* \right|
&= \left| \mathbb{E}_{(\bm{X},\bm{z},y)} \left[ \ell\left( y, \widetilde{\bm{S}}_{p^*}^\top \bm{\theta}_{p^*}^* \right) - \ell\left( y, F^*(\mathbf{X}) + \mathbf{z}^\top \bm{\gamma}^* \right) \right] \right| \notag \\
&\leq \mathbb{E}_{(\bm{X},\bm{z},y)} \left| \ell\left( y, \widetilde{\bm{S}}_{p^*}^\top \bm{\theta}_{p^*}^* \right) - \ell\left( y, F^*(\mathbf{X}) + \mathbf{z}^\top \bm{\gamma}^* \right) \right| \notag \\
&\leq \mathbb{E}_{(\bm{X},\bm{z},y)} \left| \widetilde{\bm{S}}_{p^*}^\top \bm{\theta}_{p^*}^* - \left(F^*(\mathbf{X}) + \mathbf{z}^\top \bm{\gamma}^* \right) \right| \notag \\
&< \varepsilon.
\label{eq:thm1_approx}
\end{align}
This establishes Theorem~\ref{thm1}.
\end{proof}

\section{Proof of Theorem \ref{thm2}}
\label{app:thm2}
\begin{proof}
For a fixed truncation order $p$, define the minimal population risk:
\[
  L(p) := \inf_{\bm{\theta}_p \in B_{p,r}} \mathcal{R}_p(\bm{\theta}_p) = \mathcal{R}_p(\bm{\theta}_p^*),
\]
where $B_{p,r}$ is a compact convex parameter space, and $\bm{\theta}_p^*$ exists since $\mathcal{R}_p$ is convex in $\bm{\theta}_p$ and continuous.

We note that the Hessian of the logistic risk is given by:
\[
  \nabla^2 \mathcal{R}_p(\bm{\theta}_p) 
  = \mathbb{E}_{(\bm{X},\bm{z},y)} \left[ \sigma'(\widetilde{\bm{S}}_p^\top \bm{\theta}_p) \cdot \widetilde{\bm{S}}_p \widetilde{\bm{S}}_p^\top \right] \succeq 0,
\]
where $\sigma(\eta) = (1 + e^{-\eta})^{-1}$ and $\sigma'(\eta) = \sigma(\eta)(1 - \sigma(\eta)) \in (0, 1/4]$. Hence, $\mathcal{R}_p$ is convex, ensuring the existence of $\bm{\theta}_p^*$.

Moreover, the nested structure $B_{0,r} \subset B_{1,r} \subset \cdots \subset B_{p,r} \subset \cdots$ implies that $L(p)$ is non-increasing in $p$. By the approximation argument in Theorem~\ref{thm1}, we know that for any $\varepsilon^* >0$ there exist some $p^*$ and $\bm{\theta}_{p^*}^*$ such that $\left|\mathcal{R}_{p^*}(\bm{\theta}_{p^*}^*) - \mathcal{R}^*\right| < \varepsilon^* $. If $L(p)$ are strictly smaller for some $p > p^*$, i.e.,
\[
  \mathcal{R}_{p^*}(\bm{\theta}_{p^*}^*) - L(p) \geq \varepsilon^* > 0,
\]
then we would obtain
\[
  \left| \mathcal{R}_{p^*}(\bm{\theta}_{p^*}^*) - \mathcal{R}^* \right|= \mathcal{R}_{p^*}(\bm{\theta}_{p^*}^*)-L(p) + L(p) - \mathcal{R}^*
  \geq \varepsilon^* + L(p)-\mathcal{R}^*  \geq \varepsilon^*,
\]
which is a contradiction. Hence, $L(p)$ attains its minimum at $p^*$ and remains constant for all $p \geq p^*$. This concludes the proof of Theorem~\ref{thm2}.
\end{proof}

\section{Proof of Theorem \ref{thm3}}
\label{app:thm3}
Here we will make extensive use of the concentration results developed by \citet{vanHandel2014}, as well as key analytical techniques from \citet{ferm2022linear}. We focus on the centered empirical risk associated with path signatures truncated at order $p$. Specifically, for any $\bm{\theta}_p \in B_{p,r}$, we define
\begin{align}
Z_{p,n}(\bm{\theta}_p) \coloneqq \widehat{R}_{p,n}(\bm{\theta}_p) - R_p(\bm{\theta}_p),
\label{eq:Zmn}
\end{align}
where $\widehat{R}_{p,n}(\bm{\theta}_p)$ denotes the empirical risk computed from $n$ samples, and $R_p(\bm{\theta}_p)$ is its population analogue.

The next lemma shows that the process $\big(Z_{p,n}(\bm{\theta}_p)\big)_{\bm{\theta}_p \in B_{p,r}}$ is sub-Gaussian with respect to a suitably defined metric. This property allows us to apply a chaining tail inequality from \citet{vanHandel2014}, yielding a uniform high-probability bound on the deviations of $Z_{p,n}(\bm{\theta}_p)$ over the parameter set $B_{p,r}$. This concentration result serves as a central component in the proof of Theorem~\ref{thm3}.

\begin{lemma}\label{lem:A3} 
Under assumptions (A.2)–(A.3), for any $p \in \mathbb{N}$, the stochastic process $\big(Z_{p,n}(\bm{\theta}_p)\big)_{\bm{\theta}_p \in B_{p,r}}$ is subgaussian with respect to the semimetric
\begin{equation}\label{eq:D}
    D(\bm{\theta}_p, \bm{\eta}_p) = \frac{C}{\sqrt{n}} \|\bm{\theta}_p - \bm{\eta}_p \|, \quad \bm{\theta}_p, \bm{\eta}_p \in B_{p,r},
\end{equation}
where the constant $C$ is defined as
\begin{equation}\label{eq:C}
    C = 2 \left(C_{\bm{z}} + e^{C_{\bm{X}}+T} \right).
\end{equation}
\end{lemma}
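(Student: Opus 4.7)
The plan is to establish the subgaussian property by showing that $Z_{p,n}(\bm{\theta}_p)-Z_{p,n}(\bm{\eta}_p)$ is a centered average of bounded i.i.d.\ increments, and then to apply Hoeffding's lemma. Since the empirical and population risks are both linear aggregates of per-sample losses, I would first write
\[
Z_{p,n}(\bm{\theta}_p) - Z_{p,n}(\bm{\eta}_p) = \frac{1}{n}\sum_{i=1}^{n}\bigl(\Delta_i - \mathbb{E}[\Delta_i]\bigr),
\]
where $\Delta_i := \ell\bigl(y_i,\widetilde{\bm{S}}_{p,i}^\top \bm{\theta}_p\bigr) - \ell\bigl(y_i,\widetilde{\bm{S}}_{p,i}^\top \bm{\eta}_p\bigr)$ and $\ell(y,\eta)=-y\eta+\log(1+e^{\eta})$. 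Because $\partial_\eta \ell(y,\eta) = \sigma(\eta)-y \in [-1,1]$, the logistic loss is $1$-Lipschitz in $\eta$, so $|\Delta_i|\leq \bigl|\widetilde{\bm{S}}_{p,i}^{\top}(\bm{\theta}_p-\bm{\eta}_p)\bigr|\leq \|\widetilde{\bm{S}}_{p,i}\|\,\|\bm{\theta}_p-\bm{\eta}_p\|$.

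The second step is the uniform deterministic bound $\|\widetilde{\bm{S}}_{p,i}\| \leq C_{\bm{z}} + e^{C_{\bm{X}}+T} = C/2$. Using the product structure, $\|\widetilde{\bm{S}}_{p,i}\| \leq \|S_p(\widetilde{\bm{X}}_i)\|+\|\bm{z}_i\|$, and $\|\bm{z}_i\|<C_{\bm{z}}$ by (A.2). For the signature part, the triangle inequality applied to the partition sums gives $\|\widetilde{\bm{X}}_i\|_{\mathrm{TV}} \leq \|\bm{X}_i\|_{\mathrm{TV}} + T < C_{\bm{X}} + T$, and the classical factorial decay $\|S_k(\widetilde{\bm{X}}_i)\|_1 \leq \|\widetilde{\bm{X}}_i\|_{\mathrm{TV}}^{k}/k!$ at each tensor level implies
\[
\|S_p(\widetilde{\bm{X}}_i)\| \leq \|S_p(\widetilde{\bm{X}}_i)\|_1 \leq \sum_{k=0}^{p}\frac{(C_{\bm{X}}+T)^{k}}{k!} \leq e^{C_{\bm{X}}+T}.
\]
Consequently, each centered variable $\Delta_i-\mathbb{E}[\Delta_i]$ lies in an interval of length at most $C\|\bm{\theta}_p-\bm{\eta}_p\|$.

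Third, I would invoke Hoeffding's lemma on each bounded centered summand: for every $\lambda\in\mathbb{R}$,
\[
\mathbb{E}\!\left[\exp\!\left(\tfrac{\lambda}{n}(\Delta_i-\mathbb{E}[\Delta_i])\right)\right] \leq \exp\!\left(\frac{\lambda^{2} C^{2}\|\bm{\theta}_p-\bm{\eta}_p\|^{2}}{8n^{2}}\right).
\]
Multiplying over the $n$ independent samples yields
\[
\mathbb{E}\!\left[\exp\!\bigl(\lambda(Z_{p,n}(\bm{\theta}_p)-Z_{p,n}(\bm{\eta}_p))\bigr)\right] \leq \exp\!\left(\frac{\lambda^{2} C^{2}\|\bm{\theta}_p-\bm{\eta}_p\|^{2}}{8n}\right) \leq \exp\!\left(\frac{\lambda^{2}D(\bm{\theta}_p,\bm{\eta}_p)^{2}}{2}\right),
\]
which is precisely the MGF characterisation of subgaussianity with respect to the semimetric $D$ in~\eqref{eq:D}. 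The inclusion in the final inequality is comfortable because the stated $D$ overestimates the sharp Hoeffding constant by a factor of $2$.

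The main obstacle is the uniform signature norm bound in step two: one must carefully move from the total-variation control on $\bm{X}$ in (A.2) to a corresponding control on the time-augmented path $\widetilde{\bm{X}}$, and then leverage the tensor-level factorial decay in a way that produces a single-sum exponential bound compatible with the Euclidean norm on $\mathbb{R}^{s_d(p)}$. Once the constant $C=2(C_{\bm{z}}+e^{C_{\bm{X}}+T})$ is in hand, the rest is a routine application of Hoeffding's lemma; no measurability issues arise since $B_{p,r}$ is a finite-dimensional compact set and the process has continuous sample paths in $\bm{\theta}_p$.
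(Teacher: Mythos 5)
Your proposal is correct and follows essentially the same route as the paper's proof: Lipschitz control of the logistic loss, the factorial-decay bound $\|S_p(\widetilde{\bm{X}})\|\leq e^{\|\widetilde{\bm{X}}\|_{\mathrm{TV}}}\leq e^{C_{\bm{X}}+T}$, Hoeffding's lemma on each bounded centered increment, and tensorisation of the MGF over the $n$ independent samples. The only (cosmetic) difference is that you use the $1$-Lipschitzness of the full map $\eta\mapsto -y\eta+\log(1+e^{\eta})$ via its derivative $\sigma(\eta)-y$, whereas the paper bounds the two terms separately and picks up the factor $(|y|+1)\leq 2$ there instead, so your version lands strictly inside the stated subgaussian bound while the paper's matches it exactly.
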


\begin{proof}
By definition, for any $\bm{\theta}_p \in B_{p,r}$, we have $\mathbb{E}[Z_{p,n}(\bm{\theta}_p)] = 0$. Let the loss function $\ell_{(\wt{\bm{X}}, \bm{z}, y)}: B_{p,r} \to \mathbb{R}$ be defined as
\[
\ell_{(\wt{\bm{X}}, \bm{z}, y)}(\bm{\theta}_p) = -y \langle \bm{\theta}_p, \widetilde{\bm{S}}_p(\widetilde{\bm{X}}, \bm{z}) \rangle + \log\left(1 + e^{\langle \bm{\theta}_p, \widetilde{\bm{S}}_p(\widetilde{\bm{X}}, \bm{z}) \rangle} \right).
\]
We first show that $\ell_{(\wt{\bm{X}}, \bm{z}, y)}$ is $C$-Lipschitz. For any $\bm{\theta}_p, \bm{\eta}_p \in B_{p,r}$,
\begin{align}
\big|\ell_{(\wt{\bm{X}}, \bm{z}, y)}(\bm{\theta}_p) - \ell_{(\wt{\bm{X}}, \bm{z}, y)}(\bm{\eta}_p)\big|
&= \left| -y \langle \bm{\theta}_p, \widetilde{\bm{S}}_p(\widetilde{\bm{X}}, \bm{z}) \rangle + \log\left(1 + e^{\langle \bm{\theta}_p, \wt{\bm{S}}_p(\widetilde{\bm{X}}, \bm{z}) \rangle}\right) \right. \notag\\
&\quad + \left. y \langle \bm{\eta}_p, \widetilde{\bm{S}}_p(\widetilde{\bm{X}}, \bm{z}) \rangle - \log\left(1 + e^{\langle \bm{\eta}_p, \widetilde{\bm{S}}_p(\widetilde{\bm{X}}, \bm{z}) \rangle}\right) \right| \notag \\
&\leq |y \langle \bm{\theta}_p - \bm{\eta}_p, \widetilde{\bm{S}}_p(\widetilde{\bm{X}}, \bm{z}) \rangle| + \left|\log\left(1 + e^{\langle \bm{\theta}_p, \widetilde{\bm{S}}_p(\widetilde{\bm{X}}, \bm{z}) \rangle}\right) - \log\left(1 + e^{\langle \bm{\eta}_p, \widetilde{\bm{S}}_p(\widetilde{\bm{X}}, \bm{z}) \rangle}\right) \right| \notag \\
&\leq (|y| + 1) \| \widetilde{\bm{S}}_p(\widetilde{\bm{X}}, \bm{z}) \| \cdot \|\bm{\theta}_p - \bm{\eta}_p\| \notag \\
&\leq 2(\| \bm{z} \| + \| \widetilde{\bm{S}}_p(\widetilde{\bm{X}}) \|) \|\bm{\theta}_p - \bm{\eta}_p \| \notag \\
&\leq 2(C_{\bm{z}} + e^{C_{\bm{X}}+T}) \|\bm{\theta}_p - \bm{\eta}_p\| := C \|\bm{\theta}_p - \bm{\eta}_p \|. \label{eq:lipschitz}
\end{align}
The bound on the exponential term uses the fact that the function $f(t) = \log(1 + e^t)$ is 1-Lipschitz, since its derivative $f'(t) = \frac{e^t}{1 + e^t} \in (0,1)$. Applying Lemma 5.1 of \cite{lyons2014rough}, the norm of the truncated signature $\| \widetilde{\bm{S}}_p(\widetilde{\bm{X}}) \|)$ has the following bound:
\[
\| \wt{\bm{S}}_p(\wt{\bm{X}}) \|\leq \sum_{k=0}^p \frac{\|\wt{\bm{X}}\|_{TV}^k}{k!} \leq e^{\|\wt{\bm{X}}\|_{TV}} = e^{\|\wt{\bm{X}}\|_{TV}+\|t\|_{TV}} \leq e^{C_{\bm{X}}+T}.
\]
Hence, for any $\bm{\theta}_p, \bm{\eta}_p \in B_{p,r}$, the random variable
\[
Z_{\ell} := \ell_{(\wt{\bm{X}}, \bm{z}, y)}(\bm{\theta}_p) - \ell_{(\wt{\bm{X}}, \bm{z}, y)}(\bm{\eta}_p)
\]
is $C \|\bm{\theta}_p - \bm{\eta}_p\|$-subgaussian. By Hoeffding's lemma \citep{Levin:2016}, for all $\lambda \in \mathbb{R}$,
\[
\mathbb{E}\left[ \exp\left( \lambda (Z_{\ell} - \mathbb{E}[Z_{\ell}]) \right) \right] \leq \exp\left( \frac{\lambda^2 (2C \|\bm{\theta}_p - \bm{\eta}_p\|)^2}{8} \right).
\]
Define
\[
Z_{p,n}(\bm{\theta}_p) = \frac{1}{n} \sum_{i=1}^n \left(\ell_{(\wt{\bm{X}}_i, \bm{z}_i, y_i)}(\bm{\theta}_p) - \mathbb{E}[\ell_{(\wt{\bm{X}}_i, \bm{z}_i, y_i)}(\bm{\theta}_p)] \right).
\]
Then, by independence and applying the above subgaussianity to each summand,
\begin{align*}
\mathbb{E}\left[ \exp\left( \lambda (Z_{p,n}(\bm{\theta}_p) - Z_{p,n}(\bm{\eta}_p)) \right) \right]
&= \prod_{i=1}^n \mathbb{E}\left[ \exp\left( \frac{\lambda}{n} \left(Z_{\ell}^{(i)} - \mathbb{E}[Z_{\ell}^{(i)}] \right) \right) \right] \\
&\leq \exp\left( \frac{\lambda^2 C^2 \|\bm{\theta}_p - \bm{\eta}_p\|^2}{2n} \right) \\
&= \exp\left( \frac{\lambda^2 D(\bm{\theta}_p, \bm{\eta}_p)^2}{2} \right),
\end{align*}
where $D(\bm{\theta}_p, \bm{\eta}_p) = \frac{C}{\sqrt{n}} \|\bm{\theta}_p - \bm{\eta}_p\|$. Thus, the process $\big(Z_{p,n}(\bm{\theta}_p)\big)_{\bm{\theta}_p \in B_{p,r}}$ is subgaussian with respect to $D$.
\end{proof}

Now we derive a maximal tail inequality for the process $Z_{p,n}(\bm{\theta}_p)$.

\begin{proposition}\label{prop:A1} 
Under assumptions (A.2)–(A.3), for any $p \in \mathbb{N}$, $x > 0$, and any fixed $\bm{\theta}^0_p \in B_{p,r}$, the following bound holds:
\[
\mathbb{P}\left(\sup_{\bm{\theta}_p \in B_{p,r}} Z_{p,n}(\bm{\theta}_p) \geq 108 Cr \sqrt{\frac{s_d(p) + q}{n}} \sqrt{\pi} + Z_{p,n}(\bm{\theta}^0_p) + x \right) \leq 36 \exp\left(- \frac{x^2 n}{144 C^2 r^2} \right),
\]
where the constant $C$ is defined in equation~\emph{(\ref{eq:C})}.
\end{proposition}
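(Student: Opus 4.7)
The plan is to apply a generic chaining tail inequality for subgaussian processes, taking as input the subgaussianity already established in Lemma~\ref{lem:A3}. Concretely, I would invoke the uniform deviation bound from \citet{vanHandel2014} controlling a separable subgaussian process in terms of Dudley's entropy integral and the diameter of the index set: for any fixed $\bm{\theta}_p^0 \in T := B_{p,r}$,
\[
\mathbb{P}\Big(\sup_{\bm{\theta}_p \in T} \big(Z_{p,n}(\bm{\theta}_p) - Z_{p,n}(\bm{\theta}_p^0)\big) \geq c_1 \int_0^{\mathrm{diam}(T,D)} \sqrt{\log N(T, D, \epsilon)} \, d\epsilon + x\Big) \leq c_2 \exp\!\Big(-\frac{x^2}{c_3\,\mathrm{diam}(T,D)^2}\Big),
\]
where $(c_1,c_2,c_3)$ are explicit numerical constants. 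The metric $D$ is the one defined in~\eqref{eq:D}, so Lemma~\ref{lem:A3} supplies the subgaussian hypothesis needed to apply this theorem directly to $(Z_{p,n}(\bm{\theta}_p))_{\bm{\theta}_p \in B_{p,r}}$.

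The geometric inputs are then straightforward to compute. Since $\|\cdot\|_2 \leq \|\cdot\|_1$, the set $B_{p,r}$ sits inside the Euclidean ball of radius $r$ in $\mathbb{R}^{s_d(p)+q}$, so $\mathrm{diam}(B_{p,r}, D) \leq 2Cr/\sqrt{n}$. A standard volumetric covering of that enclosing ball gives $\log N(B_{p,r}, D, \epsilon) \leq (s_d(p)+q)\log(1 + 2Cr/(\epsilon\sqrt{n}))$. Substituting $v = \epsilon\sqrt{n}/(2Cr)$ reduces Dudley's integral to $\frac{2Cr}{\sqrt{n}}\sqrt{s_d(p)+q}\int_0^1 \sqrt{\log(1+1/v)}\, dv$, and a routine change of variable $v = e^{-u^2}$ evaluates the remaining integral at $\sqrt{\pi}/2$, producing the factor $\sqrt{\pi}$ that appears in the target bound.

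Assembling the pieces, the entropy-integral bound times $c_1$ yields a term of the form $c_1 Cr \sqrt{\pi}\sqrt{(s_d(p)+q)/n}$, while the right-hand side of the chaining inequality becomes $c_2 \exp(-x^2 n/(4 c_3 C^2 r^2))$ after substituting $\mathrm{diam}^2 \leq 4C^2r^2/n$. Matching $(c_1,c_2,4c_3)$ to $(108, 36, 144)$ reproduces the announced tail bound, with the free parameter $\bm{\theta}_p^0$ identified with the reference point in the statement.

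The main obstacle will be propagating the explicit numerical constants through the generic chaining step to land precisely on $(108, 36, 144)$. This requires using the explicit—not "up-to-constants"—form of van Handel's inequality, and being careful about whether its output controls $\sup(Z - Z(\bm{\theta}_p^0))$ one-sidedly or the two-sided oscillation, which affects the leading constant $c_2$. A secondary subtlety is that covering the $\ell_1$-ball by its enclosing Euclidean ball is conservative; sparsity-adapted covering numbers would give a tighter dimensional dependence, but the extra precision is unnecessary here since the target bound already scales as $\sqrt{s_d(p)+q}$.
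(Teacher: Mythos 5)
Your proposal follows essentially the same route as the paper's proof: both invoke the subgaussianity from Lemma~\ref{lem:A3}, apply the explicit chaining tail inequality from van Handel's notes with the reference point $\bm{\theta}_p^0$, bound the diameter by $2Cr/\sqrt{n}$ and the covering number volumetrically, and evaluate Dudley's integral by the same Gaussian change of variables to produce the $\sqrt{\pi}$ factor before matching the constants $(108,36,144)$. The only differences are cosmetic (your covering bound uses $\log(1+2Cr/(\epsilon\sqrt{n}))$ where the paper uses $(3Cr/(\sqrt{n}\epsilon))^{s_d(p)+q}$), so the argument is correct and matches the paper's.
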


\begin{proof}
By Lemma~\ref{lem:A3}, the process $\left(Z_{p,n}(\bm{\theta}_p)\right)_{\bm{\theta}_p \in B_{p,r}}$ is subgaussian with respect to the metric
\[
D(\bm{\theta}_p, \bm{\eta}_p) = \frac{C}{\sqrt{n}} \|\bm{\theta}_p - \bm{\eta}_p\|.
\]
Applying Theorem 5.29 of \citet{Levin:2016} to the process $Z_{p,n}$ over the metric space $(B_{p,r}; D)$ yields:
\[
\mathbb{P}\left( \sup_{\bm{\theta}_p \in B_{p,r}} Z_{p,n}(\bm{\theta}_p) - Z_{p,n}(\bm{\theta}^0_p) \geq C_0 \int_0^\infty \sqrt{\log N(\varepsilon, B_{p,r}, D)} \, \mathrm{d}\varepsilon + x \right) \leq C_0 \exp\left(- \frac{x^2}{C_0 \, \mathrm{diam}(B_{p,r})^2} \right),
\]
where we take the universal constant $C_0 = 36$ here, and $N(\varepsilon, B_{p,r}, D)$ denotes the $\varepsilon$-covering number of $B_{p,r}$ under the metric $D$. The diameter of $B_{p,r}$ with respect to $D$ satisfies:
\[
\mathrm{diam}(B_{p,r}) = \sup_{\bm{\theta}_p, \bm{\eta}_p \in B_{p,r}} D(\bm{\theta}_p, \bm{\eta}_p) = \frac{2Cr}{\sqrt{n}}.
\]
Using Lemma 5.13 of \citet{Levin:2016}, we relate the covering number under $D$ to that under the Euclidean norm:
\[
N(\varepsilon, B_{p,r}, D) = N\left( \frac{\sqrt{n}}{C} \varepsilon, B_{p,r}, \|\cdot\| \right),
\]
which implies
\[
N(\varepsilon, B_{p,r}, D) \leq \left( \frac{3Cr}{\sqrt{n} \varepsilon} \right)^{s_d(p) + q}, \quad \text{for } \varepsilon < \frac{Cr}{\sqrt{n}},
\]
and $N(\varepsilon, B_{p,r}, D) = 1$ otherwise. Consequently, the entropy integral can be bounded as follows:
\begin{align*}
\int_0^\infty \sqrt{\log N(\varepsilon, B_{p,r}, D)} \, \mathrm{d}\varepsilon 
&= \int_0^{\frac{Cr}{\sqrt{n}}} \sqrt{(s_d(p) + q) \log \left( \frac{3Cr}{\sqrt{n} \varepsilon} \right)} \, \mathrm{d}\varepsilon \\
&\leq 3Cr \sqrt{\frac{s_d(p) + q}{n}} \int_0^\infty 2x^2 e^{-x^2} \, \mathrm{d}x \\
&= 3Cr \sqrt{\frac{s_d(p) + q}{n}} \sqrt{\pi},
\end{align*}
where the second inequality follows from the change of variable $x = \sqrt{\log \left( \frac{2Cr}{\sqrt{n} \varepsilon} \right)}$.
Substituting back into the concentration inequality completes the proof.
\end{proof}

We now divide the proof of Theorem \ref{thm3} into two cases, as $\mathbb{P}(\widehat{p} \neq p^*) = \mathbb{P}(\widehat{p} > p^*) + \mathbb{P}(\widehat{p} < p^*)$. We first consider the case when $\widehat{p} > p^*$ in the following proposition.

\begin{proposition}\label{prop:A2}
    Let $0 < \rho < \frac{1}{2}$, and let $\mathrm{pen}_n(p, q)$ be defined as in Eq. (\ref{eq:penalty}). Let $n_1$ be the smallest integer satisfying
    \begin{equation}\label{eq:n1}
        n_1 \geq \left(\frac{432 \sqrt{\pi} C r \sqrt{s_d(p^* + 1) + q}}{C_{\mathrm{pen}} \sqrt{e^q} (\sqrt{s_d(p^* + 1)} - \sqrt{s_d(p^*)})}\right)^{\frac{1}{\frac{1}{2} - \rho}},
    \end{equation}
    Then, under assumptions (A.1)-(A.4), for any $p > p^*$ and $n \geq n_1$, we have
    \begin{equation}\label{eq:prob}
        \mathbb{P}(\widehat{p} = p) \leq 74 \exp\left(-c_0  \left(n^{1 - 2\rho} + s_d(p)\right)\right),
    \end{equation}
    where
    \begin{equation}\label{eq:C3}
        c_0 = \frac{C_{\mathrm{pen}}^2 d^{p^* + 1}e^q}{256 r C \left(36 r C + 1\right) s_d(p^* + 1)}.
    \end{equation}
\end{proposition}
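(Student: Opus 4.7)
The plan is to translate the selection event $\{\widehat{p}=p\}$ for $p>p^*$ into a uniform deviation for the centered empirical process $Z_{p,n}$, and then invoke the chaining inequality of Proposition~\ref{prop:A1}.

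First, I would unpack the definition of $\widehat{p}$. Because $\widehat{p}$ minimizes the penalized criterion, $\{\widehat{p}=p\}$ implies $\widehat{L}_n(p)+\mathrm{pen}_n(p,q)\leq\widehat{L}_n(p^*)+\mathrm{pen}_n(p^*,q)$. The key observation is that $\bm{\theta}_{p^*}^*\in B_{p^*,r}$ (Assumption A.3) and its zero-padded embedding lies in $B_{p,r}$ with identical inner product against $\widetilde{\bm{S}}_p$; hence $\widehat{R}_{p,n}(\bm{\theta}_{p^*}^*)=\widehat{R}_{p^*,n}(\bm{\theta}_{p^*}^*)$ and $Z_{p,n}(\bm{\theta}_{p^*}^*)=Z_{p^*,n}(\bm{\theta}_{p^*}^*)$. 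Using $\widehat{L}_n(p^*)\leq\widehat{R}_{p^*,n}(\bm{\theta}_{p^*}^*)$, the decomposition $\widehat{R}_{p,n}=R_p+Z_{p,n}$, and the consequence of Theorem~\ref{thm2} that $R_p(\widehat{\bm{\theta}}_p)\geq L(p)=L(p^*)=\widetilde{\mathcal{R}}^*$ for all $p\geq p^*$, the population pieces cancel and the event is absorbed into
\begin{equation*}
\sup_{\bm{\theta}_p\in B_{p,r}}\bigl(Z_{p,n}(\bm{\theta}_{p^*}^*)-Z_{p,n}(\bm{\theta}_p)\bigr)\geq\Delta_p,\qquad \Delta_p:=\mathrm{pen}_n(p,q)-\mathrm{pen}_n(p^*,q)>0.
\end{equation*}

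Second, I would apply Proposition~\ref{prop:A1} to the subgaussian process $-Z_{p,n}$ with anchor $\bm{\theta}_p^0=\bm{\theta}_{p^*}^*$; the anchor term drops out on the left and yields
\begin{equation*}
\mathbb{P}(\widehat{p}=p)\leq 36\exp\!\Bigl(-\tfrac{n\,x_p^2}{144\,C^2 r^2}\Bigr),\qquad x_p:=\Delta_p-108\,Cr\sqrt{\pi}\sqrt{(s_d(p)+q)/n}.
\end{equation*}
The lower bound on $n_1$ in Eq.~\eqref{eq:n1} is calibrated precisely so that for every $p>p^*$ and $n\geq n_1$ the chaining contribution is at most $\Delta_p/2$, leaving $x_p\geq\Delta_p/2$. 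Verifying this reduces to a one-variable monotonicity check showing that the ratio $\sqrt{s_d(p)+q}/(\sqrt{s_d(p)}-\sqrt{s_d(p^*)})$ attains its supremum over $p>p^*$ at $p=p^*+1$, which is why only the $p^*+1$ case appears in the definition of $n_1$.

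Third, the crux is to lower-bound $n\Delta_p^2/(576\,C^2 r^2)$ by $c_0\bigl(n^{1-2\rho}+s_d(p)\bigr)$ with $c_0$ as displayed. Writing $\Delta_p^2=C_{\mathrm{pen}}^2 e^q(\sqrt{s_d(p)}-\sqrt{s_d(p^*)})^2/n^{2\rho}$, this reduces to a two-term lower bound on $(\sqrt{s_d(p)}-\sqrt{s_d(p^*)})^2$. I would couple the identity $s_d(p^*+1)-s_d(p^*)=d^{p^*+1}$ with the elementary estimate $\sqrt{a}-\sqrt{b}\geq(a-b)/(2\sqrt{a})$ to produce a fixed floor $(\sqrt{s_d(p^*+1)}-\sqrt{s_d(p^*)})^2\geq d^{2(p^*+1)}/(4\,s_d(p^*+1))$ that by monotonicity carries to every $p>p^*$ and generates the $n^{1-2\rho}$ term, together with a proportional estimate $(\sqrt{s_d(p)}-\sqrt{s_d(p^*)})^2\gtrsim s_d(p)$ that kicks in once $s_d(p)\gg s_d(p^*)$ and generates the $s_d(p)$ term. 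A splitting in $p$ and absorption of loose constants then delivers the claimed prefactor $74$.

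The main obstacle is this final algebraic step: $\sqrt{s_d(p)}-\sqrt{s_d(p^*)}$ does not factor cleanly in $p$, so one must simultaneously extract a constant floor (for the $n^{1-2\rho}$ term) and a component scaling like $\sqrt{s_d(p)}$ (for the $s_d(p)$ term), while tracking constants sharply enough to match the explicit formula for $c_0$, whose numerator $d^{p^*+1}$ comes precisely from the "first excess" increment $s_d(p^*+1)-s_d(p^*)$. A secondary subtlety is that although the anchor identity $Z_{p,n}(\bm{\theta}_{p^*}^*)=Z_{p^*,n}(\bm{\theta}_{p^*}^*)$ makes the anchor term disappear cleanly from the chaining inequality, passing from the raw constant $36$ of Proposition~\ref{prop:A1} to the displayed $74$ requires an additional conservative union-bound step to cover the boundary regime where $\Delta_p/2$ only barely dominates the chaining bound.
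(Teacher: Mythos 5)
Your proposal is correct and follows essentially the same route as the paper: reduce the event $\{\widehat{p}=p\}$ to a uniform deviation of $Z_{p,n}$ over $B_{p,r}$ (your first step reconstructs Lemma~2 of \citet{ferm2022linear}, which the paper invokes as a black box), apply the chaining bound of Proposition~\ref{prop:A1}, calibrate $n_1$ so the penalty gap dominates the entropy term (your monotonicity check locating the worst case at $p=p^*+1$ is valid), and finish with the same algebra on $(\sqrt{s_d(p)}-\sqrt{s_d(p^*)})^2$ together with the product-to-sum step $n^{1-2\rho}s_d(p)\geq\tfrac12(n^{1-2\rho}+s_d(p))$. The only structural difference is that you fold the anchor $\bm{\theta}^*_{p^*}$ directly into the chaining inequality applied to $-Z_{p,n}$, which is cleaner than the paper's split (over $\pm Z_{p,n}$ plus a separate Hoeffding bound on $Z_{p,n}(\bm{\theta}^0_p)$) and already yields a prefactor of $36\leq 74$, so the extra union-bound step you worry about at the end is unnecessary.
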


\begin{proof}
    Theorems \ref{thm1} and \ref{thm2} guarantee the existence of \( p^* \). We now define
    \[
    u_{p,n} = \frac{1}{2} \left( \mathrm{pen}_n(p,q) - \mathrm{pen}_n(p^*,q) \right) = \frac{C_{\mathrm{pen}}}{2} \sqrt{e^q} n^{-\rho} \left( \sqrt{s_d(p)} - \sqrt{s_d(p^*)} \right).
    \]
    Since \( p \mapsto \mathrm{pen}_n(p,q) \) is increasing in \( p \), it is clear that \( u_{p,n} > 0 \) for any \( p > p^* \). From Lemma 2 of \cite{ferm2022linear}, we have the following bound for any \( p > p^* \):
    \begin{equation}\label{eq:lemma2}
    \mathbb{P}(\widehat{p} = p) \leq \mathbb{P} \left( 2 \sup_{\bm{\theta}_p \in B_{p,r}} \left| \widehat{R}_{p,n}(\bm{\theta}_p) - R_p(\bm{\theta}_p) \right| \geq \mathrm{pen}_n(p,q) - \mathrm{pen}_n(p^*,q) \right).
    \end{equation}
    We now proceed with the following decomposition:
    \begin{align}
    \mathbb{P}(\widehat{p} = p) \leq & \mathbb{P} \left( \sup_{\bm{\theta}_p \in B_{p,r}} \left| Z_{p,n}(\bm{\theta}_p) \right| > u_{p,n} \right) \notag \\
    = & \mathbb{P} \left( \sup_{\bm{\theta}_p \in B_{p,r}} Z_{p,n}(\bm{\theta}_p) > u_{p,n} \right) + \mathbb{P} \left( \sup_{\bm{\theta}_p \in B_{p,r}} \left( - Z_{p,n}(\bm{\theta}_p) \right) > u_{p,n} \right), \label{eq:prop2_0}
    \end{align}
    where we focus on the first term of the inequality. The second term can be handled analogously, as Proposition \ref{prop:A1} remains valid when \( Z_{p,n}(\bm{\theta}_p) \) is replaced by \( -Z_{p,n}(\bm{\theta}_p) \). Let \( \bm{\theta}^0_p \) denote a fixed point within \( B_{p,r} \), to be specified later. Then, we have
    \begin{align}
    \mathbb{P} \left( \sup_{\bm{\theta}_p \in B_{p,r}} Z_{p,n}(\bm{\theta}_p) > u_{p,n} \right) 
    = & \mathbb{P} \left( \sup_{\bm{\theta}_p \in B_{p,r}} Z_{p,n}(\bm{\theta}_p) > u_{p,n}, Z_{p,n}(\bm{\theta}^0_p) \leq \frac{u_{p,n}}{2} \right) \notag \\
    & + \mathbb{P} \left( \sup_{\bm{\theta}_p \in B_{p,r}} Z_{p,n}(\bm{\theta}_p) > u_{p,n}, Z_{p,n}(\bm{\theta}^0_p) > \frac{u_{p,n}}{2} \right) \notag \\
    \leq & \mathbb{P} \left( \sup_{\bm{\theta}_p \in B_{p,r}} Z_{p,n}(\bm{\theta}_p) > \frac{u_{p,n}}{2} + Z_{p,n}(\bm{\theta}^0_p) \right) + \mathbb{P} \left( Z_{p,n}(\bm{\theta}^0_p) > \frac{u_{p,n}}{2} \right). \notag
    \end{align}
    We deal with each term separately. The first part is handled by Proposition \ref{prop:A1}. To ensure that the quantity \( \frac{u_{p,n}}{2} - 108 C r \sqrt{\frac{\pi (s_d(p) + q)}{n}} \) is positive, we compute
    \begin{align}
    \frac{u_{p,n}}{2} - 108 C r \sqrt{\frac{\pi (s_d(p) + q)}{n}} 
    = & \frac{C_{\mathrm{pen}}}{2} n^{-\rho} \sqrt{e^q} \left( \sqrt{s_d(p)} - \sqrt{s_d(p^*)} \right) - 108 C r \sqrt{\frac{\pi (s_d(p) + q)}{n}} \notag \\
    = & \frac{C_{\mathrm{pen}}}{2} n^{-\rho} \sqrt{e^q} \left( \sqrt{s_d(p)} - \sqrt{s_d(p^*)} \right) - 108 C r \sqrt{\frac{\pi (s_d(p) + q)}{n}} \notag \\
    = & \sqrt{s_d(p)} n^{-\rho} \sqrt{e^q} \frac{C_{\mathrm{pen}}}{2} \left( 1 - \sqrt{\frac{s_d(p^*)}{s_d(p)}} - \frac{2 \times 108 \sqrt{\pi (s_d(p) + q)} C r}{C_{\mathrm{pen}} \sqrt{e^q} \sqrt{s_d(p)}} n^{\rho - \frac{1}{2}} \right) \notag \\
    \geq & \sqrt{s_d(p)} n^{-\rho} \sqrt{e^q} \frac{C_{\mathrm{pen}}}{2} \left( 1 - \sqrt{\frac{s_d(p^*)}{s_d(p^* + 1)}} - \frac{216 \sqrt{\pi (s_d(p) + q)} C r}{C_{\mathrm{pen}} \sqrt{e^q} \sqrt{s_d(p^* + 1)}} n^{\rho - \frac{1}{2}} \right). \notag
    \end{align}
    Let $n_1 \in \mathbb{N}$ be such that
    \begin{equation}\notag
        1 - \sqrt{\frac{s_d(p^*)}{s_d(p^* + 1)}} - \frac{216 \sqrt{\pi (s_d(p) + q)} C r}{C_{\mathrm{pen}} \sqrt{e^{q}} \sqrt{s_d(p^* + 1)}} n_1^{\rho - \frac{1}{2}} > \frac{1}{2} \left( 1 - \sqrt{\frac{s_d(p^*)}{s_d(p^* + 1)}} \right),
    \end{equation}
    which implies
    \begin{equation}\notag
        n_1 > \left( \frac{432 \sqrt{\pi} C r \sqrt{s_d(p^* + 1) + q}}{C_{\mathrm{pen}} \sqrt{e^{q}} (\sqrt{s_d(p^* + 1)} - \sqrt{s_d(p^*)})} \right)^{\frac{1}{\frac{1}{2} - \rho}}.
    \end{equation}
    Then for any $n \geq n_1$, we have
    \begin{equation}\notag
        \frac{u_{p,n}}{2} - 108 C r \sqrt{\frac{\pi (s_d(p) + q)}{n}} \geq \sqrt{s_d(p)} n^{-\rho} \sqrt{e^{q}} \frac{C_{\mathrm{pen}}}{4} \left( 1 - \sqrt{\frac{s_d(p^*)}{s_d(p^* + 1)}} \right) > 0.
    \end{equation}
    Hence, applying Proposition \ref{prop:A1} to $x = \frac{u_{p,n}}{2} - 108 C r \sqrt{\frac{\pi (s_d(p) + q)}{n}}$, we obtain for $n \geq n_1$:
    \begin{align}
        \mathbb{P}\left( \sup_{\bm{\theta}_p \in B_{p,r}} Z_{p,n}(\bm{\theta}_p) > \frac{u_{p,n}}{2} + Z_{p,n}(\bm{\theta}^0_p) \right) & \leq 36 \exp\left( -\frac{n}{144 C^2 r^2} \left( \frac{u_{p,n}}{2} - 108 C r \sqrt{\frac{\pi s_d(p)}{n}} \right)^2 \right) \notag \\
        & \leq 36 \exp\left( -\frac{s_d(p) n^{1 - 2\rho} e^{q} C_{\mathrm{pen}}^2}{144 C^2 r^2 \times 16} \left( 1 - \sqrt{\frac{s_d(p^*)}{s_d(p^*+1)}} \right)^2 \right) \notag \\
        &= 36 \exp\left( -\kappa_1 s_d(p) n^{1 - 2\rho} \right), \label{eq:prop2_1}
    \end{align}
    where
    \[
        \kappa_1 = \frac{C_{\mathrm{pen}}^2e^q}{2304 C^2 r^2} \left( 1 - \sqrt{\frac{s_d(p^*)}{s_d(p^*+1)}} \right)^2.
    \]
    Now we turn to the second part of the inequality in Eq.~(\ref{eq:prop2_0}). Since \( |Z_{p,n}(\bm{\theta}^0_{p})| \leq C \| \bm{\theta}^0_p \| \), by Hoeffding's inequality, for \( n \geq n_1 \), we have:
    \begin{align}
        \mathbb{P}\left( Z_{p,n}(\bm{\theta}^0_p) > \frac{u_{p,n}}{2} \right) 
        & \leq \exp\left( -\frac{n u_{p,n}^2}{8 C \|\bm{\theta}^0_p\|} \right) \notag \\
        & = \exp\left( -\frac{n^{1 - 2\rho} e^{q} C_{\mathrm{pen}}^2 \left( \sqrt{s_d(p)} - \sqrt{s_d(p^*)} \right)^2}{32 C \|\bm{\theta}^0_p\|} \right) \notag \\
        & \leq \exp\left( -\frac{n^{1 - 2\rho} e^{q} C_{\mathrm{pen}}^2 s_d(p)}{32 C \|\bm{\theta}^0_p\|} \left( 1 - \sqrt{\frac{s_d(p^*)}{s_d(p^* + 1)}} \right)^2 \right) \notag \\
        & = \exp\left( -\kappa_2 n^{1 - 2\rho} s_d(p) \right), \label{eq:prop2_2}
    \end{align}
    where
    \[
        \kappa_2 = \frac{C_{\mathrm{pen}}^2e^q}{32 C \|\bm{\theta}^0_p\|} \left( 1 - \sqrt{\frac{s_d(p^*)}{s_d(p^* + 1)}} \right)^2.
    \]
    Combining Eqs.~(\ref{eq:prop2_1}) and (\ref{eq:prop2_2}), we obtain:
    \begin{align}
        \mathbb{P}\left( \sup_{\bm{\theta}_p \in B_{p,r}} Z_{p,n}(\bm{\theta}_p) > u_{p,n} \right) 
        & \leq 36 \exp\left( -\kappa_1  n^{1 - 2\rho}  s_d(p) \right) + \exp\left( -\kappa_2  n^{1 - 2\rho} s_d(p) \right) \notag \\
        & \leq 37 \exp\left( -\kappa_3  n^{1 - 2\rho}  s_d(p) \right) \notag \\
        & \leq 37 \exp\left( -\frac{\kappa_3}{2}  \left( n^{1 - 2\rho} + s_d(p) \right) \right), \notag
    \end{align}
    where \( \kappa_3 = \min(\kappa_1, \kappa_2) \). The same proof works for the process \( -Z_{p,n}(\bm{\theta}_p) \), so we have:
    \[
        \mathbb{P}(\widehat{p} = p) \leq 2 \times 37 \exp\left( -\frac{\kappa_3}{2}  \left( n^{1 - 2\rho} + s_d(p) \right) \right).
    \]
    We are now left with the task of choosing an optimal \( \bm{\theta}^0_p \). Since
    \[
        \kappa_3 = \min(\kappa_1, \kappa_2) = \frac{C_{\mathrm{pen}}^2e^q}{32} \left( 1 - \sqrt{\frac{s_d(p^*)}{s_d(p^* + 1)}} \right)^2 \min \left( \frac{1}{72 C^2 r^2}, \frac{1}{C \|\bm{\theta}^0_p\|} \right),
    \]
    and since \( \bm{\theta}^0_p \in B_{p,r} \), \( \|\bm{\theta}^0_p\| \leq r \), we have:
    \[
        \min \left( \frac{1}{72 C^2 r^2}, \frac{1}{C \|\bm{\theta}^0_p\|} \right) \geq \frac{1}{72 C^2 r^2 + C r}.
    \]
    Noting that
    \[
        \sqrt{s_d(p^* + 1)} - \sqrt{s_d(p^*)} = \sqrt{d^{p^* + 1} + s_d(p^*)} - \sqrt{s_d(p^*)} \geq \sqrt{\frac{d^{p^* + 1}}{2}},
    \]
    we define
    \[
        c_0 = \frac{1}{2} \times \frac{C_{\mathrm{pen}}^2 d^{p^* + 1}e^q}{64 s_d(p^* + 1) (72 C^2 r^2 + C r)},
    \]
    which completes the proof.
\end{proof}

Before we treat the case \( p < p^* \), we first need to establish a rate of convergence for \( \widehat{L_n} \), which can be obtained using similar arguments to those in the previous proof. The following proposition provides the result.

\begin{proposition}\label{prop:A3}
    For any \( \epsilon > 0 \), \( p \in \mathbb{N} \), let \( n_2 \in \mathbb{N} \) be the smallest integer such that
    \begin{equation}\label{eq:prop3_1}
        n_2 \geq \frac{432^2 C^2 \pi r^2 (s_d(p) + q)}{\varepsilon^2}.
    \end{equation}
    Then, for any \( n \geq n_2 \),
    \[
    \mathbb{P} \left( |\widehat{L_n}(p) - L(p)| > \varepsilon \right) \leq 74 \exp \left( - c_5 n \varepsilon^2 \right),
    \]
    where \( c_5 \) is defined as
    \begin{equation}\label{eq:prop3_2}
        c_5 = \frac{1}{8 r (288 C^2 r + C)}.
    \end{equation}
\end{proposition}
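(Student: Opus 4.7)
The plan is to reduce the two-sided deviation $|\widehat L_n(p) - L(p)|$ to a uniform deviation of the centered process $Z_{p,n}$ and then invoke Proposition~\ref{prop:A1}, mirroring the template of Proposition~\ref{prop:A2} but with the deterministic threshold $\varepsilon$ in place of the penalty gap $u_{p,n}$. The starting observation is the elementary sup--min inequality: since $\widehat L_n(p)$ and $L(p)$ are minima of $\widehat R_{p,n}$ and $R_p$ over the \emph{same} compact set $B_{p,r}$,
\[
|\widehat L_n(p) - L(p)| \;\leq\; \sup_{\bm{\theta}_p \in B_{p,r}} |Z_{p,n}(\bm{\theta}_p)|,
\]
so controlling the one-sided suprema of $Z_{p,n}$ and $-Z_{p,n}$ (both sub-Gaussian with respect to the same semimetric $D$ by Lemma~\ref{lem:A3}) and union-bounding already accounts for the factor $2$ in the final constant $74 = 2 \times 37$.

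For a single side I would fix an anchor $\bm{\theta}^0_p \in B_{p,r}$ and decompose
\[
\mathbb{P}\bigl(\sup_{\bm{\theta}_p} Z_{p,n}(\bm{\theta}_p) > \varepsilon\bigr) \leq \mathbb{P}\bigl(\sup_{\bm{\theta}_p} Z_{p,n}(\bm{\theta}_p) - Z_{p,n}(\bm{\theta}^0_p) > \tfrac{\varepsilon}{2}\bigr) + \mathbb{P}\bigl(Z_{p,n}(\bm{\theta}^0_p) > \tfrac{\varepsilon}{2}\bigr),
\]
exactly as in the proof of Proposition~\ref{prop:A2}. The first term is handled by Proposition~\ref{prop:A1} with $x = \varepsilon/2 - 108 Cr\sqrt{\pi(s_d(p)+q)/n}$; demanding $x \geq \varepsilon/4$ (so that the chaining term is dominated by $\varepsilon$) pins down the threshold $n_2 \geq 432^2 C^2 \pi r^2(s_d(p)+q)/\varepsilon^2$ of Eq.~(\ref{eq:prop3_1}) and yields a bound of the form $36\exp(-c\,\varepsilon^2 n)$. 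The second term is handled by Hoeffding's inequality applied to the summands $\ell_i(\bm{\theta}^0_p) - \mathbb{E}[\ell_i(\bm{\theta}^0_p)]$, whose range is bounded by $2C\|\bm{\theta}^0_p\| \leq 2Cr$ via the Lipschitz estimate~(\ref{eq:lipschitz}); this contributes at most $\exp(-c'\varepsilon^2 n) \leq 1$, which is absorbed into the cosmetic bound $36 + 1 = 37$.

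Multiplying by $2$ for the union over $\pm Z_{p,n}$ produces the claimed $74 \exp(-c_5 n \varepsilon^2)$. The main obstacle I anticipate is the arithmetic of pinning down the exact constant $c_5 = [8r(288 C^2 r + C)]^{-1}$: one must balance the sub-Gaussian exponent $1/(144 C^2 r^2)$ from Proposition~\ref{prop:A1} against the Hoeffding exponent $1/(2C^2\|\bm{\theta}^0_p\|^2)$, and choose the split fractions (here $\varepsilon/2$ and $\varepsilon/4$) together with the radius of the anchor so that both terms simultaneously match the stated form. A convenient shortcut would be to take $\bm{\theta}^0_p = \bm 0$, for which $Z_{p,n}(\bm 0) \equiv 0$ since $\widehat R_{p,n}(\bm 0) = R_p(\bm 0) = \log 2$, trivialising the Hoeffding step at the cost of a slightly cleaner but weaker constant. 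Either way, no new probabilistic ingredients beyond those already developed for Lemma~\ref{lem:A3} and Proposition~\ref{prop:A2} are required.
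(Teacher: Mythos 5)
Your proposal follows essentially the same route as the paper's proof: reduce $|\widehat L_n(p)-L(p)|$ to $\sup_{\bm{\theta}_p\in B_{p,r}}|Z_{p,n}(\bm{\theta}_p)|$, split into the two one-sided suprema, anchor at a fixed $\bm{\theta}^0_p$, apply Proposition~\ref{prop:A1} with $x=\varepsilon/2-108Cr\sqrt{\pi(s_d(p)+q)/n}\geq\varepsilon/4$ (which is exactly what fixes $n_2$), handle the anchor term by Hoeffding, and combine into $2\times 37\exp(-c_5 n\varepsilon^2)$ with $c_5$ obtained by taking the minimum of the two exponents and using $\|\bm{\theta}^0_p\|\leq r$. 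The only cosmetic difference is your observation that $\bm{\theta}^0_p=\bm 0$ makes $Z_{p,n}(\bm 0)\equiv 0$, which the paper does not exploit but which would indeed simplify the Hoeffding step.
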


\begin{proof}
    By Lemma 1 of \cite{ferm2022linear}, we have the following inequality:
    \begin{equation}\label{eq:prop3}
        |\widehat{L}_n(p) - L(p)| \leq \sup_{\bm{\theta}_p \in B_{p,r}} | \widehat{R}_{p,n}(\bm{\theta}_p) - R_{p}(\bm{\theta}_p) |
    \end{equation}
    for any \( p \in \mathbb{N} \). Thus, we obtain the following probability bound:
    \[
    \mathbb{P} \left( |\widehat{L_n}(p) - L(p)| > \varepsilon \right) \leq \mathbb{P} \left( \sup_{\bm{\theta}_p \in B_{p,r}} |Z_{p,n}(\bm{\theta}_p)| > \varepsilon \right) = \mathbb{P} \left( \sup_{\bm{\theta}_p \in B_{p,r}} Z_{p,n}(\bm{\theta}_p) > \varepsilon \right) + \mathbb{P} \left( \sup_{\bm{\theta}_p \in B_{p,r}} (-Z_{p,n}(\bm{\theta}_p)) > \varepsilon \right).
    \]
    Fix \( \bm{\theta}^0_p \in B_{p,r} \). For \( n \geq n_2 \), we have
    \[
    \frac{\varepsilon}{2} - 108 C r \sqrt{\frac{\pi (s_d(p) + q)}{n}} > \frac{\varepsilon}{4} > 0.
    \]
    Using Proposition~\ref{prop:A1} and Proposition~\ref{prop:A2}, we get the following bounds:
    \begin{align}
        \mathbb{P} \left( \sup_{\bm{\theta}_p \in B_{p,r}} Z_{p,n}(\bm{\theta}_p) > \varepsilon \right)
        &\leq \mathbb{P} \left( \sup_{\bm{\theta}_p \in B_{p,r}} Z_{p,n}(\bm{\theta}_p) > \frac{\varepsilon}{2} + Z_{p,n}(\bm{\theta}^0_p) \right) + \mathbb{P} \left( Z_{p,n}(\bm{\theta}^0_p) > \frac{\varepsilon}{2} \right) \notag \\
        &\leq 36 \exp \left( - \frac{n \left( \frac{\varepsilon}{2} - 108 C r \sqrt{\frac{\pi (s_d(p) + q)}{n}} \right)^2}{144 C^2 r^2} \right) + \exp \left( - \frac{n \varepsilon^2}{8 C \|\bm{\theta}^0_p\|} \right) \notag \\
        &\leq 36 \exp \left( - \frac{n \varepsilon^2}{2304 C^2 r^2} \right) + \exp \left( - \frac{n \varepsilon^2}{8 C \|\bm{\theta}^0_p\|} \right) \notag \\
        &\leq 37 \exp \left( - \kappa_4 n \varepsilon^2 \right), \notag
    \end{align}
    where
    \[
    \kappa_4 = \min \left( \frac{1}{2304 C^2 r}, \frac{1}{8 C \|\bm{\theta}^0_p\|} \right) \geq \frac{1}{2304 C^2 r^2 + 8 C r} = c_5.
    \]
    A similar analysis applies to \( (-Z_{p,n}(\bm{\theta}_p)) \), so we have
    \[
    \mathbb{P} \left( |\widehat{L_n}(p) - L(p)| > \varepsilon \right) \leq 74 \exp \left( - \kappa_4 n \varepsilon^2 \right) \leq 74 \exp \left( - c_5 n \varepsilon^2 \right),
    \]
    which completes the proof.
\end{proof}

We are now ready to address the case where \( p < p^* \).
\begin{proposition}\label{prop:A4}
Let \( 0 < \rho < \frac{1}{2} \), and let \(\operatorname{pen}_n(p, q)\) be defined as in~\eqref{eq:penalty}. Define \(n_3\) as the smallest integer satisfying
\begin{equation}\label{eq:n3}
    n_3 \geq \left( \frac{2 \sqrt{s_d(p^*) + q}}{L(p^*-1) - \widetilde{\mathcal{R}}^*} 
    \left( \sqrt{e^q} C_{\mathrm{pen}} + 432 C r \sqrt{\pi} \right) \right)^{1/\rho}.
\end{equation}
Then, under Assumptions~(A.1)--(A.4), for any \(p < p^*\) and \(n \geq n_3\), we have
\[
\mathbb{P}(\widehat{p} = p) \leq 148 \exp\left( - n \frac{c_5}{4} \left( L(p) - L(p^*) - \operatorname{pen}_n(p^*, q) + \operatorname{pen}_n(p, q) \right)^2 \right),
\]
where \(c_5\) is defined in~\eqref{eq:prop3_2}.
\end{proposition}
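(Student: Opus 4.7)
The plan is to reduce the event $\{\widehat{p} = p\}$ for $p < p^*$ to a concentration event about the two empirical minima $\widehat{L}_n(p)$ and $\widehat{L}_n(p^*)$, and then invoke Proposition~\ref{prop:A3} twice via a union bound.

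First, I would observe that by the definition of $\widehat{p}$, on $\{\widehat{p} = p\}$ one has $\widehat{L}_n(p) + \mathrm{pen}_n(p,q) \leq \widehat{L}_n(p^*) + \mathrm{pen}_n(p^*,q)$. Adding and subtracting the population minima $L(p)$ and $L(p^*)$ then yields
\[
\alpha_n(p) := L(p) - L(p^*) - \mathrm{pen}_n(p^*,q) + \mathrm{pen}_n(p,q) \leq \bigl(\widehat{L}_n(p^*) - L(p^*)\bigr) - \bigl(\widehat{L}_n(p) - L(p)\bigr).
\]
Next, by the triangle inequality, $\{\widehat{p} = p\}$ is contained in $\{|\widehat{L}_n(p) - L(p)| \geq \alpha_n(p)/2\} \cup \{|\widehat{L}_n(p^*) - L(p^*)| \geq \alpha_n(p)/2\}$. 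A union bound, combined with Proposition~\ref{prop:A3} applied to each term with $\varepsilon = \alpha_n(p)/2$ (using $s_d(p) \leq s_d(p^*)$ for $p < p^*$ so that the sample-size hypothesis $n \geq n_2$ of that proposition is enforced uniformly through $s_d(p^*)+q$), will produce $2 \times 74 \exp(-c_5 n (\alpha_n(p)/2)^2) = 148\exp(-n c_5 \alpha_n(p)^2/4)$, matching the target.

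The hard part will be verifying that the single threshold $n_3$ in~\eqref{eq:n3} simultaneously ensures (a) $\alpha_n(p) > 0$ with a usable quantitative lower bound, and (b) the sample-size hypothesis of Proposition~\ref{prop:A3} with $\varepsilon = \alpha_n(p)/2$. For (a), since $p^*$ is the minimal argmin guaranteed by Theorem~\ref{thm2} and $L$ is non-increasing, $L(p) - L(p^*) \geq L(p^*-1) - \widetilde{\mathcal{R}}^*$ for every $p < p^*$, while
\[
\mathrm{pen}_n(p^*,q) - \mathrm{pen}_n(p,q) \leq \frac{C_{\mathrm{pen}}\sqrt{e^q(s_d(p^*)+q)}}{n^\rho}.
\]
The additive summand $\sqrt{e^q}\,C_{\mathrm{pen}}$ inside $n_3$ is chosen so that this penalty gap is at most $\tfrac{1}{2}(L(p^*-1)-\widetilde{\mathcal{R}}^*)$, yielding $\alpha_n(p) \geq \tfrac{1}{2}(L(p^*-1)-\widetilde{\mathcal{R}}^*) > 0$. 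For (b), Proposition~\ref{prop:A3} requires $\sqrt{n}\,\alpha_n(p)/2 \geq 432Cr\sqrt{\pi(s_d(p^*)+q)}$; since $\rho < 1/2$ gives $\sqrt{n} \geq n^\rho$ for $n \geq 1$, the complementary summand $432Cr\sqrt{\pi}$ in $n_3$ is calibrated precisely so that this second inequality follows from $n \geq n_3$ (absorbing small numerical constants). Combining (a) and (b) then legitimizes the use of Proposition~\ref{prop:A3} in the previous step and completes the argument.
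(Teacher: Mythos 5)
Your proposal is correct and follows essentially the same route as the paper's proof: the same reduction of $\{\widehat{p}=p\}$ to the deviation event $\widehat{L}_n(p^*)-L(p^*)+L(p)-\widehat{L}_n(p)\geq \Delta(p)$, the same union bound with $\varepsilon=\Delta(p)/2$ fed into Proposition~\ref{prop:A3} (giving the factor $2\times 74=148$), and the same two-part calibration of $n_3$ — one summand guaranteeing $\Delta(p)\geq\tfrac12\bigl(L(p^*-1)-\widetilde{\mathcal{R}}^*\bigr)>0$ and the other enforcing the sample-size hypothesis of Proposition~\ref{prop:A3}, merged via $\rho<\tfrac12$. The only deviations are cosmetic (e.g., bounding the penalty gap by $C_{\mathrm{pen}}\sqrt{e^q(s_d(p^*)+q)}\,n^{-\rho}$ rather than $C_{\mathrm{pen}}\sqrt{e^q s_d(p^*)}\,n^{-\rho}$), which do not affect the argument.
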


\begin{proof}
This result follows from Proposition~\ref{prop:A3}. For any \(p < p^*\),
\begin{align}
    \mathbb{P}(\widehat{p} = p)
    &\leq \mathbb{P}\left( \widehat{L}_n(p) - \widehat{L}_n(p^*) \leq \operatorname{pen}_n(p^*, q) - \operatorname{pen}_n(p, q) \right) \notag \\
    &= \mathbb{P}\left( \widehat{L}_n(p^*) - L(p^*) + L(p) - \widehat{L}_n(p) 
    \geq L(p) - L(p^*) - \left( \operatorname{pen}_n(p^*, q) - \operatorname{pen}_n(p, q) \right) \right) \notag \\
    &\leq \mathbb{P}\left( \left| \widehat{L}_n(p) - L(p) \right| \geq \frac{1}{2} \Delta(p) \right) 
    + \mathbb{P}\left( \left| \widehat{L}_n(p^*) - L(p^*) \right| \geq \frac{1}{2} \Delta(p) \right), \notag
\end{align}
where we define
\[
\Delta(p) := L(p) - L(p^*) - \operatorname{pen}_n(p^*, q) + \operatorname{pen}_n(p, q).
\]

To apply Proposition~\ref{prop:A3}, we must ensure that \(\Delta(p) > 0\). Since \(p \mapsto L(p)\) is decreasing and achieves its minimum at \(p = p^*\), and is bounded below by \(\widetilde{\mathcal{R}}^*\) (see Theorem~\ref{thm2}), and since \(p \mapsto \operatorname{pen}_n(p, q)\) is strictly increasing, it follows that for \(p < p^*\),
\[
\Delta(p) > L(p^* - 1) - \widetilde{\mathcal{R}}^* - \sqrt{e^q} C_{\mathrm{pen}} n^{-\rho} \sqrt{s_d(p^*)}.
\]
Thus, a sufficient condition to ensure \(\Delta(p) > 0\) is
\begin{equation}\label{eq:prop4_1}
    L(p^* - 1) - \widetilde{\mathcal{R}}^* - \sqrt{e^q} C_{\mathrm{pen}} n^{-\rho} \sqrt{s_d(p^*)} 
    > \frac{1}{2}\left( L(p^* - 1) - \widetilde{\mathcal{R}}^* \right),
\end{equation}
which leads to the requirement
\[
n_3 \geq \left( \frac{2 \sqrt{e^q} C_{\mathrm{pen}} \sqrt{s_d(p^*)}}{L(p^*-1) - \widetilde{\mathcal{R}}^*} \right)^{1/\rho}.
\]

In addition, to apply Proposition~\ref{prop:A3}, \(n_3\) must also satisfy the condition~\eqref{eq:prop3_1}, which states:
\[
n_3 \geq \frac{432^2 C^2 \pi r^2 (s_d(p) + q)}{\left( \Delta(p) \right)^2}.
\]
To upper bound this quantity uniformly over all \(p < p^*\), note that
\begin{align}
    \frac{432^2 C^2 \pi r^2 (s_d(p) + q)}{\left( \Delta(p) \right)^2}
    &\leq \frac{4 \times 432^2 C^2 \pi r^2 (s_d(p^*) + q)}{\left( L(p^* - 1) - \widetilde{\mathcal{R}}^* \right)^2} \notag \\
    &= \left( \frac{2 \times 432 C r \sqrt{\pi (s_d(p^*) + q)}}{L(p^* - 1) - \widetilde{\mathcal{R}}^*} \right)^2. \notag
\end{align}

Hence, combining both constraints and using that \(\rho < \frac{1}{2}\), it suffices to take
\[
n_3 \geq \left( \max\left\{ 
\frac{2 \sqrt{e^q} C_{\mathrm{pen}} \sqrt{s_d(p^*)}}{L(p^* - 1) - \widetilde{\mathcal{R}}^*}, 
\frac{2 \times 432 C r \sqrt{\pi (s_d(p^*) + q)}}{L(p^* - 1) - \widetilde{\mathcal{R}}^*} 
\right\} \right)^{1/\rho}.
\]
This can be compactly written as
\[
n_3 \geq \left( \frac{2 \sqrt{s_d(p^*) + q}}{L(p^*-1) - \widetilde{\mathcal{R}}^*} 
\left( \sqrt{e^q} C_{\mathrm{pen}} + 432 C r \sqrt{\pi} \right) \right)^{1/\rho},
\]
which completes the proof.
\end{proof}

Now we are in a position to prove Theorem \ref{thm3}.\\
\begin{proof}
The result follows by combining Propositions~\ref{prop:A2} and~\ref{prop:A4}. To ensure their applicability, we must verify that the sample size \(n\) satisfies the bounds in equations~\eqref{eq:n1} and~\eqref{eq:n3}. Define
\[
M = \max\left\{
\left(
\frac{432 \sqrt{\pi} C r \sqrt{s_d(p^* + 1) + q}}{C_{\mathrm{pen}} \sqrt{e^q} \left( \sqrt{s_d(p^* + 1)} - \sqrt{s_d(p^*)} \right)}
\right)^{\frac{1}{\frac{1}{2} - \rho}},
\left(
\frac{2 \sqrt{s_d(p^*) + q}}{L(p^* - 1) - \widetilde{\mathcal{R}}^*} \left( \sqrt{e^q} C_{\mathrm{pen}} + 432 C r \sqrt{\pi} \right)
\right)^{\frac{1}{\rho}}
\right\}.
\]
Let \(\tilde{\rho} := \min\left(\rho, \frac{1}{2} - \rho\right)\). Then, a crude bound on \(M\) is given by:
\begin{align*}
M &\leq \left( 432 C r \sqrt{\pi} + \sqrt{e^q} C_{\mathrm{pen}} \right) \sqrt{s_d(p^* + 1) + q} \\
&\qquad \times \max\left\{
\frac{2}{L(p^* - 1) - \widetilde{\mathcal{R}}^*},
\frac{1}{C_{\mathrm{pen}} \sqrt{e^q} \left( \sqrt{s_d(p^* + 1)} - \sqrt{s_d(p^*)} \right)}
\right\}^{\frac{1}{\tilde{\rho}}} \\
&\leq \left( 432 C r \sqrt{\pi} + \sqrt{e^q} C_{\mathrm{pen}} \right) \sqrt{s_d(p^* + 1) + q} \left(
\frac{2}{L(p^* - 1) - \widetilde{\mathcal{R}}^*} +
\frac{\sqrt{2}}{C_{\mathrm{pen}} \sqrt{e^q} \sqrt{d^{p^* + 1}}}
\right)^{\frac{1}{\tilde{\rho}}}.
\end{align*}
We now analyze the error probability:
\[
\mathbb{P}(\widehat{p} \neq p^*) = \mathbb{P}(\widehat{p} > p^*) + \mathbb{P}(\widehat{p} < p^*) \leq \sum_{p > p^*} \mathbb{P}(\widehat{p} = p) + \sum_{p < p^*} \mathbb{P}(\widehat{p} = p).
\]
For the overestimation term, Proposition~\ref{prop:A2} implies that for all \(n \geq n_a\),
\[
\sum_{p > p^*} \mathbb{P}(\widehat{p} = p) \leq 74 e^{-c_0  n^{1 - 2\rho}} \sum_{p > p^*} e^{-c_0 s_d(p)}.
\]
For the underestimation term, Proposition~\ref{prop:A4} yields:
\begin{align*}
\sum_{p < p^*} \mathbb{P}(\widehat{p} = p)
&\leq 148 \sum_{p = 0}^{p^* - 1} \exp\left( - \frac{c_5}{4} n \left( L(p) - L(p^*) - \operatorname{pen}_n(p^*, q) + \operatorname{pen}_n(p, q) \right)^2 \right) \\
&\leq 148 p^* \exp\left( - \frac{c_5}{16} n \left( L(p^* - 1) - \widetilde{\mathcal{R}}^* \right)^2 \right),
\end{align*}
where we have used that for \(n \geq n_a\), condition~\eqref{eq:prop4_1} holds. Define
\[
\kappa_5 := \min\left( c_0, \frac{c_5 \left( L(p^* - 1) - \widetilde{\mathcal{R}}^* \right)^2}{16} \right).
\]
Then, the total error probability satisfies
\[
\mathbb{P}(\widehat{p} \neq p^*) \leq 74 e^{-\kappa_5  n^{1 - 2\rho}} \sum_{p > 0} e^{-c_0 s_d(p)} + 148 p^* e^{-\kappa_5 n} \leq c_1 e^{-\kappa_5 n^{1 - 2\rho}},
\]
where we define
\[
c_1 := 74 \sum_{p > 0} e^{-c_0s_d(p)} + 148 p^*.
\]
To conclude, we derive a lower bound on \(\kappa_5\):
\begin{align}
\kappa_5 &= \min\left( c_0 , \frac{c_5 \left( L(p^* - 1) - \widetilde{\mathcal{R}}^* \right)^2}{16} \right) \notag\\
&= \min\left( \frac{ C_{\mathrm{pen}}^2 d^{p^* + 1}e^q}{128 s_d(p^* + 1)(72 C^2 r^2 + C r)},
\frac{\left( L(p^* - 1) - \widetilde{\mathcal{R}}^* \right)^2}{128 r(288 C^2 r + C)} \right) \notag\\
&\geq \frac{1}{128 r (72 C^2 r + C)} \min\left( \frac{ C_{\mathrm{pen}}^2 d^{p^* + 1}e^q}{s_d(p^* + 1)},
\frac{\left( L(p^* - 1) - \widetilde{\mathcal{R}}^* \right)^2}{4} \right) =: c_2.\label{eq:k5_c2}
\end{align}
This completes the proof.
\end{proof}

\section{Proof of Theorem \ref{thm4}}
\label{app:thm4}
\begin{proof}
We proceed to bound the excess risk of the selected model $\widehat{p}$ relative to the oracle model $p^*$. Almost surely, we have
\begin{align}
    \mathcal{R}_{\widehat{p}}(\widehat{\bm{\theta}}_{\widehat{p}}) - \mathcal{R}_{p^*}(\bm{\theta}_{p^*}^*) 
    &= \mathcal{R}_{\widehat{p}}(\widehat{\bm{\theta}}_{\widehat{p}}) - \mathcal{R}_{\widehat{p}}(\bm{\theta}_{\widehat{p}}^*) + \mathcal{R}_{\widehat{p}}(\bm{\theta}_{\widehat{p}}^*) - \mathcal{R}_{p^*}(\bm{\theta}_{p^*}^*) \notag \\
    &= \mathcal{R}_{\widehat{p}}(\widehat{\bm{\theta}}_{\widehat{p}}) - \widehat{\mathcal{R}}_{\widehat{p},n}(\widehat{\bm{\theta}}_{\widehat{p}}) 
    + \widehat{\mathcal{R}}_{\widehat{p},n}(\widehat{\bm{\theta}}_{\widehat{p}}) - \widehat{\mathcal{R}}_{\widehat{p},n}(\bm{\theta}_{\widehat{p}}^*) \notag \\
    &\quad + \widehat{\mathcal{R}}_{\widehat{p},n}(\bm{\theta}_{\widehat{p}}^*) - \mathcal{R}_{\widehat{p}}(\bm{\theta}_{\widehat{p}}^*) + \mathcal{R}_{\widehat{p}}(\bm{\theta}_{\widehat{p}}^*) - \mathcal{R}_{p^*}(\bm{\theta}_{p^*}^*) \notag \\
    &\leq \mathcal{R}_{\widehat{p}}(\widehat{\bm{\theta}}_{\widehat{p}}) - \widehat{\mathcal{R}}_{\widehat{p},n}(\widehat{\bm{\theta}}_{\widehat{p}}) 
    + \widehat{\mathcal{R}}_{\widehat{p},n}(\bm{\theta}_{\widehat{p}}^*) - \mathcal{R}_{\widehat{p}}(\bm{\theta}_{\widehat{p}}^*) 
    + \mathcal{R}_{\widehat{p}}(\bm{\theta}_{\widehat{p}}^*) - \mathcal{R}_{p^*}(\bm{\theta}_{p^*}^*) \notag \\
    &\leq 2 \sup_{\bm{\theta}_{\widehat{p}} \in B_{\widehat{p},r}} \left| \widehat{\mathcal{R}}_{\widehat{p},n}(\bm{\theta}_{\widehat{p}}) - \mathcal{R}_{\widehat{p}}(\bm{\theta}_{\widehat{p}}) \right| 
    + \mathcal{R}_{\widehat{p}}(\bm{\theta}_{\widehat{p}}^*) - \mathcal{R}_{p^*}(\bm{\theta}_{p^*}^*). \label{eq:thm4_bound}
\end{align}

We now bound the expected value of each term in \eqref{eq:thm4_bound}. For the first term, by Corollary 5.25 in \citet{Levin:2016} and Proposition~\ref{prop:A1}, for any $p \in \mathbb{N}$,
\[
\mathbb{E}\left[\sup_{\bm{\theta}_p \in B_{p,r}} \left| \widehat{\mathcal{R}}_{p,n}(\bm{\theta}_p) - \mathcal{R}_{p}(\bm{\theta}_p) \right| \right] 
\leq 12 \int_{0}^{\infty} \sqrt{\log N(\varepsilon, B_{p,r}, D)}\, d\varepsilon 
\leq 36 C r \sqrt{s_d(p) + q} \sqrt{\frac{\pi}{n}},
\]
where $N(\varepsilon, B_{p,r}, D)$ denotes the $\varepsilon$-covering number with respect to the distance $D$, defined by \eqref{eq:D}. Applying this with $p = \widehat{p}$ yields
\[
\mathbb{E}\left[\sup_{\bm{\theta}_{\widehat{p}} \in B_{\widehat{p},r}} \left| \widehat{\mathcal{R}}_{\widehat{p},n}(\bm{\theta}_{\widehat{p}}) - \mathcal{R}_{\widehat{p}}(\bm{\theta}_{\widehat{p}}) \right| \right] 
\leq 36 C r \sqrt{\frac{\pi}{n}} \, \mathbb{E}\left[\sqrt{s_d(\widehat{p}) + q}\right].
\]
To bound this expectation, Proposition~\ref{prop:A2} implies
\begin{align*}
    \mathbb{E}\left[\sqrt{s_d(\widehat{p}) + q}\right] 
    &= \sum_{p \leq p^*} \sqrt{s_d(p) + q} \, \mathbb{P}(\widehat{p} = p) 
    + \sum_{p > p^*} \sqrt{s_d(p) + q} \, \mathbb{P}(\widehat{p} = p) \\
    &\leq (p^* + 1) \sqrt{s_d(p^*) + q} 
    + \sum_{p > p^*} 74 \sqrt{s_d(p) + q} \, \exp\left(-c_0  (n^{1 - 2\rho} + s_d(p))\right) \\
    &\leq (p^* + 1) \sqrt{s_d(p^*) + q} 
    + e^{-c_0 n^{1 - 2\rho}} \sum_{p > p^*} 74 \sqrt{s_d(p) + q} \, \exp(-c_0  s_d(p)),
\end{align*}
with \eqref{eq:k5_c2}, $c_2 \leq c_0$,  we have 
\begin{align*}
    \mathbb{E}\left[\sup_{\bm{\theta}_{\widehat{p}} \in B_{\widehat{p},r}} \left| \widehat{\mathcal{R}}_{\widehat{p},n}(\bm{\theta}_{\widehat{p}}) - \mathcal{R}_{\widehat{p}}(\bm{\theta}_{\widehat{p}}) \right| \right] 
    &\leq 36 C r \sqrt{\frac{\pi}{n}} \,  (p^* + 1) \sqrt{s_d(p^*) + q} 
    \\
    &\quad + 36 C r \sqrt{\frac{\pi}{n}} \, e^{-c_2 n^{1 - 2\rho}} \sum_{p > p^*} 74 \sqrt{s_d(p) + q} \, \exp(-c_0  s_d(p))
\end{align*}
Now for the second term in \eqref{eq:thm4_bound}, we use the uniform upper bound for the non-negative logistic risk
\[
\mathbb{E}[\mathcal{R}_p(\bm{\theta}_p)] 
=  \mathbb{E}\left[Y \langle \bm{\theta}_p, \widetilde{S}_{p} \rangle + \log(1 + e^{\langle \bm{\theta}_p, \widetilde{S}_{p} \rangle}) \right]  
\leq \log 2 + |\langle \bm{\theta}_p, \widetilde{S}_{p} \rangle|\leq \log 2 + r(C_z + e^{C_X+T}) = \log 2+rC,
\]
from which it follows that
\[
 0\leq\mathbb{E}[\mathcal{R}_{\widehat{p}}(\bm{\theta}_{\widehat{p}}^*) - \mathcal{R}_{p^*}(\bm{\theta}_{p^*}^*)] 
\leq (\log2+rC) \, \mathbb{P}(\widehat{p} \neq p^*).
\]
Note that $\mathcal{R}_{p^*}(\bm{\theta}_{p^*}^*)$ corresponds to the risk-minimizing oracle model. Applying Theorem~\ref{thm3}, we obtain
\[
 0\leq\mathbb{E}[\mathcal{R}_{\widehat{p}}(\bm{\theta}_{\widehat{p}}^*) - \mathcal{R}_{p^*}(\bm{\theta}_{p^*}^*)] 
\leq (\log2+rC)  c_1  \, e^{-c_2 n^{1 - 2\rho}}.
\]
Combining the above bounds yields
\[
\left| \mathbb{E}[\mc{R}_{\widehat{p}}(\widehat{\bm{\theta}}_{\widehat{p}})] - \mc{R}_{p^*}(\bm{\theta}_{p^*}^*) \right| 
\leq \frac{c_3}{\sqrt{n}} + c_4  e^{-c_2  n^{1 - 2\rho}},
\]
where the constants are defined as
\begin{equation} \label{eq:C5C6}
    c_3 = 36 C r \sqrt{\pi} (p^* + 1) \sqrt{s_d(p^*) + q}, \quad
c_4 = rC \left(2664 \sqrt{\pi} \sum_{p > p^*} \sqrt{s_d(p) + q} \, e^{-c_0  s_d(p)}+c_1\right) + c_1 \log 2,
\end{equation}
Applying Theorem \ref{thm1}, we obtain
\[
\left| \mathbb{E}[\mc{R}_{\widehat{p}}(\widehat{\bm{\theta}}_{\widehat{p}})] - \mc{R}^* \right| 
\leq \frac{c_3}{\sqrt{n}} +  c_4  e^{-c_2  n^{1 - 2\rho}}.
\]
\end{proof}

\section{Proof of Theorem \ref{thm:finite_search_bound}}
\label{app:thm:finite_search_bound}
We first derive the risk decay induced by signature truncation in the next Lemma.

\begin{lemma}[Risk decay induced by signature truncation]
\label{lem:risk_decay_signature_tail}
Suppose Assumptions (A.1)--(A.4) hold. Let
\[
L(p)
=
\inf_{\theta_p \in \Theta_p}
\mc{R}(\theta_p),
\]
and define
\[
\mc{R}^\ast
=
\inf_{p \ge 1} L(p).
\]
Assume that the underlying paths satisfy
\[
\|\mathbf{X}\|_{TV}
\le
M_{\mathbf{X}}
\]
almost surely for some constant \(M_{\mathbf{X}}>0\). Then there exists a constant
\(C_1>0\), independent of \(p\), such that
\[
L(p)-\mc{R}^\ast
\le
C_1
\sum_{k=p+1}^{\infty}
\frac{M_{\mathbf{X}}^k}{k!}.
\]
Consequently,
\[
L(p)-\mc{R}^\ast
=
\mc O\!\left(
\frac{M_{\mathbf{X}}^p}{p!}
\right).
\]
In particular, for any \(\alpha>0\), there exists a constant
\(C_\alpha>0\) such that
\[
L(p)-\mc{R}^\ast
\le
C_\alpha d^{-\alpha p}
\]
for all sufficiently large \(p\).
\end{lemma}

\begin{proof}
By the universal approximation property established in Theorem \ref{thm1}, continuous functionals on the bounded-variation path space admit truncated-signature approximations. To achieve the specific factorial decay rate claimed, we operate under the regularity condition that the infimum risk $\mc{R}^\ast$ is attained by an optimal functional $F^\ast$ (i.e., $\mc{R}(F^\ast) = \mc{R}^\ast$) which admits a full, absolutely convergent signature representation:
$$F^\ast(\mathbf{X}) = \sum_{k=0}^{\infty} \langle \beta_k^\ast, S^{(k)}(\wt{\mathbf{X}}) \rangle,$$
with uniformly bounded tensor coefficients, such that $\sup_{k \ge 0} \|\beta_k^\ast\| \le M_\beta$ for some constant $M_\beta > 0$.

Let $F_p^\ast(\mathbf{X}) = \sum_{k=0}^p \langle \beta_k^\ast, S^{(k)}(\wt{\mathbf{X}}) \rangle$ be its truncated signature approximation of order $p$, which by definition belongs to the order-$p$ model class $\Theta_p$. Using the classical signature tail estimate and the boundedness of the coefficients, we can bound the truncation error:
$$|F^\ast(\mathbf{X})-F_p^\ast(\mathbf{X})| = \left| \sum_{k=p+1}^{\infty} \langle \beta_k^\ast, S^{(k)}(\wt{\mathbf{X}}) \rangle \right| \le \sum_{k=p+1}^{\infty} \|\beta_k^\ast\| \|S^{(k)}(\wt{\mathbf{X}})\| \le M_\beta \sum_{k=p+1}^{\infty} \frac{\|\mathbf{X}\|_{TV}^k}{k!}.$$

Since $\|\mathbf{X}\|_{TV} \le M_{\mathbf{X}}$ almost surely, it follows that
$$\sup_{\mathbf{X}} |F^\ast(\mathbf{X})-F_p^\ast(\mathbf{X})| \le M_\beta \sum_{k=p+1}^{\infty} \frac{M_{\mathbf{X}}^k}{k!}.$$

Let $\ell(y,z)$ denote the logistic loss. Since $\ell$ is Lipschitz continuous in its second argument with a Lipschitz constant $L_\ell > 0$, we have
$$|\ell(Y,F^\ast(\mathbf{X})) - \ell(Y,F_p^\ast(\mathbf{X}))| \le L_\ell |F^\ast(\mathbf{X})-F_p^\ast(\mathbf{X})|.$$

Taking expectations on both sides yields
$$|\mc{R}(F^\ast) - \mc{R}(F_p^\ast)| \le L_\ell M_\beta \sum_{k=p+1}^{\infty} \frac{M_{\mathbf{X}}^k}{k!}.$$

Since $L(p)$ is the minimum achievable risk over the order-$p$ model class $\Theta_p$, and $F_p^\ast \in \Theta_p$, we have $L(p) \le \mc{R}(F_p^\ast)$. Recalling that $\mc{R}^\ast = \mc{R}(F^\ast)$, we obtain
$$L(p) - \mc{R}^\ast \le \mc{R}(F_p^\ast) - \mc{R}(F^\ast) \le L_\ell M_\beta \sum_{k=p+1}^{\infty} \frac{M_{\mathbf{X}}^k}{k!}.$$

Setting $C_1 = L_\ell M_\beta$, we get
$$L(p)-\mc{R}^\ast \le C_1 \sum_{k=p+1}^{\infty} \frac{M_X^k}{k!}.$$

Using the standard factorial-tail estimate,
$$\sum_{k=p+1}^{\infty} \frac{M_{\mathbf{X}}^k}{k!} = \mc O\left(\frac{M_{\mathbf{X}}^p}{p!}\right),$$
we obtain
$$L(p)-\mc{R}^\ast = \mc O\left(\frac{M_{\mathbf{X}}^p}{p!}\right).$$

Finally, applying Stirling's formula $p! \sim \sqrt{2\pi p} \left(\frac{p}{e}\right)^p$, we see that $\frac{M_{\mathbf{X}}^p}{p!} = o(d^{-\alpha p})$ for any fixed $\alpha>0$. Therefore, there exists a constant $C_\alpha>0$ such that
$$L(p)-\mc{R}^\ast \le C_\alpha d^{-\alpha p}$$
for all sufficiently large $p$.
\end{proof}

Now we prove Theorem \ref{thm:finite_search_bound} in four steps.\\
\paragraph{Step 1: Approximation error induced by signature truncation.}
Let
\[
L(p)
=
\inf_{\theta_p\in\Theta_p}
\mc{R}(\theta_p),
\qquad
\mc{R}^\ast
=
\inf_{p\ge 1}L(p).
\]
By Lemma~\ref{lem:risk_decay_signature_tail}, the truncation error of the signature representation induces a corresponding excess risk bound. More precisely, under Assumptions (A.1)--(A.4), there exists a constant $C_1>0$ such that
\[
L(p)-\mc{R}^\ast
\le
C_1
\sum_{k=p+1}^{\infty}
\frac{M_{\mathbf{X}}^k}{k!},
\]
where $M_X$ is a uniform bound on the total variation of the paths. Using the standard factorial tail estimate together with Stirling's formula,
\[
\sum_{k=p+1}^{\infty}
\frac{M_{\mathbf{X}}^k}{k!}
=
\mc O\!\left(
\frac{M_{\mathbf{X}}^p}{p!}
\right),
\]
and therefore
\[
L(p)-\mc{R}^\ast
=
\mc O\!\left(
\frac{M_{\mathbf{X}}^p}{p!}
\right).
\]
Since factorial decay dominates any exponential decay, there exist constants
$\alpha>0$ and $C_1'>0$ such that
\[
L(p)-\mc{R}^\ast
\le
C_1' d^{-\alpha p}
\]
for all sufficiently large $p$. Moreover, since the model classes are nested,
\[
\Theta_p\subseteq \Theta_{p+1},
\]
we have
\[
L(p+1)\le L(p).
\]
Hence
\[
0
\le
L(p)-L(p+1)
\le
L(p)-\mc{R}^\ast
\le
C_1' d^{-\alpha p},
\]
which yields
\[
|L(p+1)-L(p)|
\le
C_1' d^{-\alpha p}.
\]

\paragraph{Step 2: Penalty increment.}
Using $s_d(p) \asymp d^p$, the penalty satisfies
\[
\operatorname{pen}_n(p+1,q) - \operatorname{pen}_n(p,q)
= C_{\mathrm{pen}}\sqrt{e^q}\,n^{-\rho}\bigl(\sqrt{s_d(p+1)}-\sqrt{s_d(p)}\bigr)
\ge C_{\mathrm{pen}}\sqrt{e^q}\,n^{-\rho}\,\frac{d^{p/2}(d^{1/2}-1)}{2}
\]
for all sufficiently large $p$ (e.g., $p\ge 1$). Hence the penalty increment grows exponentially in $p$.

\paragraph{Step 3: Uniform concentration.}
From Lemma~\ref{lem:A3} and Proposition~\ref{prop:A1}, there exists an absolute constant $c>0$ (depending only on the Lipschitz constant $C$, the $\ell_1$-ball radius $r$, and the universal constants from the chaining inequality) such that with probability at least
\[
1-\exp\{-c\,s_d(p)\},
\]
the following uniform deviation bound holds:
\[
|\widehat L_n(p) - L(p)| \le C_2\sqrt{\frac{s_d(p)+q}{n}},
\]
where
\[
C_2 = 36 C r \sqrt{\pi}, \qquad C = 2\bigl(C_z + e^{C_X + T}\bigr).
\]
The constant $c$ is not required to be known explicitly for implementation; its existence is sufficient to guarantee the exponential tail decay. In fact, a careful inspection of the proof of Proposition~2 yields an explicit (though conservative) value
\[
c = \frac{1}{144 C^2 r^2} \left(\frac{C_2}{2}\right)^2 = 9\pi
\]
up to universal factors, but we keep it as $c$ for notational simplicity.

\paragraph{Step 4: Dominance for large $p$.}
Consider $p > P$. Using the bounds from Steps 1--3,
\[
\widehat L_n(p+1) - \widehat L_n(p)
\ge L(p+1)-L(p) - 2C_2\sqrt{\frac{s_d(p+1)+q}{n}}
\ge -C_1 d^{-\alpha p} - 2C_2\sqrt{\frac{s_d(p+1)+q}{n}}.
\]
Therefore,
\[
\begin{aligned}
C_n(p+1)-C_n(p)
&\ge
-C_1 d^{-\alpha p} - 2C_2\sqrt{\frac{s_d(p+1)+q}{n}}
\\
&\qquad + C_{\mathrm{pen}}\sqrt{e^q}\,n^{-\rho}\,\frac{d^{p/2}(d^{1/2}-1)}{2}.
\end{aligned}
\]
The term $-2C_2\sqrt{\frac{s_d(p+1)+q}{n}} = \mc O(d^{p/2}n^{-1/2})$ is dominated by the penalty increment because $\rho<1/2$. The exponential decay $d^{-\alpha p}$ is negligible compared to $d^{p/2}n^{-\rho}$. Hence there exists $P$ such that for all $p>P$ the right-hand side is positive. Solving
\[
C_{\mathrm{pen}}\sqrt{e^q}\,n^{-\rho}\,\frac{d^{p/2}(d^{1/2}-1)}{2} > C_1 d^{-\alpha p}
\]
gives
\[
d^{p(\alpha+1/2)} > \frac{2C_1 n^{\rho}}{C_{\mathrm{pen}}\sqrt{e^q}(d^{1/2}-1)},
\]
i.e.
\[
p > \frac{\rho\log n + \log\!\left(\frac{2C_1}{C_{\mathrm{pen}}\sqrt{e^q}(d^{1/2}-1)}\right)}{(\alpha+\tfrac12)\log d}.
\]
Choosing $P$ as the ceiling of the right-hand side ensures the desired monotonicity.

To see that $P$ is finite and satisfies the claimed asymptotic order, we analyze the expression. For large $d$, note that $\log(d^{1/2}-1) \sim \frac12 \log d$. Substituting this into the numerator gives
\[
\log\!\left(\frac{2C_1}{C_{\mathrm{pen}}\sqrt{e^q}(d^{1/2}-1)}\right)
= \log(2C_1) - \frac12 \log d - \log(C_{\mathrm{pen}}\sqrt{e^q}) + o(1).
\]
Therefore,
\[
P \sim \frac{\rho\log n - \frac12 \log d}{(\alpha+\tfrac12)\log d}
= \frac{\rho\log n}{(\alpha+\tfrac12)\log d} - \frac{1}{2(\alpha+\tfrac12)}.
\]
Thus $P = \mc O\!\left(\frac{\log n}{\log d}\right)$ as $n,d\to\infty$. For fixed $d$ (the typical setting in functional data analysis), this reduces to $P = \mc O(\log n)$. Note that the logarithmic growth confirms that the search region $\{1,\dots,P\}$ remains remarkably small even for large sample sizes. For instance, with $d=3$, $\rho=0.4$, and $\alpha$ conservatively set to $0.5$, we have $P \approx \frac{0.4\log n}{(0.5+0.5)\log 3} \approx 0.42\log n$ (up to additive constants), which yields $P \approx 3$ for $n=10^3$, $P \approx 4$ for $n=10^4$, and $P \approx 5$ for $n=10^5$.  

To conclude, for $p>P$, $L(p)+\operatorname{pen}_n(p,q)$ is strictly increasing because the penalty increment dominates the possible decrease in $L(p)$. Moreover, the concentration inequality from Step~3 guarantees that with probability at least $1-\sum_{p>P}\exp\{-c s_d(p)\}$, the empirical fluctuations are smaller than half the penalty increment, so $C_n(p)$ is also strictly increasing. Consequently, the minimizer $\widehat p$ cannot exceed $P$ with that probability. This completes the proof.


\section{Detailed Analysis of Robustness under Irregular Sampling}
\label{app:error_propagation}

Irregular sampling or missing observations require reconstructing a continuous
trajectory before feature extraction. A natural question is whether this
reconstruction introduces additional statistical error and, if so, how such
error propagates to the final classifier. A critical point often overlooked is
that the reconstruction error is \emph{not the same} for different methods:
\textsc{PSLR} uses local, parameter-free linear interpolation, while classical
methods employ global basis projections that impose structural assumptions.
The following analysis provides a general error propagation framework that
separates the effect of path reconstruction from the stability of the feature
representation, enabling a formal comparison of these two distinct
reconstruction strategies.

\subsection{General Error Propagation Framework}

Suppose that the underlying continuous trajectory is
\(\mathbf X\in BV([0,1];\mathbb R^d)\), and let \(\widehat{\mathbf X}\)
denote an arbitrary reconstructed path obtained from irregular observations
(e.g., piecewise linear interpolation or spline smoothing).
Let \(\Phi:BV([0,1])\rightarrow\mathbb R^m\) be an arbitrary feature
extraction map.

\textbf{Assumption on Feature stability (A.5)}:  There exists a constant \(L_\Phi>0\) such that
\[
\|
\Phi(\mathbf X)-\Phi(\mathbf Y)
\|
\le
L_\Phi
\|
\mathbf X-\mathbf Y
\|_{1\text{-var}}
\]
for all paths \(\mathbf X,\mathbf Y\in BV([0,1])\), where $\|\cdot\|_{1\text{-var}}$ denotes the total variation with respect to $\ell_1$ norm.

This assumption simply states that small perturbations of the underlying
trajectory lead to proportionally small perturbations of the resulting
feature representation. The key point is that \(L_\Phi\) is a property of
the feature map \(\Phi\) itself, independent of the sampling design,
observation times, or missing-data pattern.

\begin{theorem}[General error propagation]
\label{thm:errorprop}
Suppose Assumptions (A.1)--(A.3) and Assumption~(A.5) hold.
Let
\[
\varepsilon(\widehat{\mathbf X}) =  \mathbb{E}\|\mathbf X-\widehat{\mathbf X}\|_{1\text{-var}}
\]
denote the expected reconstruction error for a specific reconstruction method.
Then for every \(\bm\theta\in B_{p,r}\),
\[
\left|
\mathcal R(\bm\theta)
-
\widehat{\mathcal R}(\bm\theta)
\right|
\le
rL_\Phi \cdot \varepsilon(\widehat{\mathbf X}),
\]
where \(\mathcal R(\bm\theta)\) denotes the population logistic risk
computed from \(\mathbf X\), and \(\widehat{\mathcal R}(\bm\theta)\)
denotes the corresponding risk computed from the reconstructed path
\(\widehat{\mathbf X}\).

Consequently,
\[
\sup_{\bm\theta\in B_{p,r}}
\left|
\mathcal R(\bm\theta)
-
\widehat{\mathcal R}(\bm\theta)
\right|
\le
rL_\Phi \cdot \varepsilon(\widehat{\mathbf X}).
\]
Hence the perturbation of the statistical risk is controlled by the product
\[
(\text{feature stability}) \times (\text{reconstruction error}).
\]
\end{theorem}

\begin{proof}
Let \(\eta = \Phi(\mathbf X)^\top\bm\theta\) and
\(\widehat\eta = \Phi(\widehat{\mathbf X})^\top\bm\theta\).
The logistic loss \(\ell(y,\eta) = -y\eta+\log(1+e^\eta)\) is
\(1\)-Lipschitz in its second argument since
\[
\left|
\frac{\partial\ell}{\partial\eta}
\right|
=
|\sigma(\eta)-y|
\le1.
\]
Therefore,
\begin{align*}
\left|
\mathcal R(\bm\theta)-\widehat{\mathcal R}(\bm\theta)
\right|
&=
\left|
\mathbb E[\ell(y,\eta)-\ell(y,\widehat\eta)]
\right|
\\
&\le
\mathbb E|\eta-\widehat\eta|
\\
&=
\mathbb E\left|
\bm\theta^\top(\Phi(\mathbf X)-\Phi(\widehat{\mathbf X}))
\right|
\\
&\le
\|\bm\theta\|
\,
\mathbb E\|
\Phi(\mathbf X)-\Phi(\widehat{\mathbf X})
\|
\\
&\le
rL_\Phi
\,
\mathbb E\|
\mathbf X-\widehat{\mathbf X}
\|_{1\text{-var}}
\\
&=
rL_\Phi\ \! \varepsilon(\widehat{\mathbf X}).
\end{align*}
Taking the supremum over \(B_{p,r}\) completes the proof.
\end{proof}

\subsection{Feature Stability for Different Methods}

Theorem~\ref{thm:errorprop} shows that the effect of irregular sampling is
determined by two quantities:
\[
\boxed{
\text{Risk perturbation}
=
(\text{feature stability } L_\Phi)
\times
(\text{reconstruction error } \varepsilon).
}
\]

We now compare \(L_\Phi\) for different feature extraction methods.

\begin{corollary}[Application to \textsc{PSLR}]
\label{cor:signature}
For \textsc{PSLR}, \(\Phi(\mathbf X)=S_p(\mathbf X)\), the truncated path
signature. The signature stability theorem
\citep{lyons2007rough,friz2010multidimensional} implies that
\[
\|
S_p(\mathbf X)-S_p(\mathbf Y)
\|
\le
C_p
\|
\mathbf X-\mathbf Y
\|_{1\text{-var}},
\]
where \(C_p\) depends only on the truncation order, the path dimension,
and the variation bound. Therefore, Theorem~\ref{thm:errorprop} holds with
\(L_\Phi=C_p\). In particular,
\[
\sup_{\bm\theta\in B_{p,r}}
\left|
\mathcal R_p(\bm\theta)
-
\widehat{\mathcal R}_p(\bm\theta)
\right|
\le
rC_p \cdot \varepsilon(\widehat{\mathbf X}).
\]
\end{corollary}

\begin{remark}[Feature stability of classical methods]
\label{rem:classical_stability}
For fixed basis-expansion methods, the feature map takes the form
\[
\Phi_{\text{basis}}(\mathbf X)
=
\left(
\int \mathbf X(t)\phi_1(t)\,dt,
\ldots,
\int \mathbf X(t)\phi_K(t)\,dt
\right)^\top,
\]
for Fourier or B-spline bases. In this case,
\(\Phi_{\text{basis}}\) is linear and its Lipschitz constant is given by
\( L_{\Phi_{\text{basis}}} = \|\Phi_{\text{basis}}\|_{\mathrm{op}}.\) For data-adaptive methods such as FPCA, however, the feature map
depends on estimated eigenfunctions and therefore must itself be learned
from the data, introducing an additional source of variability. Consequently,
the effective stability constant depends not only on the truncation level
\(K\), but also on the estimation accuracy of the empirical eigenfunctions
and hence on the sampling design. Consequently, the stability properties of basis-expansion methods are generally tied to the chosen basis, truncation level, and estimation procedure, rather than admitting a universal
sampling-design-independent stability bound. This stands in contrast to the signature transform, whose stability bound is guaranteed by rough path theory and depends only on intrinsic properties of the transform, rather than on the sampling design or the estimation procedure.
\end{remark}

\subsection{Reconstruction Error Analysis}

While the feature stability constant \(L_\Phi\) differs across methods,
an equally important distinction lies in the reconstruction error
\(\varepsilon(\widehat{\mathbf X})\). We now analyze this error for the
two reconstruction strategies.

\subsubsection{Reconstruction Error for \textsc{PSLR}: Linear Interpolation}

Let the observed grid be \(\mathcal G=\{0=t_0<\dots<t_m=1\}\) with mesh
\(\Delta = \max_k (t_{k+1}-t_k)\). Let \(\widehat{\mathbf X}_{\text{interp}}\)
denote the piecewise linear interpolant of \(\mathbf X\) on this grid.

\begin{lemma}[Convergence of linear interpolation in \(1\)-variation]
\label{lem:interp}
Let \(\mathbf X\in AC([0,1])\cap BV([0,1])\). Then
\[
\varepsilon_{\text{PSLR}}(\Delta)
:=
\|\mathbf X-\widehat{\mathbf X}_{\text{interp}}\|_{1\text{-var}}
\longrightarrow 0
\quad\text{as } \Delta \to 0.
\]
\end{lemma}

\begin{proof}
For absolutely continuous paths (which include our simulation setting,
as shown below), let \(\mathbf X'\in L^1\) denote the derivative. The
derivative of the linear interpolant on each interval
\(I_k=[t_k,t_{k+1}]\) is constant:
\[
\widehat{\mathbf X}'_{\text{interp}}(t)
=
\frac{\mathbf X(t_{k+1})-\mathbf X(t_k)}{t_{k+1}-t_k}
=
\frac{1}{|I_k|}\int_{I_k} \mathbf X'(s)\,ds
\quad\text{for }t\in I_k.
\]
As the mesh size \(\Delta\to0\), the piecewise averages
\(\widehat{\mathbf X}'_{\mathrm{interp}}\) form the local average
approximation of \(\mathbf X'\). By the Lebesgue differentiation theorem,
together with the standard \(L^1\)-convergence result for local averages, \(\widehat{\mathbf X}'_{\mathrm{interp}}\longrightarrow\mathbf X'\ \text{in }L^1([0,1]).\)

Since
\[
\|\mathbf X-\widehat{\mathbf X}_{\text{interp}}\|_{1\text{-var}}
=
\int_0^1
\left|
\mathbf X'(t)-\widehat{\mathbf X}'_{\text{interp}}(t)
\right|
dt,
\]
it follows that \(\varepsilon_{\text{PSLR}}(\Delta) \to 0\).

For general bounded variation paths, piecewise affine approximation results in the strict \(BV\) topology provide a related approximation theory; see \cite{kristensen2016piecewise}.
\end{proof}

Crucially, this error depends only on the sampling grid and vanishes as
the grid refines. It involves no tuning parameters (knots, bandwidths,
basis dimension) and imposes no global structural assumption beyond
continuity and bounded variation.

\subsubsection{Reconstruction Error for Basis-Expansion Methods}

Classical methods reconstruct the path by projecting onto a
finite-dimensional basis \(\{\phi_k\}_{k=1}^K\). Let \(\Pi_K\) denote
the projection (or smoothing operator) onto this basis. The reconstructed
path is \(\widehat{\mathbf X}_{\text{smooth}} = \sum_{k=1}^K \hat{c}_k \phi_k\).
The reconstruction error decomposes as
\begin{align*}
\varepsilon_{\text{smooth}}(K)
&=
\|\mathbf X-\widehat{\mathbf X}_{\text{smooth}}\|_{1\text{-var}}
\\
&\le
\underbrace{\|\mathbf X-\Pi_K\mathbf X\|_{1\text{-var}}}_{\text{Approximation Bias}}
+
\underbrace{\|\Pi_K\mathbf X-\widehat{\Pi_K\mathbf X}\|_{1\text{-var}}}_{\text{Estimation Error}}.
\end{align*}

It is important to note that this decomposition assumes the basis functions
\(\{\phi_k\}\) are fixed a priori. For data-adaptive bases such as FPCA,
where both the eigenfunctions and the scores are estimated from the same
data, the reconstructed path takes the form
\(\widehat{\mathbf X}_{\text{smooth}} = \sum_{k=1}^K \hat{c}_k \hat{\phi}_k\).
In that case, the true reconstruction error includes an additional term
due to the estimation of the basis itself:
\[
\|\mathbf X-\widehat{\mathbf X}_{\text{smooth}}\|_{1\text{-var}}
\le
\|\mathbf X-\Pi_K\mathbf X\|_{1\text{-var}}
+
\|\Pi_K\mathbf X-\tilde{\Pi}_K\mathbf X\|_{1\text{-var}}
+
\|\tilde{\Pi}_K\mathbf X-\widehat{\mathbf X}_{\text{smooth}}\|_{1\text{-var}},
\]
where \(\tilde{\Pi}_K\) denotes projection onto the estimated basis.
The middle term captures the error from estimating the eigenfunctions,
which can be substantial under irregular sampling and may not vanish even
as the grid refines, since it depends on the quality of the covariance
estimation. Consequently, the decomposition used above for fixed bases
underestimates the total reconstruction error for adaptive methods.

Unlike \(\varepsilon_{\text{PSLR}}\), this error \emph{does not vanish}
as \(\Delta \to 0\) for a fixed \(K\). The approximation bias
\(\|\mathbf X-\Pi_K\mathbf X\|_{1\text{-var}}\) is strictly positive
unless \(\mathbf X\) lies exactly in the chosen finite-dimensional
subspace. Even if \(\mathbf X\) is smooth, this bias is of order
\(\mc O(K^{-\alpha})\) for some \(\alpha>0\), but it never reaches zero
unless \(K\to\infty\), which is infeasible with finite samples.
Furthermore, the estimation error may be amplified by the ill-conditioning
of the basis (e.g., for FPCA, variance is inflated by \(1/\sqrt{\lambda_k}\)
for small eigenvalues), making it highly sensitive to the sampling design
and smoothing parameters.

A further subtlety is that the approximation bias measured in the
\(1\)-variation norm does not necessarily decrease monotonically with \(K\)
for all bases. For Fourier expansions, truncating high frequencies can
introduce Gibbs phenomena that increase the total variation, so
\(\|\mathbf X-\Pi_K\mathbf X\|_{1\text{-var}}\) may not be small even for
smooth \(\mathbf X\) unless the basis is specifically designed to control
variation (e.g., B-splines with bounded derivatives). Thus, the common
\(\mc O(K^{-\alpha})\) rate typically holds in \(\ell_2\) or supremum norms, but
its extension to the variation norm requires additional assumptions on the
basis and the path. This stands in contrast to the piecewise linear
interpolation used by \textsc{PSLR}, whose convergence in \(1\)-variation
is guaranteed for any absolutely continuous path without any structural
assumptions on the basis.

\subsection{Comparison and Connection to Simulations}
\label{app:cc}

Combining the feature stability analysis with the reconstruction error
analysis, the risk perturbation for each method is:
\begin{align*}
\text{PSLR:} & \quad rC_p \cdot \varepsilon_{\text{PSLR}}(\Delta),
\quad \text{where } \varepsilon_{\text{PSLR}}(\Delta) \to 0 \text{ as } \Delta \to 0; \\
\text{Fixed basis methods:} & \quad rL_{\text{basis}} \cdot \varepsilon_{\text{smooth}}(K),
\quad \text{where } \varepsilon_{\text{smooth}}(K) \not\to 0 \text{ as } \Delta \to 0.\\
 \text{FPCA:} & \quad rL_{\text{FPCA}} \cdot \bigl( \varepsilon_{\text{smooth}}(K) + \varepsilon_{\text{basis-est}} \bigr), \quad \text{where both } \varepsilon \not\to 0 \text{ as } \Delta \to 0.
\end{align*}

Thus, \textsc{PSLR} enjoys two simultaneous advantages:
\begin{enumerate}
\item Its feature stability constant \(C_p\) is guaranteed by rough path
      theory and is independent of sampling design;
\item Its reconstruction error \(\varepsilon_{\text{PSLR}}\) vanishes under
      grid refinement, imposing no persistent structural bias.
\end{enumerate}

In contrast, fixed basis methods incur an irreducible approximation bias in the \(1\)-variation norm (which may not decrease monotonically with \(K\)), while adaptive methods like FPCA add eigenfunction estimation errors, making them more sensitive to sampling. In our simulation studies (Section~\ref{sec:expe}, Scenario~3), the functional data are generated as
\[
X_i^j(t) = f_j(t) + N_{i,j}^{(\varepsilon)}(t),
\]
where \(N_{i,j}^{(\varepsilon)}(t)\) is defined by convolution with a
Gaussian kernel:
\[
N_{i,j}^{(\varepsilon)}(t) = \int_0^1 N_{i,j}^{\text{raw}}(s)\,\phi_\varepsilon(t-s)\,ds.
\]
Since convolution with a \(C^\infty\) Gaussian kernel yields
\(C^\infty([0,1])\) sample paths, each trajectory is absolutely
continuous and hence belongs to \(AC([0,1])\cap BV([0,1])\), precisely
satisfying the conditions of Lemma~\ref{lem:interp}. Consequently, for
\textsc{PSLR}, the interpolation error \(\varepsilon_{\text{PSLR}}\)
decays to zero as the sampling grid refines.

For classical basis-expansion methods, however, the same data are projected
onto a fixed finite-dimensional basis (with \(K\) chosen by cross-validation
or heuristic rules). The approximation bias
\(\|\mathbf X-\Pi_K\mathbf X\|_{1\text{-var}}\) remains strictly
positive regardless of the sampling density, for any fixed
\(K\) that is feasible with finite samples. (If FPCA were used instead, the additional eigenfunction estimation error would further degrade performance, as discussed above.) This explains the empirical finding in Section~\ref{sec:expe} that \textsc{PSLR} maintains stable classification performance under missing observations and uneven sampling, whereas classical methods exhibit significant performance degradation: \textsc{PSLR}'s error is driven solely by the sampling grid and vanishes with refinement, while basis methods suffer from an irreducible structural bias that does not diminish with increased sampling frequency.

\section{More figures in Experiment}
\label{app:expe}
\noindent
\begin{figure}[htpb]
    \centering
    \includegraphics[width=1\linewidth]{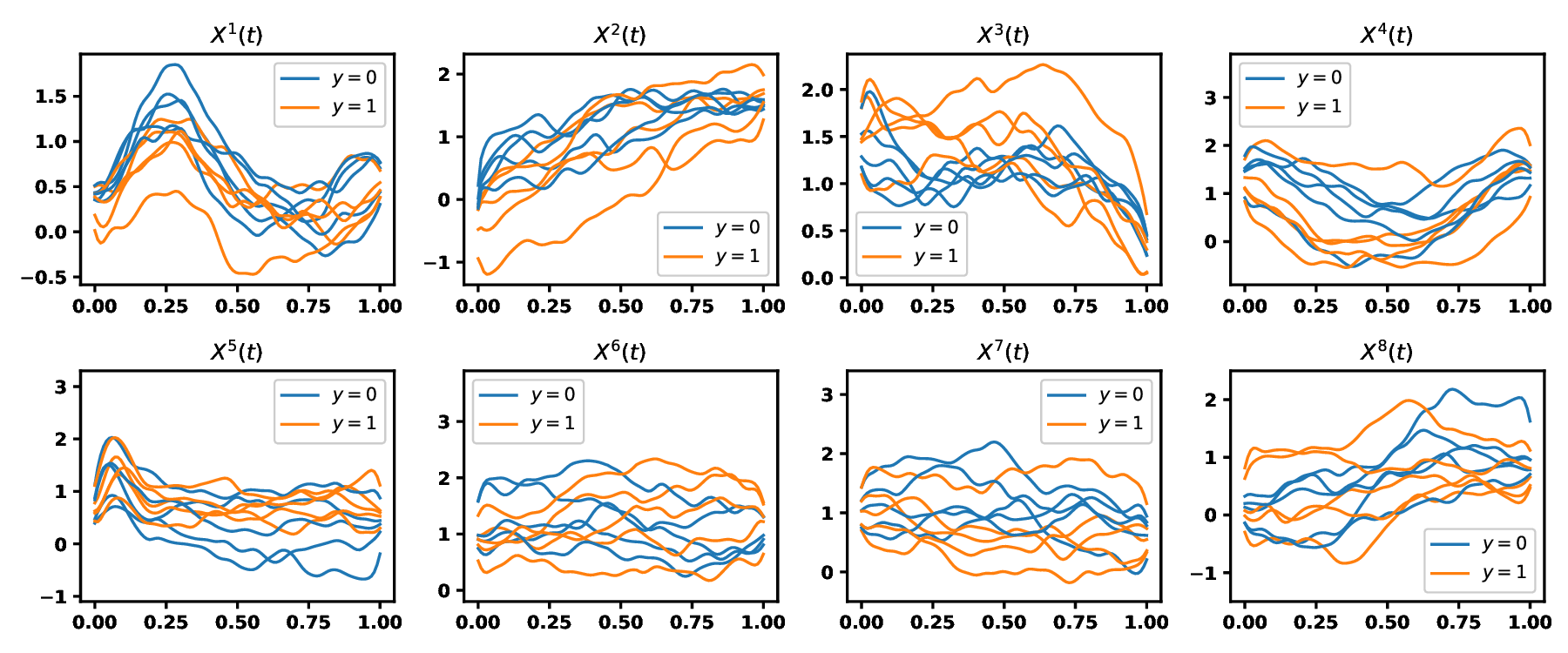}
    \caption{Simulated 8-dimensional functional data: five representative curves per class}
    \label{fig:data_sim}
\end{figure}

\begin{figure}[htpb]
    \centering
    \includegraphics[width=1\linewidth]{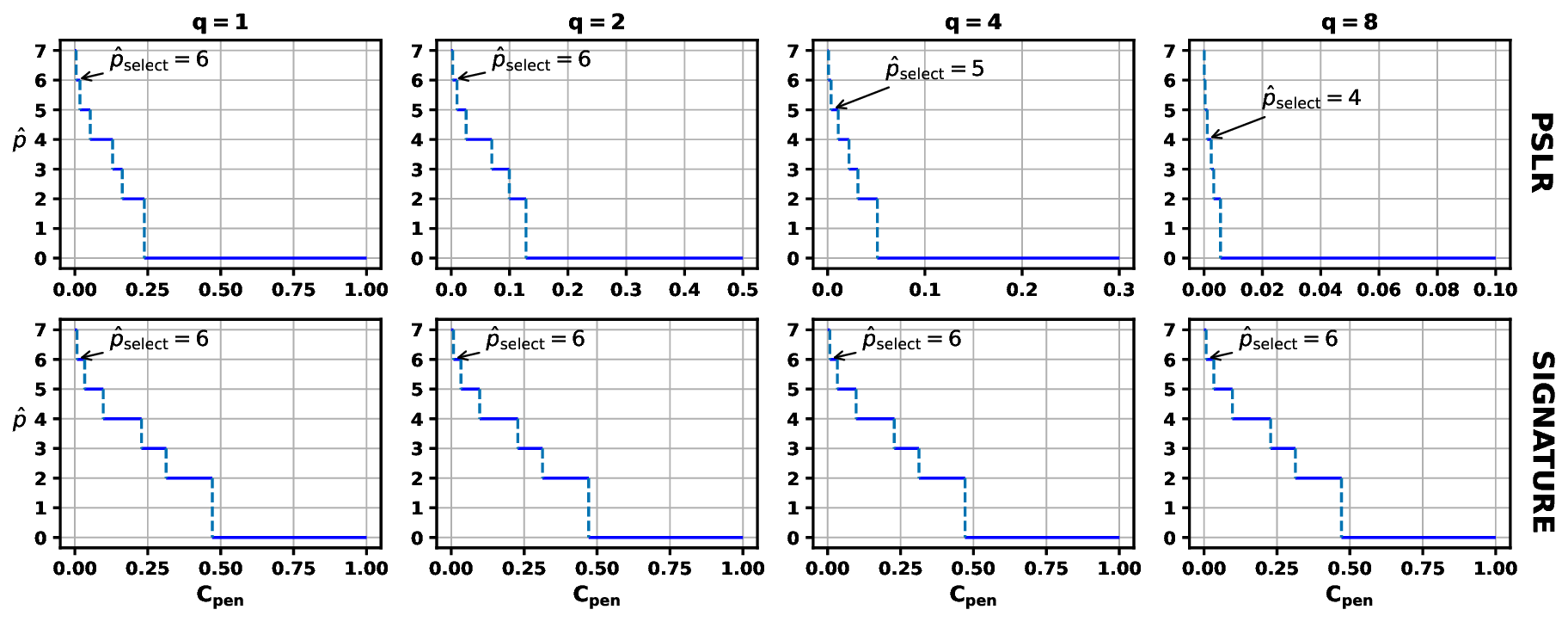}
    \caption{Truncation order selection for the PSLR and Signature methods across numbers of scalar covariates $q \in \{1,2,4,8\}$ with fixed dimension $d = 3$ (Scenario 2). Results are shown for one representative dataset per type (out of 50 instances).}
    \label{fig:order_varq}
\end{figure}

\begin{figure}[htpb]
    \centering
    \includegraphics[width=1\linewidth]{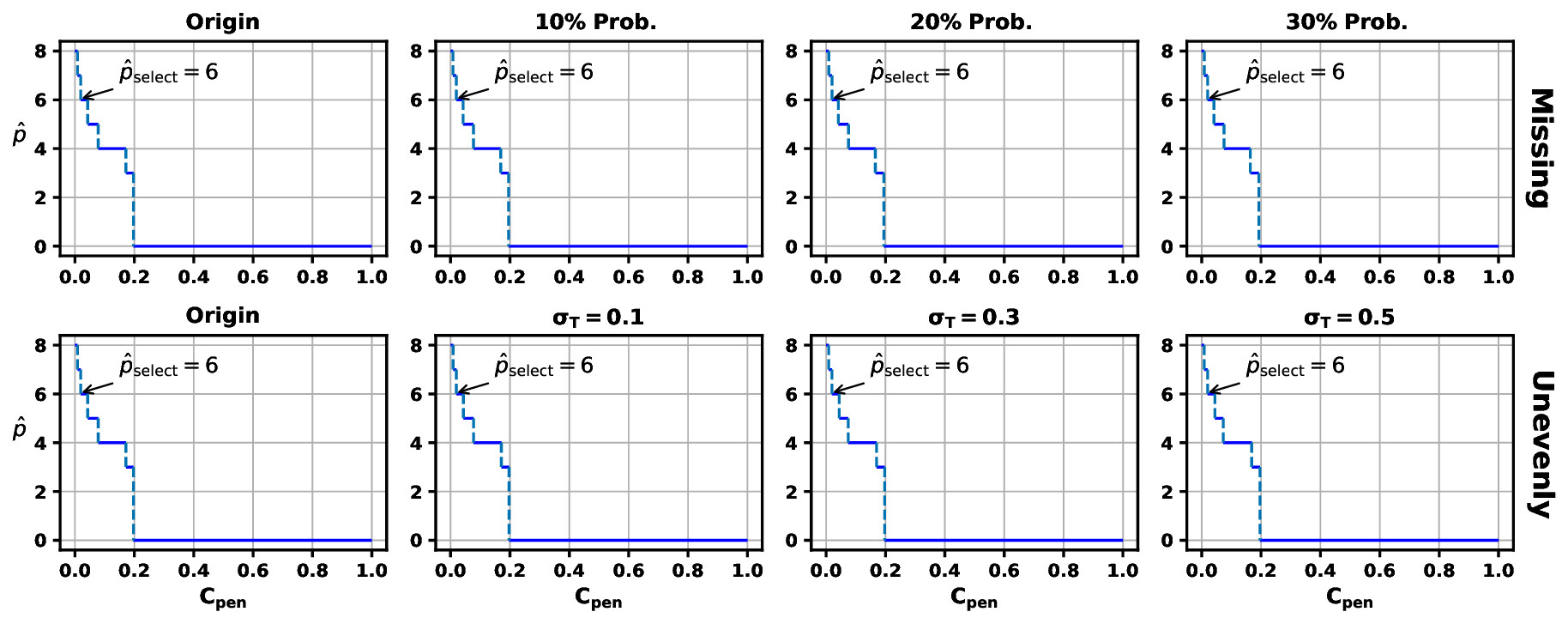}
    \caption{Truncated order selection for the PSLR model across irregularly sampled simulated dataset (Scenario 3).}
    \label{fig:result_timep}
\end{figure}

\begin{figure}[htpb]
    \centering
    \includegraphics[width=1\linewidth]{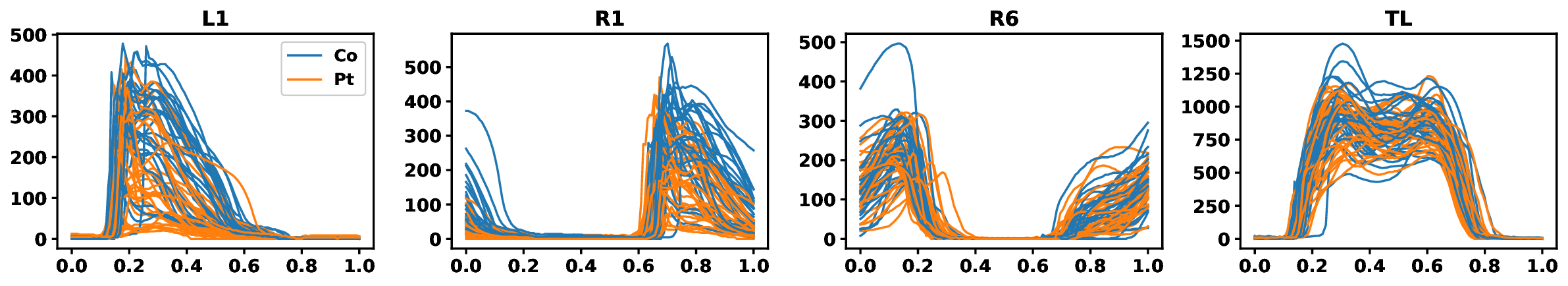}
    \caption{The processed functional observations from Gait in Parkinson's Disease Database across 4 signals (L1, R1, R6 and TL) with 2 classes (Co and Pt).}
    \label{fig:data_park}
\end{figure}

\begin{figure}[htpb]
    \centering
    \includegraphics[width=0.9\linewidth]{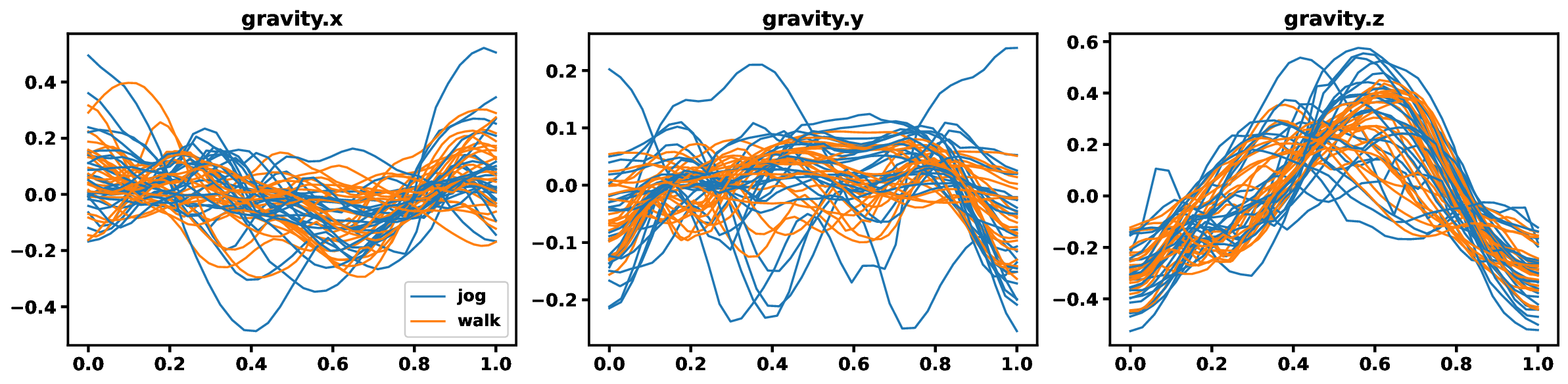}
    \caption{The processed functional observations from Motion Sense Dataset across 3 signals (Gx, Gy and Gz) with 2 classes (walking and jogging). }
    \label{fig:data_sensor}
\end{figure}

\begin{figure}[htpb]
    \centering
    \includegraphics[width=1\linewidth]{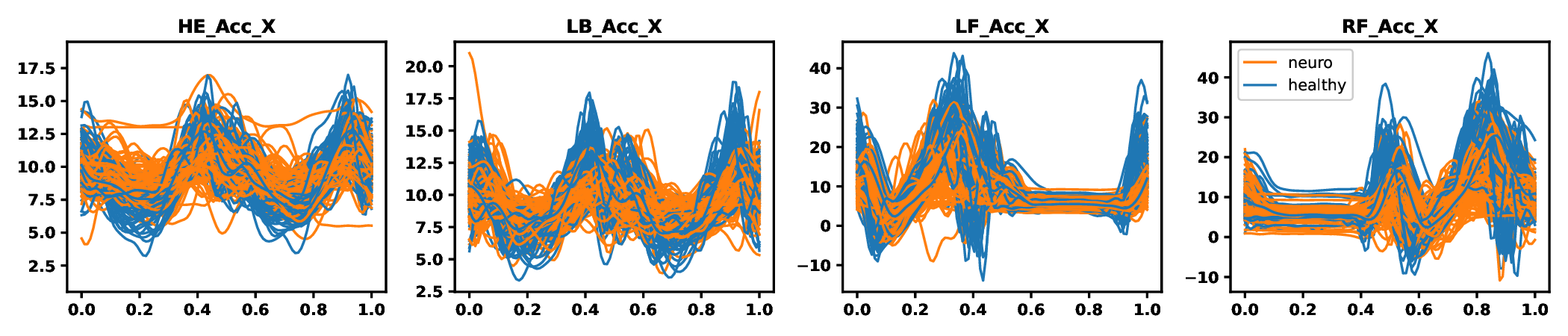}
    \caption{The processed functional observations from Multi-Cohort Clinical Gait Dataset across 4 signals (Acc\_X from HE, LB, LF and RF) with 2 classes (healthy and neurologic).}
    \label{fig:data_gait}
\end{figure}

\begin{figure}[ht]
    \centering
    \begin{subfigure}[b]{0.48\textwidth}
        \centering
        \includegraphics[width=\textwidth]{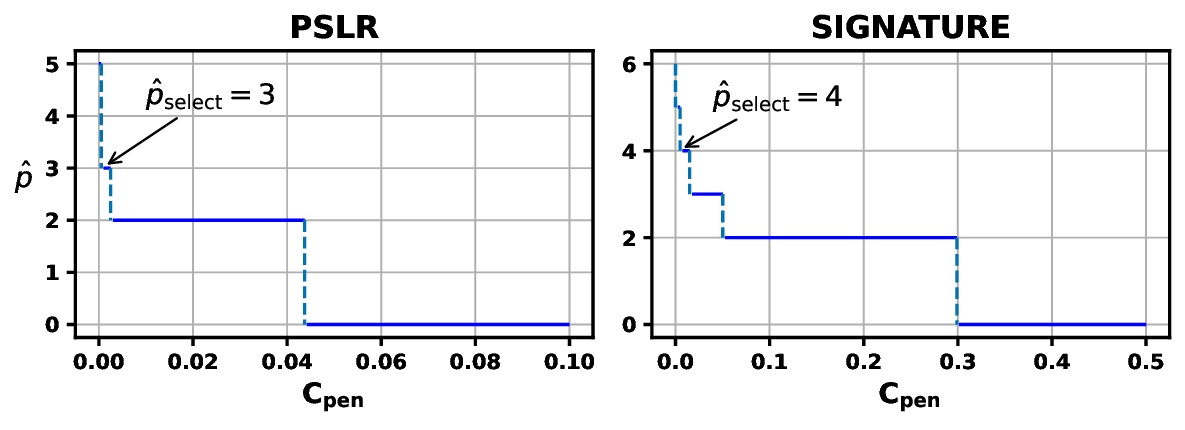}
        \caption{}
    \end{subfigure}
    \hfill
    \begin{subfigure}[b]{0.48\textwidth}
        \centering
        \includegraphics[width=\textwidth]{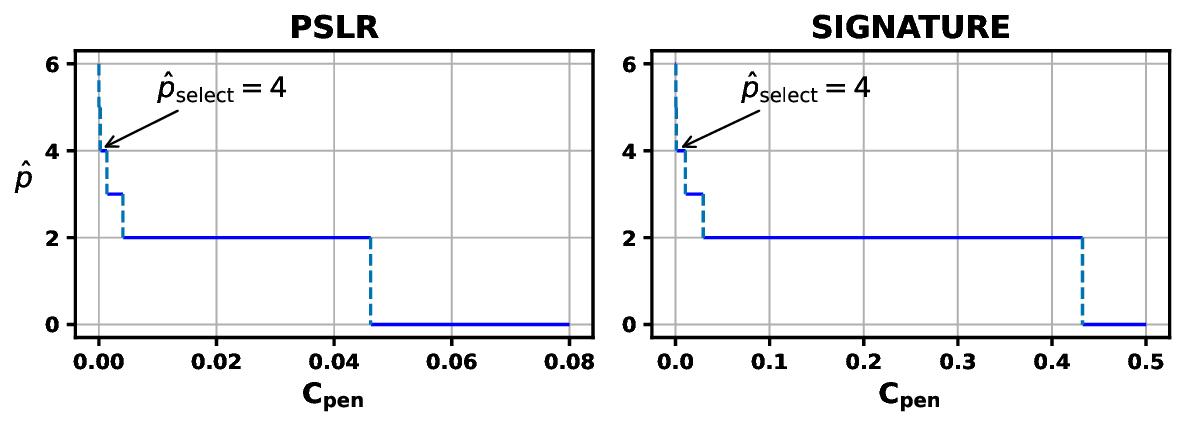}
        \caption{}
    \end{subfigure}
    \caption{Truncated order selection for the PSLR and Signature model on one representative random split dataset from Parkinson's data \textbf{(a)} and Motion Sense data \textbf{(b)}, respectively.}
    \label{fig:order_parkmotion}
\end{figure}

\begin{figure}[htpb]
    \centering
    \includegraphics[width=0.5\linewidth]{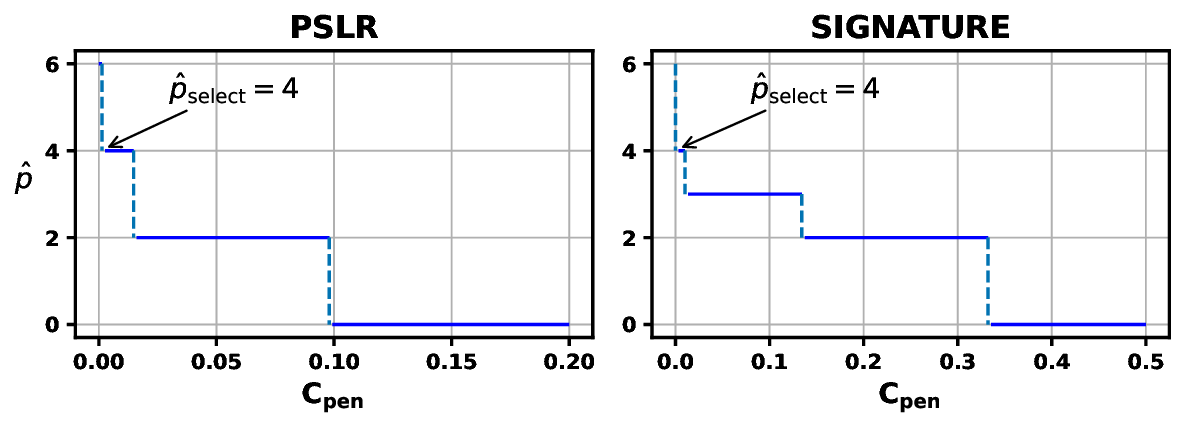}
    \caption{Truncated order selection for the PSLR and Signature model on one representative random split dataset from Multi-Cohort Clinical Gait Dataset}
    \label{fig:order_gait}
\end{figure}

\begin{figure}[htpb]
    \centering
    \includegraphics[width=1\linewidth]{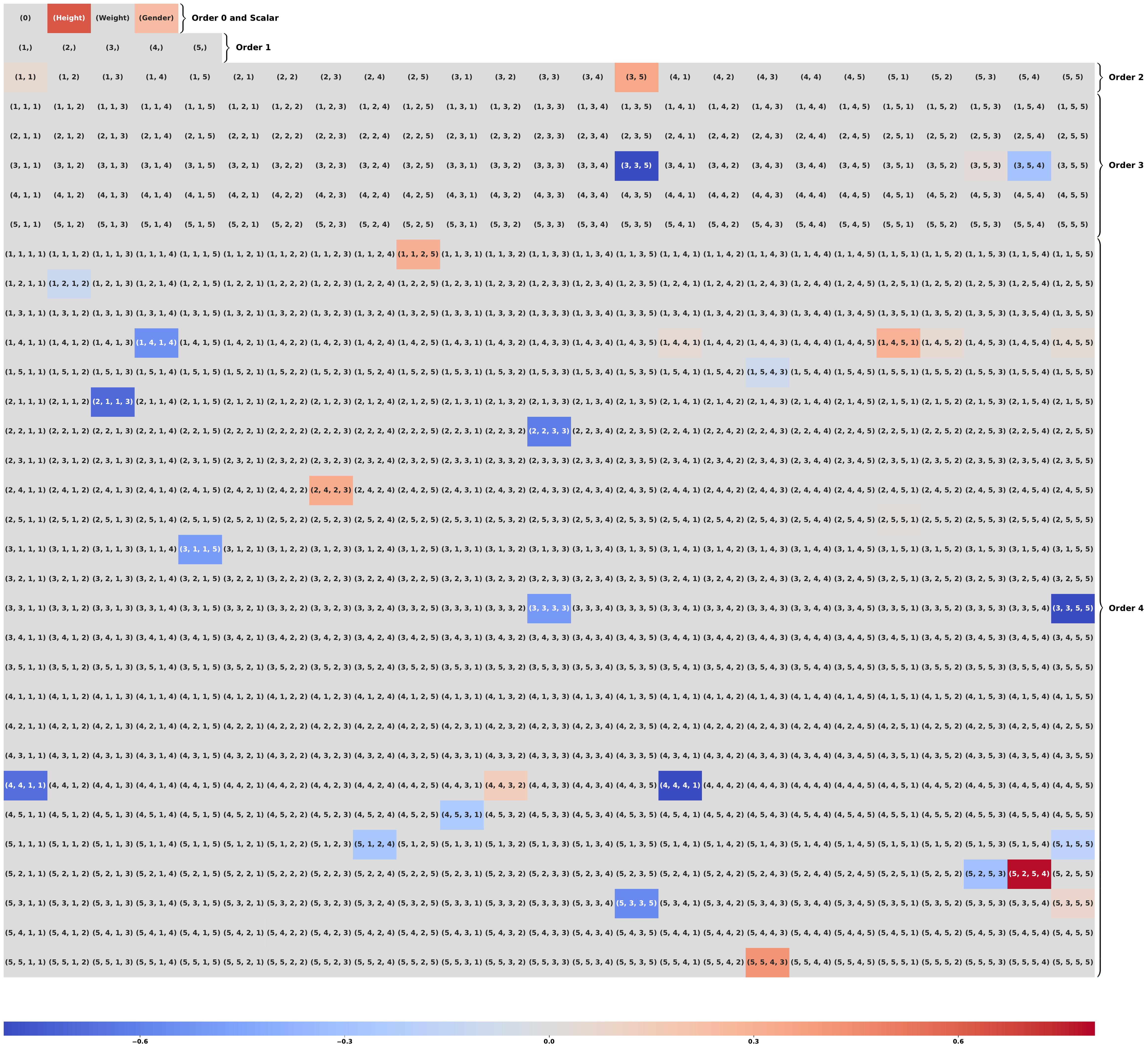}
\caption{Coefficient magnitudes from order-4 PSLR applied to Multi-Cohort Clinical Gait sensor data (HE\_Acc\_X, LB\_Acc\_X, LF\_Acc\_X, RF\_Acc\_X and time [channels 1–5]). Coefficients are organized hierarchically by signature order (vertical axis: Order 0 [intercept] = 1, Orders 1–4 = 5/25/125/625) with 3 scalar covariates aligned top-left.}
    \label{fig:Gait_coef}
\end{figure}

\end{document}